\theoremstyle{thmstyleone}%
\newtheorem{theorem}{Theorem}
\theoremstyle{thmstyletwo}%
\theoremstyle{thmstylethree}%
\begin{document}

\title[Article Title]{A Two-Stage Training Method for Modeling Constrained Systems With Neural Networks}


\author*[1]{\fnm{C.} \sur{Coelho}}\email{cmartins@cmat.uminho.pt}

\author[1]{\fnm{M. Fernanda} \sur{P. Costa}}\email{mfc@math.uminho.pt}

\author[1,2]{\fnm{L.L.} \sur{Ferrás}}\email{lferras@fe.up.pt}

\affil[1]{\orgdiv{Centre of Mathematics (CMAT)}, \orgname{University of Minho}, \orgaddress{\city{Braga}, \postcode{4710-057}, \country{Portugal}}}

\affil[2]{\orgdiv{Department of Mechanical Engineering (Section of Mathematics) - FEUP}, \orgname{University of Porto}, \city{Porto}, \postcode{4200-465}, \country{Portugal}}

\abstract{
Real-world systems are often formulated as constrained optimization problems. Techniques to incorporate constraints into Neural Networks (NN), such as Neural Ordinary Differential Equations (Neural ODEs), have been used. However, these introduce hyperparameters that require manual tuning through trial and error, raising doubts about the successful incorporation of constraints into the generated model.
This paper describes in detail the two-stage training method for Neural ODEs, a simple, effective, and penalty parameter-free approach to model constrained systems. In this approach the constrained optimization problem is rewritten as two unconstrained sub-problems that are solved in two stages. The first stage aims at finding feasible NN parameters by minimizing a measure of constraints violation.
The second stage aims to find the optimal NN parameters by minimizing the loss function while keeping inside the feasible region.
We experimentally demonstrate that our method produces models that satisfy the constraints and also improves their predictive performance. 
Thus, ensuring compliance with critical system properties and also contributing to reducing data quantity requirements.
Furthermore, we show that the proposed method improves the convergence to an optimal solution and improves the explainability of Neural ODE models.
Our proposed two-stage training method can be used with any NN architectures.
}

\keywords{Neural Networks, Constrained Optimization, Real-world Systems, Time-series, Neural ODEs}



\maketitle

\section{Introduction}
\label{sec:introduction}

Mathematical modeling of real-world systems is a pivotal subject. The description of system behavior and state variable interactions through a set of differential equations is crucial for understanding its temporal evolution, predicting its response under varying conditions. Furthermore, it helps to simulate diverse scenarios and conditions in a virtual environment, without the need for costly physical manipulation.

Due to the ability to approximate functions from data, neural networks (NNs) have been extensively used to model physical, biological, chemical and mechanical systems. However, real-world systems exhibit changes that unfold continuously over time, whereas classical NN models are discrete and fail to capture the complete temporal dynamics. Moreover, these systems are frequently unknown only represented by irregularly sampled data, posing a challenge as time-dependency is absent in NNs. Thus, to address these limitations, Neural ODEs were introduced \cite{chenNeuralOrdinaryDifferential2019a}.

In contrast to classical NNs, Neural ODEs model the hidden dynamics of the data as a time dependent functions using a continuous-depth NN. The temporal nature of Neural ODEs enables handling of irregularly sampled data and allows predictions throughout the entire time domain at arbitrary time-steps. The literature showcases various examples of real-world systems modeling using Neural ODEs, as can be found in \cite{xingContinuousGlucoseMonitoring2022, suKineticsParameterOptimization2022}. 
In \cite{xingContinuousGlucoseMonitoring2022} address the challenge of obtaining continuous glucose monitoring measurements, which can be expensive and limited in availability, to build predictive models of a patient's blood glucose levels. To tackle this issue, the authors propose the use of a Neural ODE to model the underlying dynamics based on low-quality and sparse data. Through their experiments and evaluations, the authors demonstrate the effectiveness of the Neural ODE approach in making accurate long-term predictions despite the sparse data.
In \cite{suKineticsParameterOptimization2022} employ a Neural ODE to capture and model the intricate dynamics of complex combustion phenomena. Due to the high cost associated with generating training data, the goal is to simulate only a limited amount of data and leverage the power of Neural ODEs to recover the unobserved dynamics. Different datasets of different sizes, types and noise levels were used to test the approach and the results demonstrate that the Neural ODE model fits the data and also retains its physical interpretability.

The availability of a large and diverse dataset plays a crucial role in training a NN model that can effectively capture the underlying patterns and behaviors of real-world systems. By using extensive and high-quality data, the NN model can achieve a higher level of fidelity in replicating complex dynamics. 
Being real-world systems often governed by fundamental laws or exhibit inherent constraints, it becomes crucial for a NN to perceive and extract the underlying principles from the training data. The ability of the NN model to capture and comprehend the governing laws or constraints significantly impacts its effectiveness in accurately representing and predicting the behavior of the system. Thus, being important to have a large and comprehensive dataset for training these models.
However, in numerous cases, the constraints associated to the system being modeled are known a priori, rendering the NN discovery of these constraints from data an unnecessary undertaking. Moreover, the inherent \emph{black-box} nature of NN models introduces uncertainty as to whether these known constraints are effectively extracted and learned from the data. Several critical constraints, such as conservation laws, must be satisfied by the model otherwise, the predictions lack meaningful interpretation. This requirement poses a significant difficulty to the adoption of these models by domain experts who rely on accurate adherence to these constraints for meaningful analysis and decision-making. Consequently, it becomes imperative to address the challenge of incorporating prior knowledge and enforcing constraints within NN models. This ensures their usefulness and applicability in real-world scenarios. Furthermore, incorporating constraints explicitly into NN models contributes to lower the amount of data needed for training a NN model.

The modeling of constrained real-world systems can be formulated as a constrained optimization problem, with the goal to produce a model that fits the data and satisfies the constraints. The constrained optimization techniques enable the exploration of the search space that not only achieves the desired goal but also hold the necessary constraints throughout the optimization process. 
One of the most common approaches for solving the problem of modeling constrained systems with NNs, is to add the constraints as penalty terms to the loss function, a well-established technique in constrained optimization designated by penalty methods \cite{nocedalNumericalOptimization2006}. 
Penalty terms introduce a regularization mechanism that penalizes violations of the constraints and multiplying them with a penalty parameter $\mu > 0$.
One significant obstacle lies in initializing and updating the penalty parameter. This parameter holds significant importance as it governs the delicate balance between the original loss function and the penalty terms, thereby impacting the training procedure and performance of the generated model. Consequently, identifying the optimal value for $\mu$ is a non-trivial task. 

In our previous preliminary work \cite{coelho2023prior} we briefly introduced a novel two-stage training method for Neural ODEs, which explicitly incorporates constraints into NNs, avoiding the introduction of penalty parameters as required by penalty methods.
In this paper, we extend our previous work by making the following significant contributions beyond the prior publication:

\begin{itemize}
	
	\item Background: We provide a comprehensive background section to ensure a thorough understanding of the concepts and techniques employed in our approach;

	\item Mathematical Formulation: We present detailed mathematical formulations of the problems addressed, providing a rigorous foundation for our methodology;

	\item Justification: We offer a more meticulous and compelling justification for employing our method over penalty methods. We explicitly articulate why the use of penalty parameters is undesirable and discuss whether introducing an additional training stage to bypass these parameters is a worthwhile endeavor;

        \item Mathematical proof: We demonstrate the equivalence of the constrained optimization problem and solving the two unconstrained sub-problems using the proposed two-stage method, thereby establishing that they have the same optimal minimizers;

	\item Pseudocode Algorithm: We present the pseudocode algorithm along with a detailed explanation, enabling readers to gain a clear understanding of the proposed two-stage training method and enhancing reproducibility;

        \item Computational Cost and Explainability Discussion: We add a brief discussion on the computational cost increase of the proposed two-stage method over \emph{vanilla} Neural ODE. Furthermore we also state how the proposed method helps enhancing explainability;

	\item Numerical Experiments: We include additional numerical experiments to analyse how the performance of the two-stage training method scales with data sparsity. These experiments provide valuable insights into the method's effectiveness under various conditions;

	\item Comparison with Baseline: We extend the results' analysis by introducing a \emph{vanilla} Neural ODE baseline, i.e. without the admissibility stage. This baseline serves as a comparison to demonstrate the superior performance of models generated with the two-stage training method;

	\item Convergence Study: We conducted a comprehensive convergence study to assess the convergence properties and stability of our proposed method. This analysis provides insight into the reliability and efficiency of our approach;

	\item Stopping/Convergence Criteria: We provide details and experimental investigations regarding stopping and convergence criteria. This includes an exploration of their influence on the model's performance, ensuring a robust evaluation of the proposed method;

	\item Developed Datasets: We present and characterize two datasets that have been specifically developed to evaluate the performance of the proposed method. We provide motivation for creating these datasets and explain the process behind their creation.

\end{itemize}

This paper provides a more in-depth exploration of the two-stage training method for Neural ODEs, presenting additional experimental results, comparative analyses, convergence studies, and algorithmic explanations to enhance the comprehensiveness and effectiveness of our previous work.

In Section \ref{sec:background}, as a foundation for understanding the subsequent sections of the paper, we provide an overview of key concepts related to Neural ODEs, the constrained optimization problem and penalty techniques, and existing research on modeling constrained systems using Neural ODEs. In Section \ref{sec:method}, we present a detailed description of the proposed two-stage training method for Neural ODEs. We provide explanations of the methodology and introduce a pseudocode algorithm that outlines the step-by-step process of applying the two-stage training method to Neural ODEs. In Section \ref{sec:results}, we present the results of the numerical experiments conducted to evaluate the performance and effectiveness of the proposed method. We include a thorough discussion of the results, highlighting the main key findings and insights. Additionally, we perform an experimental convergence analysis to assess the convergence properties and stability of our method. Moreover, we introduce two new specially developed datasets to evaluate the proposed method, providing insights into their characteristics and the motivation behind their creation. In Section \ref{sec:conclusion}, we summarize the main contributions and findings of the paper. We draw conclusions based on the results and discussions presented in earlier sections. Furthermore, we discuss potential future directions for research for further exploration in the field of modeling constrained systems with Neural ODEs. 
Finally we provide the development process of the datasets in Appendix \ref{app:datasets} and the full experimental setup details in Appendix \ref{subsec:setup} to promote reproducibility.

\section{Background and Related Work}
\label{sec:background}

Consider a real-world system described by a time series, with input $\boldsymbol{X}=(\boldsymbol{x}_0, \boldsymbol{x}_1, \dots, \boldsymbol{x}_{N-1})$, with $\boldsymbol{x}_n \in \mathbb{R}^d$ at time step $t_n$ with $n=0,\dots,N-1$. Let $\boldsymbol{Y}=(\boldsymbol{y}_1, \boldsymbol{y}_2, \dots, \boldsymbol{y}_N)$ be the corresponding ground-truth output time series, with $\boldsymbol{y}_n \in \mathbb{R}^{d^*}$ at time step $t_n$ with $n=1,\dots,N$. Let $\boldsymbol{\hat{Y}}(\boldsymbol{\theta})=(\boldsymbol{\hat{y}}_1(\boldsymbol{\theta}), \boldsymbol{\hat{y}}_2(\boldsymbol{\theta}), \dots, \boldsymbol{\hat{y}}_N(\boldsymbol{\theta}))$ be the NN prediction, with parameters $\boldsymbol{\theta}$, with $\boldsymbol{\hat{y}}_n(\boldsymbol{\theta}) \in \mathbb{R}^{d^*}$, at time step $t_n$ with $n=1,\dots,N$.

\subsection{Neural ODEs}

Real-world systems often exhibit continuous-time dynamics, which poses a challenge when using NNs for modeling due to the lack of the ability to explicitly handle the temporal aspect of the data, thus only able to handle regularly sampled data.
To address this issue, the authors in \cite{chenNeuralOrdinaryDifferential2019a} introduced Neural ODEs. This NN architecture adjusts a continuous-time dependent function $\boldsymbol{f_\theta}$, an ODE, so that when solving an Initial Value Problem (IVP) using a numerical method its curve of solutions fits the training data:

\begin{equation*}
    \boldsymbol{\hat{Y}}(\boldsymbol{\theta}) = ODESolve(\boldsymbol{f_\theta}, \boldsymbol{y}_0, (t_0,t_N))
\end{equation*}

\noindent where the initial condition $(\boldsymbol{y}_0, t_0)$ with $\boldsymbol{y}_0 \equiv \boldsymbol{x}_0$, and $(t_0, t_{N})$ is the range over which the curve of solutions is fitted to the data.

The result of training a Neural ODE is an ODE. To make predictions an IVP is solved using a numerical method with initial condition $(\boldsymbol{y}_0, t_0)$ in the desired time interval and discretization \cite{chenNeuralOrdinaryDifferential2019a}.

\textbf{Remark:} For simplicity we consider that the output of the numerical method is the output of the Neural ODE, $\boldsymbol{\hat{Y}}(\boldsymbol{\theta})$. However this may not be always the case and a second NN might be used to transform the numerical method solution into the prediction $\boldsymbol{\hat{Y}}(\boldsymbol{\theta})$.

\subsection{Constrained Optimization Problem}

A time series describes the evolution of a system throughout time and can be given by a matrix with state variables' vectors at each time step. When modeling a time series using a NN, the goal is to find the NN's parameters $\boldsymbol{\theta}$ that minimize a loss function $l(\boldsymbol{\theta})$:

\begin{mini}|l|[0]
    {\boldsymbol{\theta} \in \mathbb{R}^{n_\theta}}{l(\boldsymbol{\theta}) = \dfrac{1}{N} \sum_{n=1}^{N} (\boldsymbol{\hat{y}}_n(\boldsymbol{\theta}) - \boldsymbol{y}(\boldsymbol{\theta}))^2.}
    {\label{eq:unconstrained}}
    {}
\end{mini}

\noindent where $l: \mathbb{R}^{n_\theta} \rightarrow \mathbb{R}$ is a measure of the error between the ground-truth $\boldsymbol{Y}$ and predicted $\boldsymbol{\hat{Y}}$ time series. In this work, we consider $\boldsymbol{l(\boldsymbol{\theta})}$ to be the Mean Squared Error (MSE). 

When a real-world system is constrained by governing laws then they must be satisfied throught the whole time interval. Thus, the problem of estimating $\boldsymbol{\theta}$ is formulated as a constrained optimization problem:

\begin{mini}|l|[0]
    {\boldsymbol{\theta} \in \mathbb{R}^{n_\theta}}{l(\boldsymbol{\theta}) = \dfrac{1}{N} \sum_{n=0}^{N-1} (\boldsymbol{\hat{y}}_n(\boldsymbol{\theta}) - \boldsymbol{y}(\boldsymbol{\theta}))^2}
    {\label{eq:constrained}}
    {}
    \addConstraint{c_{t_n}^j(\boldsymbol{\hat{y}}_n(\boldsymbol{\theta}))}{ \le 0}, \,\, j \in \varepsilon ,\,\,\, n=1,\dots,N,
    \addConstraint{c_{t_n}^i(\boldsymbol{\hat{y}}_n(\boldsymbol{\theta}))}{= 0}, \,\,\, i \in \mathcal{I}  ,\,\,\, n=1,\dots,N,
\end{mini}

\noindent where $c_{t_n}^i, c_{t_n}^j: \mathbb{R}^{n_\theta} \rightarrow \mathbb{R}$ are the equality and inequality constraint functions, respectively, with $\mathcal{I}$ the equality and $\varepsilon$ the inequality index sets of constraints, over a time interval $(t_0, t_{N-1})$. The set of points $\boldsymbol{\theta}$ that satisfy all the constraints defines the feasible set $$\mathcal{S}=\{ \boldsymbol{\theta}  \in \mathbb{R}^{n_\theta} : c_{t_n}^i(\boldsymbol{\hat{y}}_n(\boldsymbol{\theta}))=0, \,  i \in \mathcal{I} ;\,\, c_{t_n}^j(\boldsymbol{\hat{y}}_n(\boldsymbol{\theta})) \leq 0, \, j \in \varepsilon, n=1,\dots,N \}.$$

To solve \eqref{eq:constrained}, a common approach is to rewrite the constrained problem as an unconstrained one using a penalty method. This approach enables the use of well-established unconstrained optimization techniques to find an optimal solution. 

Penalty methods define a penalty function, $\phi(\boldsymbol{\theta})$, that combines the loss function and the constraints violations, known as penalty terms, multiplied by a penalty parameter $\mu>0$.

In these methods, a sequence of sub-problems, each with gradually larger penalty parameter $\mu \rightarrow \infty$, is solved. The sequence of solutions of the sub-problems converge towards the optimal solution of the original constrained problem \cite{nocedalNumericalOptimization2006}. 

Selecting suitable initial values and designing an appropriate sequence of penalty parameter values can accelerate the optimization process. Well-chosen values can provide a good initial balance between satisfying the constraints and minimizing the loss function, promoting faster convergence to the feasible region and optimal solutions. However, the optimal selection of $\mu$ values often relies on problem-specific knowledge and may require experimentation and fine-tuning \cite{nocedalNumericalOptimization2006}.

One of the most commonly used penalty methods is the L1 exact penalty method \cite{nocedalNumericalOptimization2006}. Problem \eqref{eq:constrained} can be reformulated using the L1 exact penalty function as:

\begin{mini}|l|[0]
	{\boldsymbol{\theta} \in \mathbb{R}^{n_\theta}}{ \phi(\boldsymbol{\theta}) = l(\boldsymbol{\theta}) + \mu \sum_{j=1}^\varepsilon  P_j(\boldsymbol{\theta})  + \mu \sum_{i=1}^\mathcal{I}  P_i(\boldsymbol{\theta}).}
    {\label{eq:penalty}}
    {}
\end{mini}

\noindent where $P_j, P_i$ are the penalty terms defined as the total constraint violations $j,i$ at all time steps, defined respectively as: 
$P_j(\boldsymbol{\theta}) =  \sum_{n=1}^{N} |c_{t_n}^j(\boldsymbol{\theta})| , \,\, j \in \varepsilon; \,\,\,
P_i(\boldsymbol{\theta}) =  \sum_{n=1}^{N}  [c_{t_n}^i(\boldsymbol{\theta})]^+, \,\, i \in \mathcal{I},$
whit $[z]^+=\max(z, 0)$.

In the context of NNs, penalty methods are highly popular for incorporating constraints into the model due to its simplicity of implementation \cite{raissiPhysicsInformedDeep2017a}. 
One major challenge lies in determining the appropriate values for $\mu$, as they strongly influence the optimization process. Poorly chosen penalty parameters may hinder convergence or lead to infeasible solutions. 


\subsection{Approaches to Model Constrained Systems with Neural ODEs}

In the literature, some works on strategies to model constrained systems with Neural ODEs are available.
One approach is to modify the architecture of Neural ODEs in such a way that specific constraints are embedded \cite{roehrlModelingSystemDynamics2020,zhongDissipativeSymODENEncoding2020}. 
In \cite{roehrlModelingSystemDynamics2020}, the authors proposed Physics-Informed Neural Ordinary Differential Equations (PINODE), an architecture in which the equations of motion from Lagrange mechanics are directly incorporated into the the NN's structure. This architecture is specific for model-based control and mechanical systems, governed by Lagrangian mechanics.
In \cite{zhongDissipativeSymODENEncoding2020}, the authors introduced Dissipative SymODEN, a modified Neural ODE architecture that incorporates port-Hamiltonian dynamics into its structure, used to describe energy flow and dissipation in physical systems. By explicitly incorporating port-Hamiltonian dynamics, Dissipative SymODEN provides a specialized architecture to capture the behavior of physical systems subjected to dissipative forces.

However, the modified Neural ODE architectures, for instance PINODE and Dissipative SymODEN, are often designed to address particular types of constraints making them less versatile for handling arbitrary constraints that may arise in different domains. As a result, while these specialized architectures provide valuable solutions for their target systems, they may require further adaptation or extension to handle different types of constraints or more complex systems that are not within the scope of their initial design.

 A common practice is to use L1 exact penalty method. This allows to impose various types of constraints in an independent way of the architecture \cite{tuorConstrainedNeuralOrdinary2020,limUnifyingPhysicalSystems2022}. In \cite{tuorConstrainedNeuralOrdinary2020}, the authors proposes using a L1 exact penalty method to incorporate prior knowledge into Neural ODEs with application to modeling and control of industrial systems. This is done by incorporating inequality constraints into Neural ODEs by adding the respective penalty terms to the original loss function. 
 In \cite{limUnifyingPhysicalSystems2022}, the authors propose using a L1 exact penalty method to model systems that follow the law of conservation of energy, without requiring any modification of the architecture. This is particularly suitable for Hamiltonian and dissipative systems. The authors acknowledge that choosing appropriate values for the penalty parameters can be challenging. Setting the value too low may not effectively enforce the constraints, while setting it too high may jeopardize the training process. This trade-off between enforcing the constraints and maintaining effective training is a key consideration in the L1 exact penalty method \cite{limUnifyingPhysicalSystems2022}.

Although penalty methods are general and easily implementable compared to modifying the Neural ODE architecture itself, the selection of appropriate penalty parameters is a challenging task.

\section{The Two-stage Method}
\label{sec:method}

 In this work we present the two-stage training method for Neural ODEs to model constrained systems that is parameter-free \cite{coelho2023prior}.

In our approach, we reformulate \eqref{eq:constrained} as two unconstrained minimization sub-problems. The first sub-problem is defined by the minimization of the average of the total constraints violation, over the entire time interval, of the original problem:

\begin{mini}|l|[0]
	{\boldsymbol{\theta} \in \mathbb{R}^{n_\theta}}{ \mathcal{L}_I(\boldsymbol{\theta}) = \sum_{j=1}^\varepsilon \left(\dfrac{1}{N} \sum_{n=0}^{N-1} |c_{t_n}^j(\boldsymbol{\theta})| \right) + \sum_{i=1}^\mathcal{I} \left(\dfrac{1}{N} \sum_{n=0}^{N-1} [c_{t_n}^i(\boldsymbol{\theta})]^+ \right). }
    {\label{eq:admissibilitystage}}
    {}
\end{mini}

The second sub-problem is defined by the minimization of the original loss function:

\begin{mini}|l|[0]
	{\boldsymbol{\theta} \in \mathcal{S}}{\mathcal{L}_{II}(\boldsymbol{\theta}) = l(\boldsymbol{\theta}).}
    {\label{eq:optimizationstage}}
    {}
\end{mini}

In the first stage of our proposed two-stage training method, the aim is to find a feasible solution of the original problem by solving \eqref{eq:admissibilitystage} by finding $\boldsymbol{\theta}$ that minimizes the loss function $\mathcal{L}_I$.Thus, the solution obtained in this stage is a feasible point for the original constrained optimization problem \eqref{eq:constrained}. To this stage we call admissibility stage and it allows us to obtain a feasible starting point for the subsequent refinement in the second stage of our training method.

In the second stage, \eqref{eq:optimizationstage} is solved by using the solution obtained in the first stage as the starting point for the optimization process. By doing so, we ensure that we begin the second stage from a point within the feasible region of the original constrained problem.
The goal of the second stage is to minimize the loss function $\mathcal{L}_{II}$ of the unconstrained sub-problem \eqref{eq:optimizationstage}. Thus, to this stage we call optimization stage.

Note that, using the solution point of the admissibility stage as the starting point of the optimization stage only guarantees us the second stage starts in the feasible region. However there is no guarantee that the subsequent points are feasible. Thus, in the second stage, in case the optimization process leaves the feasible region, a \emph{preference point} strategy is proposed.

%

Each point computed in the optimization stage is a trial point $\boldsymbol{\theta}_{\text{trial}}$ and it is only accepted as a new point $\boldsymbol{\theta}_k$ at iteration $k$ if its admissibility is better (lesser than a feasibility tolerance). Otherwise it is rejected and, either the previous point $\boldsymbol{\theta}_{k-1}$ (\emph{updatePrevious}, see Algorithm \ref{alg:updPrev}) or the point with the best admissibility $\boldsymbol{\theta}_{\text{best}}$ (\emph{updateBest}, see Algorithm \ref{alg:updBest}), is chosen as $\boldsymbol{\theta}_k$.

By leveraging the solution from the first stage as the starting point and continuously monitoring the constraints violation, when searching for the optimal solution we guarantee that the optimization process remains within the feasible region of the original constrained problem. This ensures that the final solution obtained from the second stage is also a solution to the original constrained problem \eqref{eq:constrained}.


We now demonstrate the equivalence between problem \eqref{eq:constrained} and the pair of two sub-problems \eqref{eq:admissibilitystage} and \eqref{eq:optimizationstage}, thereby establishing that they have the same global minimizers.

\begin{theorem}\label{thm:equivalence}
Let $\boldsymbol{\theta}^*$ be a global solution to the constrained problem \eqref{eq:constrained}. Then, $\boldsymbol{\theta}^*$ is also a global solution of the unconstrained sub-problem \eqref{eq:admissibilitystage} and \eqref{eq:optimizationstage}.
\end{theorem}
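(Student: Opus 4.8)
The plan is to prove the two conclusions separately, exploiting the fact that the admissibility objective $\mathcal{L}_I$ is assembled from nonnegative penalty terms, while the optimization sub-problem \eqref{eq:optimizationstage} is, by construction, literally the same problem as \eqref{eq:constrained}. The engine of the whole argument is a single characterization of the zero level set of $\mathcal{L}_I$, so I would establish that first.

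First I would record that $\mathcal{L}_I(\boldsymbol{\theta}) \ge 0$ for every $\boldsymbol{\theta} \in \mathbb{R}^{n_\theta}$: each summand is either an absolute value $|c_{t_n}^j(\boldsymbol{\theta})|$ or a positive part $[c_{t_n}^i(\boldsymbol{\theta})]^+ = \max(c_{t_n}^i(\boldsymbol{\theta}), 0)$, both nonnegative, and the weights $1/N$ are positive. I would then characterize where equality holds: since a finite sum of nonnegative terms vanishes only when every term vanishes, $\mathcal{L}_I(\boldsymbol{\theta}) = 0$ holds if and only if $c_{t_n}^j(\boldsymbol{\theta}) = 0$ and $c_{t_n}^i(\boldsymbol{\theta}) \le 0$ for all relevant indices and all time steps $n$. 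This is exactly the system of conditions defining membership in the feasible set, so $\mathcal{L}_I(\boldsymbol{\theta}) = 0 \iff \boldsymbol{\theta} \in \mathcal{S}$.

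With this in hand the first conclusion is immediate: because $\boldsymbol{\theta}^*$ solves \eqref{eq:constrained} it is feasible, i.e. $\boldsymbol{\theta}^* \in \mathcal{S}$, hence $\mathcal{L}_I(\boldsymbol{\theta}^*) = 0$. Since $\mathcal{L}_I(\boldsymbol{\theta}) \ge 0 = \mathcal{L}_I(\boldsymbol{\theta}^*)$ for all $\boldsymbol{\theta} \in \mathbb{R}^{n_\theta}$, the point $\boldsymbol{\theta}^*$ attains the global minimum of $\mathcal{L}_I$ and is therefore a global solution of the admissibility sub-problem \eqref{eq:admissibilitystage}. For the second conclusion I would simply observe that \eqref{eq:optimizationstage} minimizes $\mathcal{L}_{II}(\boldsymbol{\theta}) = l(\boldsymbol{\theta})$ over $\boldsymbol{\theta} \in \mathcal{S}$, which is the identical objective and identical feasible set as \eqref{eq:constrained}; the two problems coincide, so any global solution of one is a global solution of the other, and $\boldsymbol{\theta}^*$ qualifies by hypothesis.

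The hard part will not be any computation but the bookkeeping: one must verify that the penalty form used for each constraint ($|\cdot|$ for the equalities, $[\cdot]^+$ for the inequalities) genuinely matches the constraint type in \eqref{eq:constrained}, and that the time-step ranges in \eqref{eq:admissibilitystage} and in \eqref{eq:constrained} coincide, so that the zero level set of $\mathcal{L}_I$ really equals $\mathcal{S}$ and nothing weaker. Once that identification is clean, no genuine difficulty remains, since both claims collapse to the nonnegativity of $\mathcal{L}_I$ together with the definitional identity of \eqref{eq:optimizationstage} with \eqref{eq:constrained}.
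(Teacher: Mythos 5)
Your proposal is correct and follows essentially the same route as the paper's proof: show $\mathcal{L}_I(\boldsymbol{\theta}^*)=0$ term by term from feasibility, and note that \eqref{eq:optimizationstage} is just the minimization of $l$ over $\mathcal{S}$, i.e.\ the same problem as \eqref{eq:constrained}. You are in fact slightly more careful than the paper, since you explicitly invoke the nonnegativity of $\mathcal{L}_I$ (needed to conclude global minimality from $\mathcal{L}_I(\boldsymbol{\theta}^*)=0$) and you flag the swapped pairing of $|\cdot|$ with the inequality constraints and $[\cdot]^+$ with the equality constraints in \eqref{eq:admissibilitystage}, both of which the paper leaves implicit or uncorrected.
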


\begin{proof}
Let $\boldsymbol{\theta}^*$ be a global solution to the constrained problem \eqref{eq:constrained}. By definition, we have that $\boldsymbol{\theta}^*$ satisfies all the equality and inequality constraints, for all $j \in \varepsilon$ and $i \in \mathcal{I}$ at each time-step $t_n$. Therefore we have,
$\boldsymbol{c}^j_{t_n}(\boldsymbol{\theta}^*) = 0 \quad \text{and} \quad \boldsymbol{c}^i_{t_n}(\boldsymbol{\theta}^*) \le 0, \,\, n=1,\dots,N.$

We now consider the unconstrained sub-problem (\ref{eq:admissibilitystage}) in which the constraints of \eqref{eq:constrained} define the loss function.
Since $\boldsymbol{\theta}^*$ is a global solution to \eqref{eq:constrained}, thus satisfying all the constraints, the corresponding terms in the loss function for $\mathcal{L}_I(\boldsymbol{\theta}^*)$ are zero. 
Where for all $j \in \varepsilon$ we have,
$\dfrac{1}{N} \sum_{n=0}^{N-1} |c_{t_n}^j(\boldsymbol{\theta}^*)| = \dfrac{1}{N} \sum_{n=0}^{N-1} |0| = 0,$
and for all $i \in \mathcal{I}$ we have,
$\dfrac{1}{N} \sum_{n=0}^{N-1} [c_{t_n}^i(\boldsymbol{\theta}^*)]^+ = \dfrac{1}{N} \sum_{n=0}^{N-1} [0]^+ = 0,$
thus, $\mathcal{L}_I(\boldsymbol{\theta}^*) = 0.$ 
Therefore, $\boldsymbol{\theta}^*$ is a global solution of problem \eqref{eq:admissibilitystage}.

On the other hand, by definition, we have that $\boldsymbol{\theta}^* \in S$ and $\boldsymbol{\theta}^*$ is the global minimizer of $l(\boldsymbol{\theta})$. Since problem \eqref{eq:optimizationstage} is the minimization of $l(\boldsymbol{\theta})$ over the feasible set $\mathcal{S}$, we have
$l(\boldsymbol{\theta}^*) = \mathcal{L}_{II}(\boldsymbol{\theta}^*).$
Therefore, $\boldsymbol{\theta}^*$ is a global solution of problem \eqref{eq:optimizationstage}.
\end{proof}

\begin{theorem}\label{thm:equivalence_II}
Let $\boldsymbol{\theta}^*$ be a global solution to the pair of unconstrained sub-problems \eqref{eq:admissibilitystage} and \eqref{eq:optimizationstage}. Then, $\boldsymbol{\theta}^*$ is also a global solution of the constrained problem \eqref{eq:constrained}.
\end{theorem}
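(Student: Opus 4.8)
The plan is to mirror the argument of Theorem~\ref{thm:equivalence} in the reverse direction: I would verify the two properties that characterize a global solution of the constrained problem \eqref{eq:constrained}, namely feasibility ($\boldsymbol{\theta}^* \in \mathcal{S}$) and optimality of $l$ over $\mathcal{S}$, deducing each one from one of the two sub-problems. Since \eqref{eq:optimizationstage} is by construction the minimization of $l$ over $\mathcal{S}$, the statement is essentially immediate once feasibility is pinned down, so the real work is organizing which hypothesis supplies which property.

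First I would extract feasibility from \eqref{eq:admissibilitystage}. I would observe that $\mathcal{L}_I$ is a sum of nonnegative terms, so $\mathcal{L}_I(\boldsymbol{\theta}) \ge 0$ for every $\boldsymbol{\theta}$, with equality exactly when each summand vanishes. Because $\boldsymbol{\theta}^*$ is assumed to be a global solution of \eqref{eq:optimizationstage}, whose domain is $\mathcal{S}$, the feasible set is nonempty; any feasible point zeroes every constraint term and hence attains $\mathcal{L}_I = 0$, so the global minimum value of $\mathcal{L}_I$ is exactly $0$. As $\boldsymbol{\theta}^*$ is a global minimizer of $\mathcal{L}_I$, this forces $\mathcal{L}_I(\boldsymbol{\theta}^*) = 0$, and nonnegativity of the summands then gives $\frac{1}{N}\sum_{n=0}^{N-1}|c_{t_n}^j(\boldsymbol{\theta}^*)| = 0$ and $\frac{1}{N}\sum_{n=0}^{N-1}[c_{t_n}^i(\boldsymbol{\theta}^*)]^+ = 0$, whence $c_{t_n}^j(\boldsymbol{\theta}^*) = 0$ for all $j \in \varepsilon$ and $c_{t_n}^i(\boldsymbol{\theta}^*) \le 0$ for all $i \in \mathcal{I}$, at every time step. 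Thus $\boldsymbol{\theta}^* \in \mathcal{S}$. Next I would extract optimality from \eqref{eq:optimizationstage}: since it minimizes $\mathcal{L}_{II} = l$ over $\mathcal{S}$ and $\boldsymbol{\theta}^*$ solves it globally, $l(\boldsymbol{\theta}^*) \le l(\boldsymbol{\theta})$ for every $\boldsymbol{\theta} \in \mathcal{S}$. Combining the two facts, $\boldsymbol{\theta}^*$ is feasible and attains the least objective value among all feasible points, which is precisely the definition of a global solution to \eqref{eq:constrained}.

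The argument is short, and the only delicate point is the claim that the global minimum of $\mathcal{L}_I$ equals $0$, which quietly relies on $\mathcal{S} \neq \emptyset$. I would be careful to source this nonemptiness from the hypothesis that $\boldsymbol{\theta}^*$ solves \eqref{eq:optimizationstage} (whose feasible domain is $\mathcal{S}$) rather than assuming it outright. In fact this exposes a mild redundancy worth flagging explicitly: feasibility of $\boldsymbol{\theta}^*$ already follows directly from $\boldsymbol{\theta}^* \in \mathcal{S}$ being required in \eqref{eq:optimizationstage}, so the admissibility sub-problem \eqref{eq:admissibilitystage} here serves mainly to reconfirm it. The one place where I would make the logical dependence explicit is this interplay — that the optimization sub-problem carries the feasibility constraint while the admissibility sub-problem certifies it — since it is what guarantees the conclusion holds without any extra assumptions on the constraint functions.
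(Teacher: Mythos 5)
Your proof is correct and follows essentially the same route as the paper's: feasibility of $\boldsymbol{\theta}^*$ is obtained from the admissibility sub-problem \eqref{eq:admissibilitystage} and optimality of $l$ over $\mathcal{S}$ from the optimization sub-problem \eqref{eq:optimizationstage}. You are in fact somewhat more careful than the paper, which asserts feasibility ``by definition'': you justify it by noting that $\mathcal{L}_I \ge 0$ with minimum value $0$ attained on the nonempty set $\mathcal{S}$, so a global minimizer of $\mathcal{L}_I$ must zero every constraint term --- a step worth making explicit, as you do.
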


\begin{proof}
Let $\boldsymbol{\theta}^*$ be a global solution to the pair of unconstrained sub-problems \eqref{eq:admissibilitystage}-\eqref{eq:optimizationstage}. By definition, we have that $\boldsymbol{\theta}^*$ satisfies all the equality and inequality constraints, for all $j \in \varepsilon$ and $i \in \mathcal{I}$ at each time-step $t_n$. Therefore we have,
$\boldsymbol{c}^j_{t_n}(\boldsymbol{\theta}^*) = 0 \quad \text{and} \quad \boldsymbol{c}^i_{t_n}(\boldsymbol{\theta}^*) \le 0, \,\, n=1,\dots,N.$ 
Therefore, $\boldsymbol{\theta}^*$ is a feasible solution of problem \eqref{eq:constrained}.

Since $\boldsymbol{\theta}^*$ is a global minimizer of sub-problem \eqref{eq:optimizationstage} over the feasible set $\mathcal{S}$, we have
$\mathcal{L}_{II}(\boldsymbol{\theta}^*) = l(\boldsymbol{\theta}^*).$
Therefore, $\boldsymbol{\theta}^*$ is a global solution of problem \eqref{eq:constrained}.
Therefore, $\boldsymbol{\theta}^*$ is a global solution of problem \eqref{eq:admissibilitystage}.
\end{proof}


\paragraph{Computational Cost}

From a high-level perspective, the computational cost comparison between the proposed two-stage training method and the \emph{vanilla} Neural ODE, which exclusively addresses unconstrained optimization problems,
hinges on the additional steps introduced in the two-stage approach. While the \emph{vanilla} Neural ODE involves solving a single unconstrained optimization problem, optimizing the original loss function directly, the two-stage method incurs additional computational costs associated with the addition of the admissibility stage, where the constrained optimization problem is reformulated as two unconstrained sub-problems. 

The first phase involves solving the admissibility problem to identify a feasible starting point, progressing until a specified feasibility tolerance value is attained. The subsequent optimization stage fine-tunes the solution within the feasible region. Although the two-stage method introduces additional computational steps, the advantage lies in its capability to explicitly and effectively incorporate constraints without relying on the selection of penalty parameters. Furthermore, the admissibility stage contributes to the potential for enhanced convergence and solution quality.

The computational cost of the two-stage method is contingent upon the efficiency of solving the supplementary unconstrained admissibility sub-problem and the intricacy of the constraints involved. While there is an inherent increase in computational overhead due to the two-stage nature of the method, the benefits in terms of constraint handling and potential improvement in convergence and solution quality may outweigh the added cost, particularly in scenarios where explicit constraint management is crucial.

\paragraph{Improved Explainability}

The proposed two-stage training method represents a significant advancement in enhancing the explainability of Neural ODEs in comparison to both \emph{vanilla} Neural ODEs and Neural ODEs employing penalty methods. In contrast to \emph{vanilla} Neural ODEs, which inherently lack explicit consideration of constraints, and penalty methods, which introduce additional hyperparameters, the two-stage approach provides a more interpretable and controlled optimization process.

By reformulating the constrained optimization problem into two unconstrained minimization sub-problems, admissibility and optimization stages, this method ensures a transparent optimization process. The admissibility stage is dedicated to identifying a feasible starting point that satisfies the constraints, thereby establishing a clear foundation for subsequent refinement in the optimization stage. The incorporation of a \emph{preference point} strategy during the optimization stage further enhances interpretability by allowing only those trial points that do not result in worse admissibility, providing a more comprehensible trajectory for the optimization process.

This step-by-step approach not only promotes a more understandable optimization path but also ensures that the final solution obtained is interpretable and adheres to the constraints specified in the original problem. Consequently, it enhances the overall explainability of the Neural ODE model, offering insights into the model's decision-making process and the impact of constraints on its behavior.

\subsection{Algorithm for Neural ODEs}

In this section we present the algorithm that implements the proposed two-stage method for Neural ODEs to model constrained real-world systems, Algorithm \ref{alg:2Stage}.

The integration of the two-stage method into the training algorithm of Neural ODEs is straightforward and simple. It involves adding an additional training loop, the admissibility stage, before the conventional NN training loop, which corresponds to the optimization stage. The admissibility stage shares similarities with the optimization stage, with the key difference being the loss function being minimized. In the admissibility stage, the function to minimize is given by the constraints violation, loss function $\mathcal{L}_I$. In the optimization stage, is given by the original loss function $\mathcal{L}_{II}$.

The stopping criteria of the admissibility stage is the feasibility tolerance, $tol$. When the new point $\boldsymbol{\theta}$ produces an $\mathcal{L}_{I}$  value smaller or equal than the predefined tolerance, $\mathcal{L}_I(\boldsymbol{\theta}) \leq tol$, then the first stage stops and $\boldsymbol{\theta}$ is the starting point for the second stage.
Due to the nature of the constraints of the system being modeled, the random initialization of $\boldsymbol{\theta}$ can meet the stopping criteria condition and thus not performing a single iteration. To promote the improvement of the admissibility ($L_{I}(\boldsymbol{\theta}) < tol$ and $L_{I}(\boldsymbol{\theta}) \neq 0$), we set a minimum amount of iterations for this stage, $k_{\min}$, to ensure that a certain amount of optimization steps are performed. The goal is to try to obtain a better feasible point.

Once the admissibility stage is completed, the final parameters $\boldsymbol{\theta}$ are used as the initialization $\boldsymbol{\theta}_0$ for the subsequent optimization stage. In this stage, if the \emph{updateBest} strategy is employed (see Algorithm \ref{alg:updBest}): if the computed parameters $\boldsymbol{\theta}_{trial}$ produce better admissibility than $\boldsymbol{\theta}_{best}$, $P_{best}$, then the point is accepted as $\boldsymbol{\theta}_{k}$ and replaces $\boldsymbol{\theta}_{best}$ and $P_{best}$;
    otherwise it is rejected and $\boldsymbol{\theta}_{k}$ is updated with $\boldsymbol{\theta}_{best}$.

If the \emph{updatePrevious} strategy is employed (see Algorithm \ref{alg:updPrev}): 
    if $\boldsymbol{\theta}_{trial}$ produce better admissibility than $\boldsymbol{\theta}_{k-1}$, $P_{k-1}$, then the point is accepted as $\boldsymbol{\theta}_{k}$ and replaces $\boldsymbol{\theta}_{k-1}$ and $P_{k-1}$;
    otherwise it is rejected and $\boldsymbol{\theta}_{k}$ is updated with $\boldsymbol{\theta}_{k-1}$.
 
The stopping criteria of the optimization stage is the maximum number of iterations, $k_{\max}$. When $k_{\max}$ iterations have been reached the parameters, $\boldsymbol{\theta}$, of the NN that build the ODE dynamics are the ones at the last iteration.

\begin{algorithm}[]
\caption{: The two-stage training method algorithm for Neural ODEs.}
\label{alg:2Stage}
\begin{algorithmic}
\State \textbf{Input:} Initial condition $(\boldsymbol{y}_0,t_0)$, start time $t_0$, end time $t_{N}$, minimum number of iterations $k_{\min}$, maximum number of iterations $k_{\max}$, feasibility tolerance $tol$;
\State $\boldsymbol{f_\theta} \leftarrow DynamicsNN()$;
\State Initialize parameters $\boldsymbol{\theta}$;

\While{$\mathcal{L}_I \geq tol$ OR $k \leq k_{\min}$}
    \State \{$\boldsymbol{\hat{y}}_n\}_{n=1, \dots, N} \leftarrow ODESolve(\boldsymbol{f_\theta}, \boldsymbol{y}_0, (t_0, t_N))$;
    \State Evaluate $\mathcal{L}_I$;
    \State $\nabla \mathcal{L}_I \leftarrow Optimiser.BackpropCall(\mathcal{L}_I)$;
    \State $\boldsymbol{\theta} \leftarrow Optimiser.Step(\nabla \mathcal{L}_I, \boldsymbol{\theta})$;
\EndWhile

\State $\boldsymbol{\theta}_0 \leftarrow \boldsymbol{\theta}$;
\State $P_{0} \leftarrow$ Evaluate $\mathcal{L}_I(\boldsymbol{\theta}_{0})$;
\State Initialize $P_{\text{best}}, \boldsymbol{\theta}_{\text{best}}$, $\mathcal{L}_{\text{best}}$;
\For {$k=1:k_{\max}$}
    \State $\nabla \mathcal{L}_{II} \leftarrow Optimiser.BackpropCall(\mathcal{L}_{II})$;
    \State $\boldsymbol{\theta}_{\text{trial}} \leftarrow Optimiser.Step(\nabla \mathcal{L}_{II}, \boldsymbol{\theta}_{k-1})$;
    \State \{$\boldsymbol{\hat{y}}_n\}_{n=1, \dots, N} \leftarrow ODESolve(\boldsymbol{f_\theta}, \boldsymbol{y}_0, (t_0, t_N))$;
    \State $P_{\text{trial}} \leftarrow$ Evaluate $\mathcal{L}_I(\boldsymbol{\theta}_{\text{trial}})$;
    
    \If{\emph{Preference point} strategy $== updatePrevious$}
	\State $\boldsymbol{\theta}_k, P_{k} \leftarrow updatePrevious(P_{k-1}, P_{\text{trial}}, \boldsymbol{\theta}_{k-1}, \boldsymbol{\theta}_{\text{trial}})$;
    \Else
	\State $\boldsymbol{\theta}_k, P_{\text{best}} \leftarrow updateBest(P_{\text{best}}, P_{\text{trial}}, \boldsymbol{\theta}_{\text{best}}, \boldsymbol{\theta}_{\text{trial}})$;
    \EndIf
\EndFor

\State $\boldsymbol{\theta} \leftarrow \boldsymbol{\theta}_k$;
\State \textbf{return} $\boldsymbol{\theta}$;

\end{algorithmic}
\end{algorithm}

\begin{algorithm}[]
	\caption{: \emph{Preference point} strategy \emph{updatePrevious()}.}
\label{alg:updPrev}
\begin{algorithmic}
\State \textbf{Input:} $P_{k-1}, P_{\text{trial}}, \boldsymbol{\theta}_{k-1}, \boldsymbol{\theta}_{\text{trial}}$;

\If{$P_{\text{trial}} > P_{k-1}$}
	\State \textbf{return} $\boldsymbol{\theta}_{k-1}, P_{k-1}$;
\Else
\State \textbf{return} $\boldsymbol{\theta}_{\text{trial}}, P_{\text{trial}}$;
\EndIf

\end{algorithmic}
\end{algorithm}

\begin{algorithm}[]
\caption{: \emph{Preference point} strategy \emph{updateBest()}.}
\label{alg:updBest}
\begin{algorithmic}

	\State \textbf{Input:} $P_{\text{best}}, P_{\text{trial}}, \boldsymbol{\theta}_{\text{best}}, \boldsymbol{\theta}_{\text{trial}}$;

\If{$P_{\text{trial}} > P_{\text{best}}$}
\State \textbf{return} $\boldsymbol{\theta}_{\text{best}}, P_{\text{best}}$;
\Else
\State \textbf{return} $\boldsymbol{\theta}_{\text{trial}}, P_{\text{trial}}$;
\EndIf

\end{algorithmic}
\end{algorithm}

\textbf{Remark:} The proposed two-stage method is not confined to Neural ODEs, rather it can be seamlessly incorporated into any NN architectures. This adaptability arises from the method's fundamental principles of transforming constrained optimization problems into two unconstrained sub-problems and solving them sequentially. By design, this framework is agnostic to the specific NN structure facilitating swift and straightforward implementation. The addition of a second training loop (admissibility stage) is a minimal adjustment that can be easily integrated. This makes the two-stage method an accessible and flexible tool for addressing constraints in various NN paradigms.

\section{Numerical Experiments}
\label{sec:results}

In this section, we present a comprehensive experimental analysis of the proposed two-stage training method for Neural ODEs. The experiments conducted in this paper are based on the datasets, World Population Growth (WPG) \cite{coelho_population_2023} and Chemical Reaction (CR) \cite{coelho_chemical_2023}, and three experiments (reconstruction, extrapolation and completion) as in \cite{coelho2023prior}. 

For the reconstruction experiment, the training and testing sets are the same. The goal of this experiment is to evaluate the performance of the models at fitting to the training data. By assessing the models' performance in reproducing the observed data, we can determine the effectiveness of the learned system dynamics. 

For the extrapolation experiment, the training and testing sets have the same number of data points but the testing time interval is larger. This experiment is designed to assess the model's ability to generalize and make accurate predictions outside the training time interval. By evaluating the model's performance in extrapolating the system dynamics, we can determine whether the model has successfully learned the underlying constraints and can make reliable predictions beyond the observed data, being essential for forecasting and decision-making in many domains. 

For the completion experiment, the training and testing time interval is the same but the testing set has more time-steps. The objective is to evaluate the model's performance in completing missing data points within the observed time interval, crucial when dealing with incomplete datasets where the missing data needs to be estimated. By providing the model with partial observations and evaluating its ability to fill in the gaps, we can assess the model's effectiveness in capturing the dynamics and completing the system's trajectory.

In this work there are some key differences in the analysis when compared to the one presented in \cite{coelho2023prior}. 
Instead of reporting average results for three runs as in \cite{coelho2023prior}, we provide results for four independent runs. We include a baseline comparison with \emph{vanilla} Neural ODE. This baseline is trained without the admissibility stage and \emph{preference point} strategy, which are the main components of the two-stage method. Thus, we can assess the impact and effectiveness of the proposed method in comparison to the conventional approach.
Additionally, we present test results for Neural ODE models trained using the two-stage method without the \emph{preference point} strategy. We note that, this analysis aims to emphasize the importance of ensuring that the optimization stage remains within the feasible region of the constrained problem. By omitting this strategy, we can evaluate its impact on model performance and constraints satisfaction.
Furthermore, we conduct extended experiments to evaluate the proposed method's effectiveness in scenarios with smaller or larger training datasets gaining insight into its generalization capabilities. Moreover, a convergence analysis is performed to assess the convergence behavior of the proposed two-stage training method and to understand the stability of the optimization process.
Since in the admissibility stage the stopping criterion is based on the feasibility tolerance, it is important to analyze how different feasibility tolerance values impact the overall solution.
We provide results for training Neural ODE models with the two-stage training method with four different tolerance values: 1E-2, 1E-4, 1E-6, 1E-8.   

All implementations were done in \emph{Pytorch} using the \emph{Torchdiffeq} Neural ODE library  and can be found at [LINK]\footnote{available after acceptance}. The experiments were conducted on a computer equipped with an AMD Ryzen 9 5900HS processor, 16GB of RAM, and an NVIDIA GeForce RTX 3060 graphics card.

\textbf{Remark:} Comparing a \emph{vanilla} Neural ODE, trained solely on the unconstrained problem \eqref{eq:unconstrained}, with a Neural ODE trained using the two-stage method is not a fair comparison. Thus, when comparing the performance and capabilities of these two approaches, it is crucial to recognize the fundamental difference in their objectives. In spite of this, conducting a comparative analysis can still be valuable in understanding the distinctive characteristics and potential advantages of the proposed two-stage method by allowing us to highlight the benefits of explicitly incorporating constraints and showcasing the improved accuracy and reliability in modeling constrained systems.

\subsection{Performance Analysis}

The WPG and CR datasets were modeled using two different approaches: the \emph{vanilla} Neural ODE and the Neural ODE with the proposed two-stage training method. These models were used to conduct seven experiments, as described in Section \ref{subsec:setup}.

For Neural ODE models trained with the two-stage method, three variations were employed: without the \emph{preference point} strategy (\emph{noStrategy}), and with the \emph{preference point} strategy namely \emph{updatePrevious} and \emph{updateBest}. The average MSE ($\text{MSE}_{\text{avg}}$) and average constraints violation ($\text{V}_{\text{avg}}$) were measured on the testing sets.

The performance results of these experiments with \emph{vanilla} Neural ODE and all variations of two-stage Neural ODE, namely \emph{noStrategy}, \emph{updatePrevious} and \emph{updateBest}, are presented in Appendix \ref{app:exp} Tables \ref{tab:performanceWPG} and \ref{tab:performanceCR}, which display, for each, the $\text{MSE}_{\text{avg}}$ and $\text{V}_{\text{avg}}$ along with their respective standard deviations (std).


\subsubsection{World Population Growth}

Based on the results presented in Appendix \ref{app:WPG} Table \ref{tab:performanceWPG}, we conducted a comprehensive analysis of the performance of the different modeling approaches and variations for the WPG dataset. Here, we summarize the main findings for each experiment:

\begin{itemize}
    \item \textbf{Experiment 1.0 - Reconstruction:} The \emph{vanilla} Neural ODE was outperformed by all models obtained with the two-stage method, particularly the \emph{updatePrevious} variation showed values of $\text{MSE}_{\text{avg}}$ one order of magnitude lower. In general, increasing the feasibility tolerance criteria for the admissibility stage did not show significant changes in model performance. With the exception of \emph{noStrategy} that achieved better performance when using higher tolerance values;

    \item \textbf{Experiment 2.0 - Extrapolation:} As expected, the performance of the models decreased when compared to the reconstruction experiment, as they were predicting for an unseen time-horizon with increased complexity. All models trained with the two-stage method variations demonstrate similar performance for this experiment achieving $\text{V}_{\text{avg}}$ values one order of magnitude lower than \emph{vanilla} Neural ODE. The \emph{vanilla} Neural ODE achieved a similar $\text{MSE}_{\text{avg}}$ value to the two-stage models. This indicates that the solutions are not feasible. The \emph{noStrategy} shows best performance with the two lowest tolerance values;
    
    \item \textbf{Experiment 2.1 - Extrapolation with sparser training set:} When compared to 2.0, the \emph{vanilla} Neural ODE models showed lower performance in the extrapolation experiment when trained with fewer points, as expected. However, the models trained with the proposed two-stage method did not show a decrease in performance and show equivalent performance to 2.0. This indicates that our method is superior when the available datasets are sparse, as it explicitly incorporates constraints that leverage information extracted from the data. In general, the \emph{updatePrevious} strategy presented the best performance. Changing the tolerance did not significantly affect the performance;
    
    \item \textbf{Experiment 2.2 - Extrapolation with abundant training set:} The performances were equivalent to those observed in 2.0, indicating that training with a larger dataset did not significantly improve the models' performance. The tolerance values chosen for the first stage did not affect the results significantly;
    
    \item \textbf{Experiment 3.0 - Completion:} The models trained with the two-stage method showed a one-order-of-magnitude increase in performance, both $\text{MSE}_{\text{avg}}$ and $\text{V}_{\text{avg}}$ values, compared to the \emph{vanilla} Neural ODE. Lowering the tolerance value at the admissibility stage resulted in higher performance models using the \emph{updatePrevious} strategy while higher tolerance values favored the \emph{noStrategy};
    
    \item \textbf{Experiment 3.1 - Completion with sparser training set:} The \emph{vanilla} Neural ODE was able to produce models with one order of magnitude lower $\text{V}_{\text{avg}}$ than in 3.0, showing that using a sparser dataset for training does not affect negatively the models. In general all models trained with the two-staged method show similar performance to 3.0 showing that the performance was not impacted, when doing completion, by reducing the training set;

    \item \textbf{Experiment 3.2 - Completion with abundant training set:} The models trained with the proposed two-stage method demonstrated better performance than the \emph{vanilla} Neural ODE models, with a one-order-of-magnitude difference in the $\text{MSE}_{\text{avg}}$. As observed in the extrapolation experiments, increasing the number of points used for training did not have a significant impact on the performance.
    
\end{itemize}

\subsubsection{Chemical Reaction}

Based on the results presented in in Appendix \ref{app:CR} Table \ref{tab:performanceCR}, we conducted a comprehensive analysis of the performance of the different modeling approaches and variations for the CR dataset. Here, we summarize the main findings for each experiment:

\begin{itemize}
    \item \textbf{Experiment 1.0 - Reconstruction:} The \emph{vanilla} Neural ODE models were outperformed by all the models obtained with the three two-stage method strategies. The most remarkable performance boost was observed in the case of the \emph{noStrategy} and \emph{updateBest} where a tolerance of 1E-8 led to a noteworthy reduction in both the $MSE_{avg}$ and $V_{avg}$  by approximately two orders of magnitude compared to the baseline \emph{vanilla} Neural ODE. Reducing tolerance to both 1E-6 and 1E-8 consistently resulted in improved outcomes for models trained with the \emph{noStrategy} and \emph{updateBest}. In contrast, the \emph{updatePrevious} method did not show a significant difference in performance when the tolerance was adjusted.;

    \item \textbf{Experiment 2.0 - Extrapolation:} he performance of the models in this experiment exhibited a decline in comparison to the reconstruction task. Notably, models trained using the proposed two-stage training method, with the 3 strategies, significantly outperformed the baseline \emph{vanilla} Neural ODE models. In fact, the \emph{vanilla} Neural ODE models are proved to be inadequate, displaying high values of $MSE_{avg}$ and $V_{avg}$. The most remarkable overall performance was achieved by \emph{noStrategy} with $tol=$1E-6;
    
    \item \textbf{Experiment 2.1 - Extrapolation with sparser training set:} The results in this experiment exhibit a consistent trend similar to that observed in Experiment 2.0, with the \emph{vanilla} Neural ODE  models delivering poor performance. Among the various strategies tested, \emph{ noStrategy} stands out, displaying the lowest values for both $MSE_{avg}$ and $V_{avg}$ when using a tolerance of 1E-4. In particular, a higher tolerance of 1E-2 leads to models with diminished performance;
    
    \item \textbf{Experiment 2.2 - Extrapolation with abundant training set:} The \emph{vanilla} Neural ODE demonstrated a notable enhancement in model performance by leveraging a larger dataset for training, resulting in a remarkable reduction of approximately two orders of magnitude in the values $MSE_{avg}$ and $V_{avg}$ compared to the results of experiments 2.0 and 2.1. This did not occur with the two-stage training method models showing their robustness to the availability of data.
    Among the proposed strategies, \emph{noStrategy} consistently delivered the most impressive performance in the various tolerance values. It is worth highlighting that, surprisingly, setting the tolerance at 1E-2 produced models with inferior performance compared to the \emph{vanilla} Neural ODE;
    
    \item \textbf{Experiment 3.0 - Completion:} The models trained with the proposed two-stage method offer similar or better performance than the models obtained with the \emph{vanilla} Neural ODE baseline being the lowest $MSE_{avg}$ and $V_{avg}$ obtained with \emph{noStrategy} and  $tol=$1E-8. Changing the tolerance threshold of the admissibility stage does not show to have a significant impact to the results;
    
    \item \textbf{Experiment 3.1 - Completion with sparser training set:} The models obtained through the proposed two-stage training method consistently outperform the \emph{vanilla} Neural ODE baseline by a significant margin, with improvements of at least one order of magnitude for both $MSE_{avg}$ and $V_{avg}$. As expected, the baseline performance was adversely affected due to the reduced amount of data, when compared with experiment 3.0. On the contrary, the models trained with the two-stage method show robust performance. The best overall performance was achieved with the \emph{updateBest} strategy. It is worth mentioning that a higher tolerance value of $tol=$1E-2 generally resulted in models that were less performant, while no significant differences were observed between the other values;

    \item \textbf{Experiment 3.2 - Completion with abundant training set:} Impressively, the performance of the \emph{vanilla} Neural ODE in this experiment closely mirrors that of Experiment 3.1, indicating that the increased volume of data does not yield performance improvements for the models. Once again, models trained using the proposed two-stage method generally exhibit better performance compared to the baseline. when it comes to adjusting the tolerance, the \emph{updatePrevious} strategy proves to be the most robust, with consistent performance even when changing the tolerance values. In contrast, the \emph{updateBest} strategy struggles when using a tolerance of 1E-2, performing similarly to the baseline in such cases.
    
\end{itemize}

\subsection{Experimental Convergence Analysis}

To analyze the convergence towards both feasible and optimal solutions during the training process, we plotted the optimization loss values corresponding to $\mathcal{L}_{II}$ and the total constraints violation during the optimization stage, corresponding to $\mathcal{L}_I$, for both the \emph{vanilla} Neural ODEs and the Neural ODEs trained with the two-stage method. The latter includes three different strategies and various feasibility tolerance values. The plots were generated with a frequency of 20 iterations. Note that all values were taken during the optimization stage and consequently the constraints violations curves represented describe the progress during the optimization stage (inclusive for \emph{vanilla} Neural ODEs).

\textbf{Remark:} Be aware that the scale of the plots may vary between different tolerance values, as we aimed to identify the most appropriate window for visualization. Note that since the \emph{vanilla} Neural ODE only undergoes the optimization stage, the plot curves for these models are identical across all tolerance values.

\subsubsection{World Population Growth}

Figures \ref{fig:normal}-\ref{fig:more} aggregate the plots for the WPG dataset for the various tolerance values for the models trained with: $200$ points in the time interval $(0,300)$ used in experiments 1.0, 2.0 and 3.0, Figure \ref{fig:normal}; $100$ points in the time interval $(0,300)$ used in experiments 2.1 and 3.1, Figure \ref{fig:fewer}; $300$ points in the time interval $(0,300)$ used in experiments 2.2 and 3.2, Figure \ref{fig:more}. The best runs at the extrapolation experiments 2.0, 2.1 and 2.2 for each tolerance were chosen to be plotted.

\begin{figure}[]
\begin{subfigure}[h]{0.5\textwidth}
        \centering
        \includegraphics[width=\textwidth]{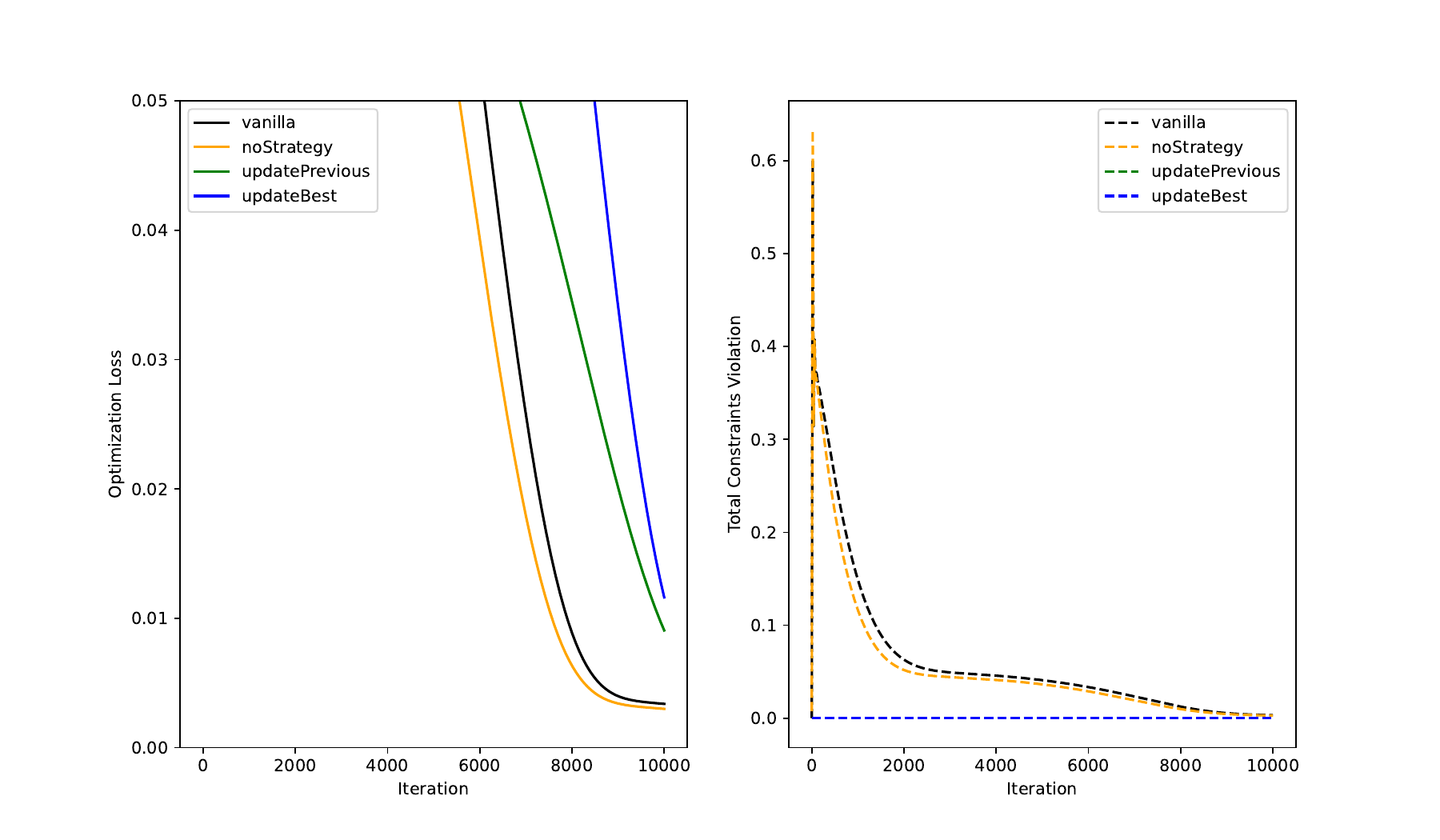} 
        \caption{1E-2}
    \end{subfigure}
    \hskip -2ex
    \begin{subfigure}[h]{0.5\textwidth}
        \centering
        \includegraphics[width=\textwidth]{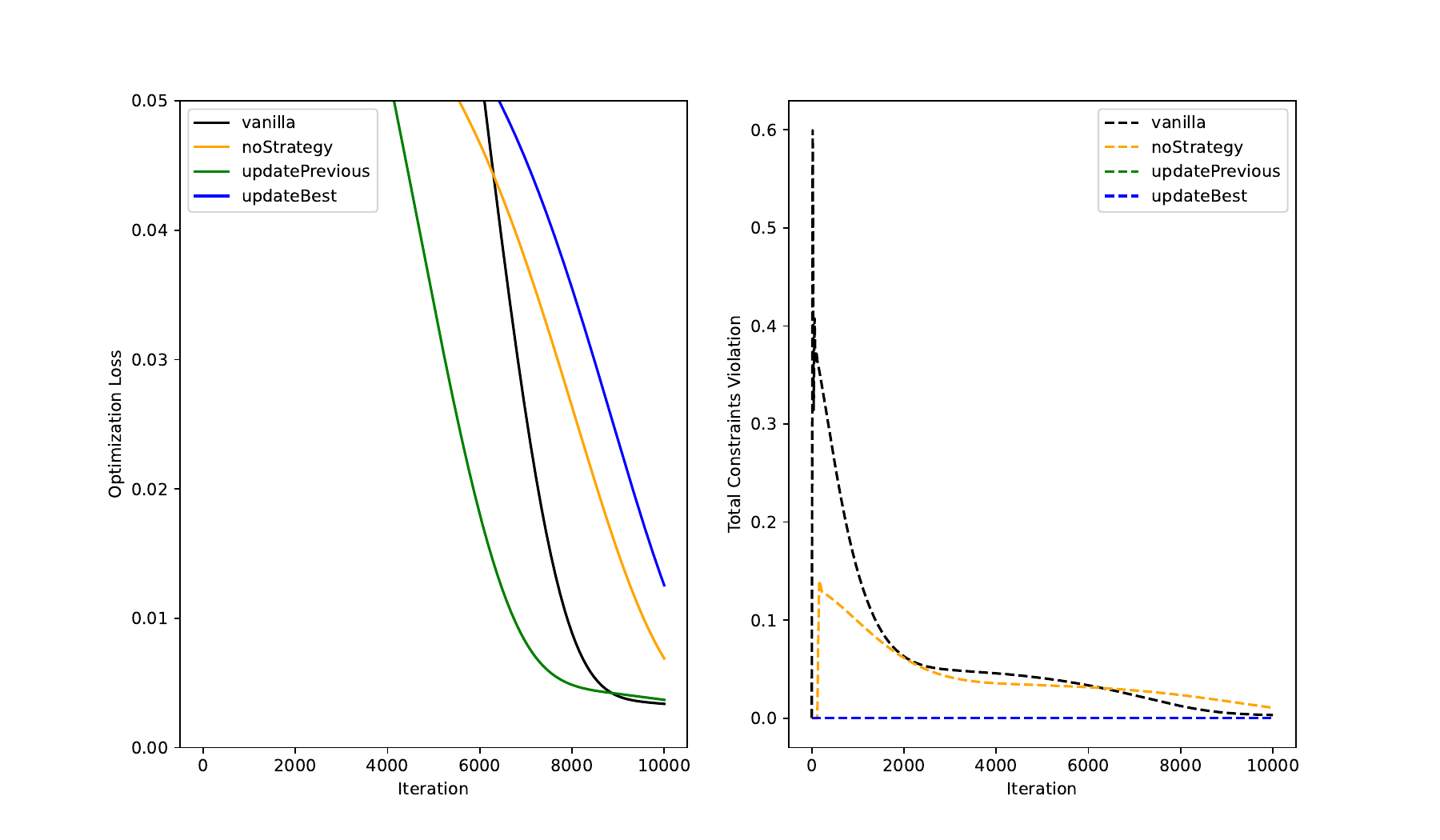} 
        \caption{1E-4}
    \end{subfigure}
    \hskip -2ex
    \begin{subfigure}[h]{0.5\textwidth}
        \centering
        \includegraphics[width=\textwidth]{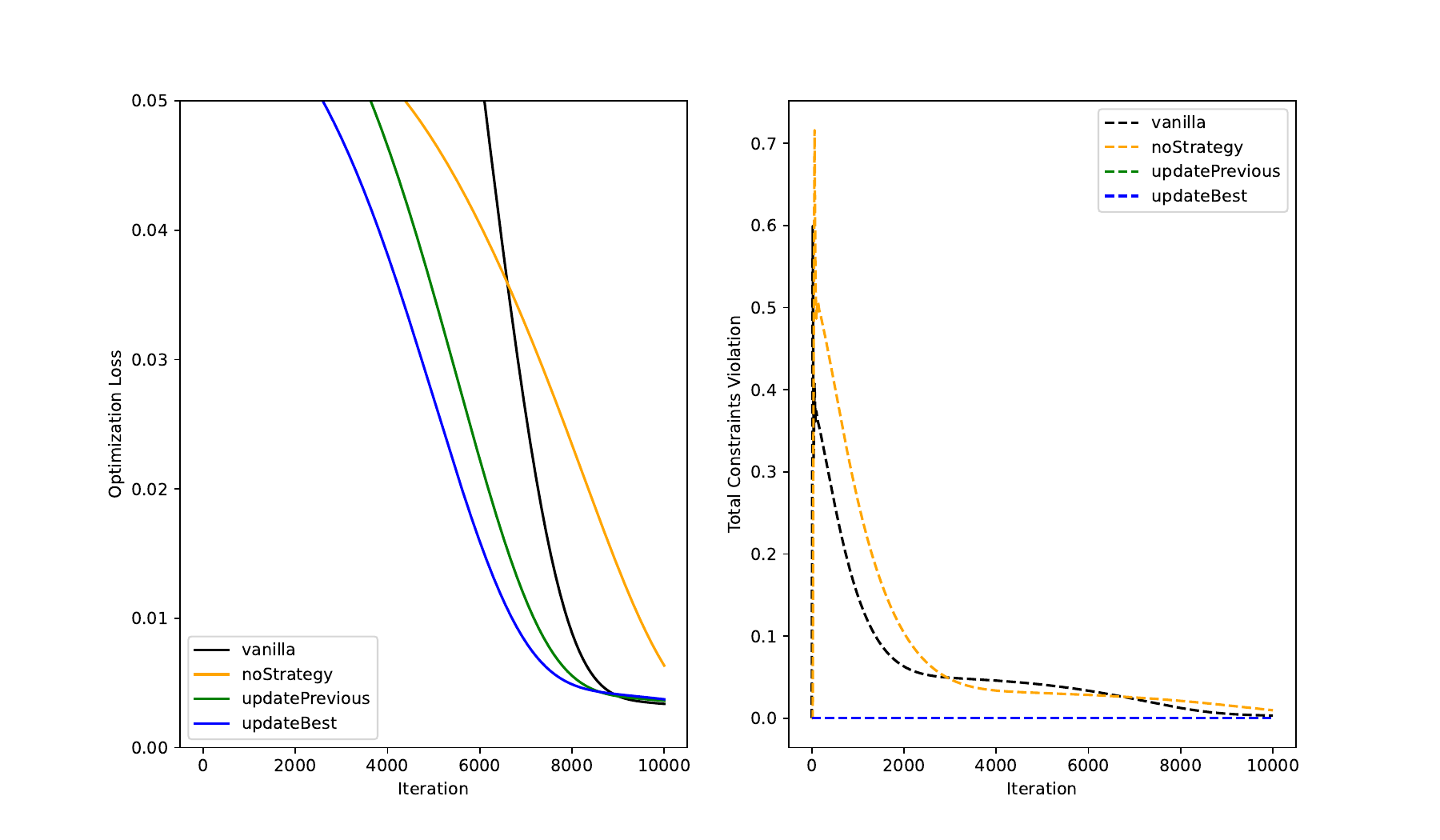} 
        \caption{1E-6}
    \end{subfigure}
    \hskip -2ex
    \begin{subfigure}[h]{0.5\textwidth}
        \centering
        \includegraphics[width=\textwidth]{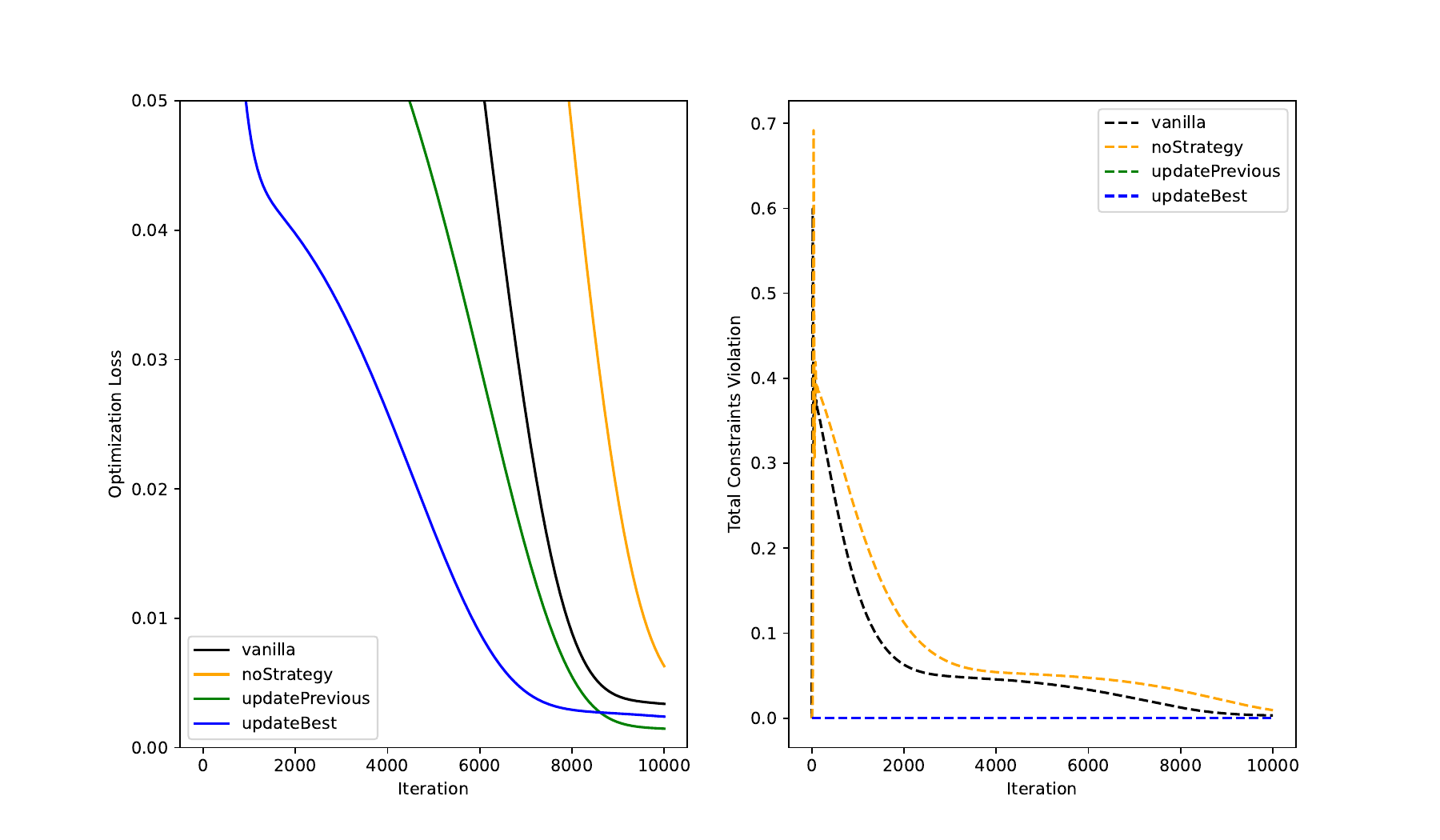}
        \caption{1E-8}
    \end{subfigure}
    \caption{Plots of loss (left) and constraints violation (right), during admissibility stage, for the various tolerance values during training of the models used in experiments 1.0, 2.0 and 3.0.}
\label{fig:normal}
\end{figure}

Upon examining Figure \ref{fig:normal}, it becomes evident that the loss in the admissibility stage ($\mathcal{L}_{I}$) decreases faster when using the two-stage method. Furthermore, this reduction leads to attaining lower values across all tolerance settings, except for the 1E-4 tolerance, where the \emph{vanilla} Neural ODE reaches the final iteration with the lowest value. 

As expected,  the devised \emph{preference point} strategies, designed to keep the solutions inside the feasible region during the second stage, have the desired effect. Both the \emph{updatePrevious} and \emph{updateBest} strategies ensure that that the total constraints violation during the optimization phase do not increase. Although, during the optimization stage it does not decrease either. Conversely, the \emph{noStrategy} and \emph{vanilla} Neural ODE approaches exhibit fluctuating and unstable values.

\begin{figure}[]
\begin{subfigure}[h]{0.5\textwidth}
        \centering
        \includegraphics[width=\textwidth]{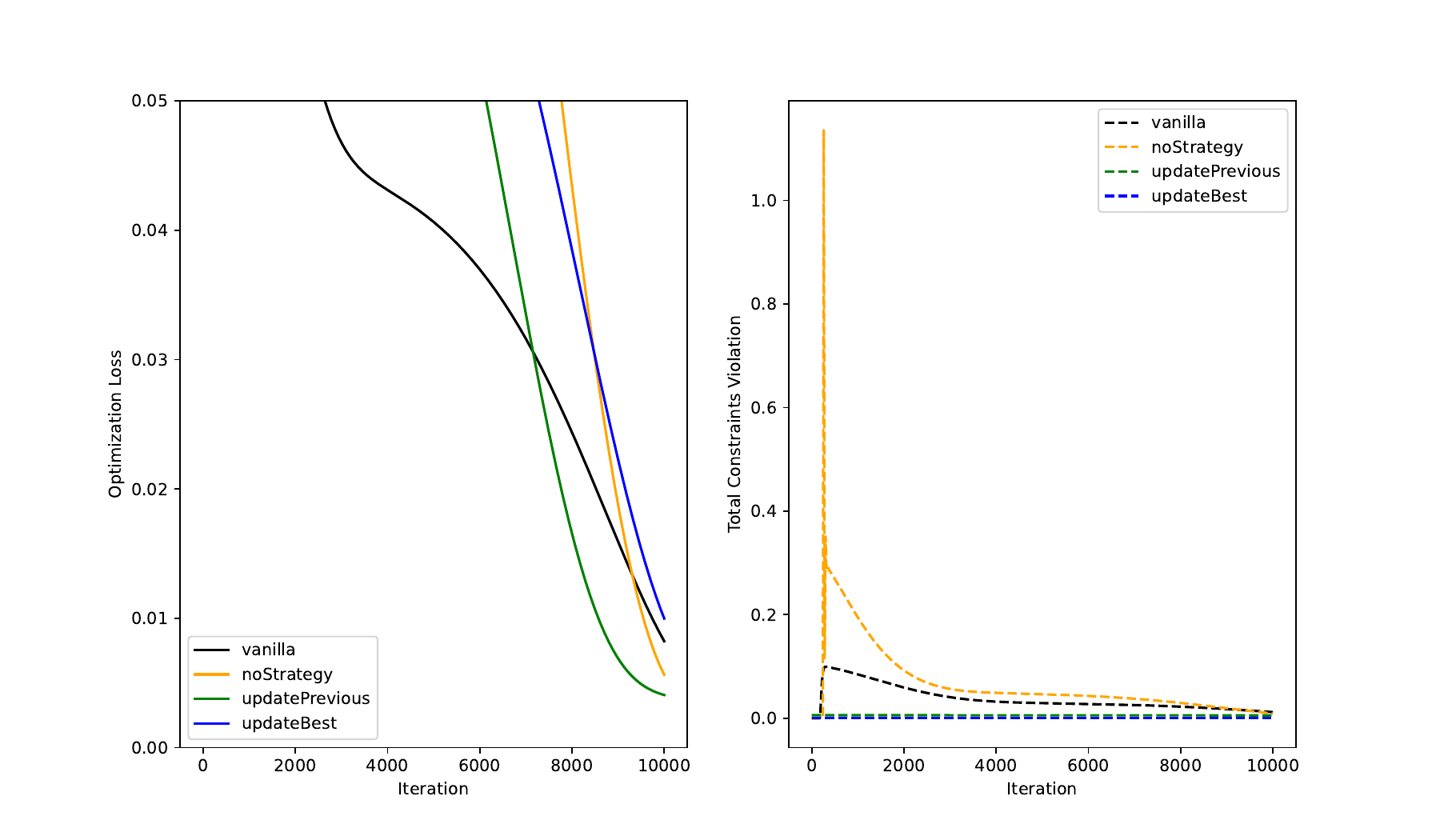} 
        \caption{1E-2}
    \end{subfigure}
    \hskip -2ex
    \begin{subfigure}[h]{0.5\textwidth}
        \centering
        \includegraphics[width=\textwidth]{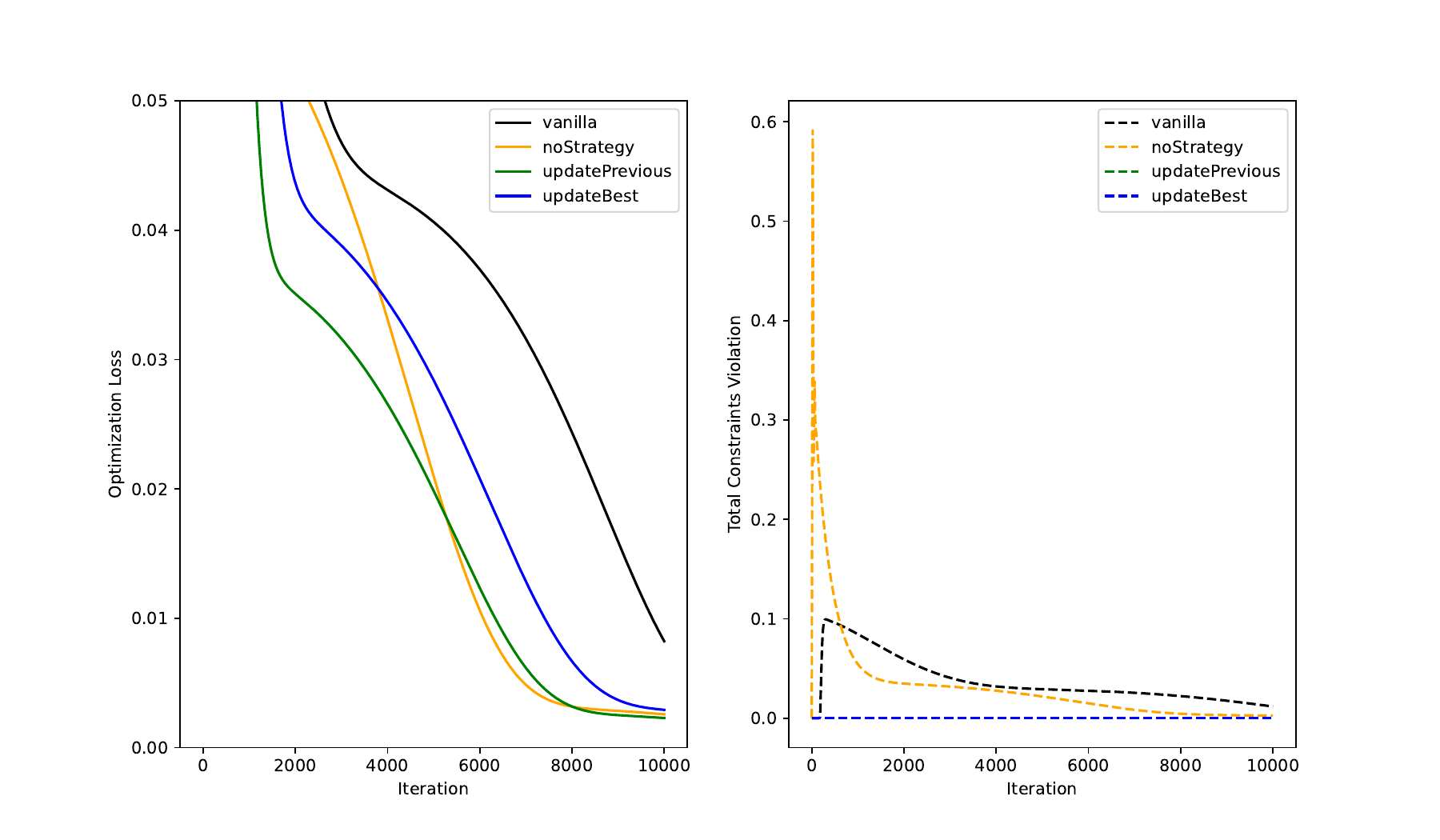} 
        \caption{1E-4}
    \end{subfigure}
    \hskip -2ex
    \begin{subfigure}[h]{0.5\textwidth}
        \centering
        \includegraphics[width=\textwidth]{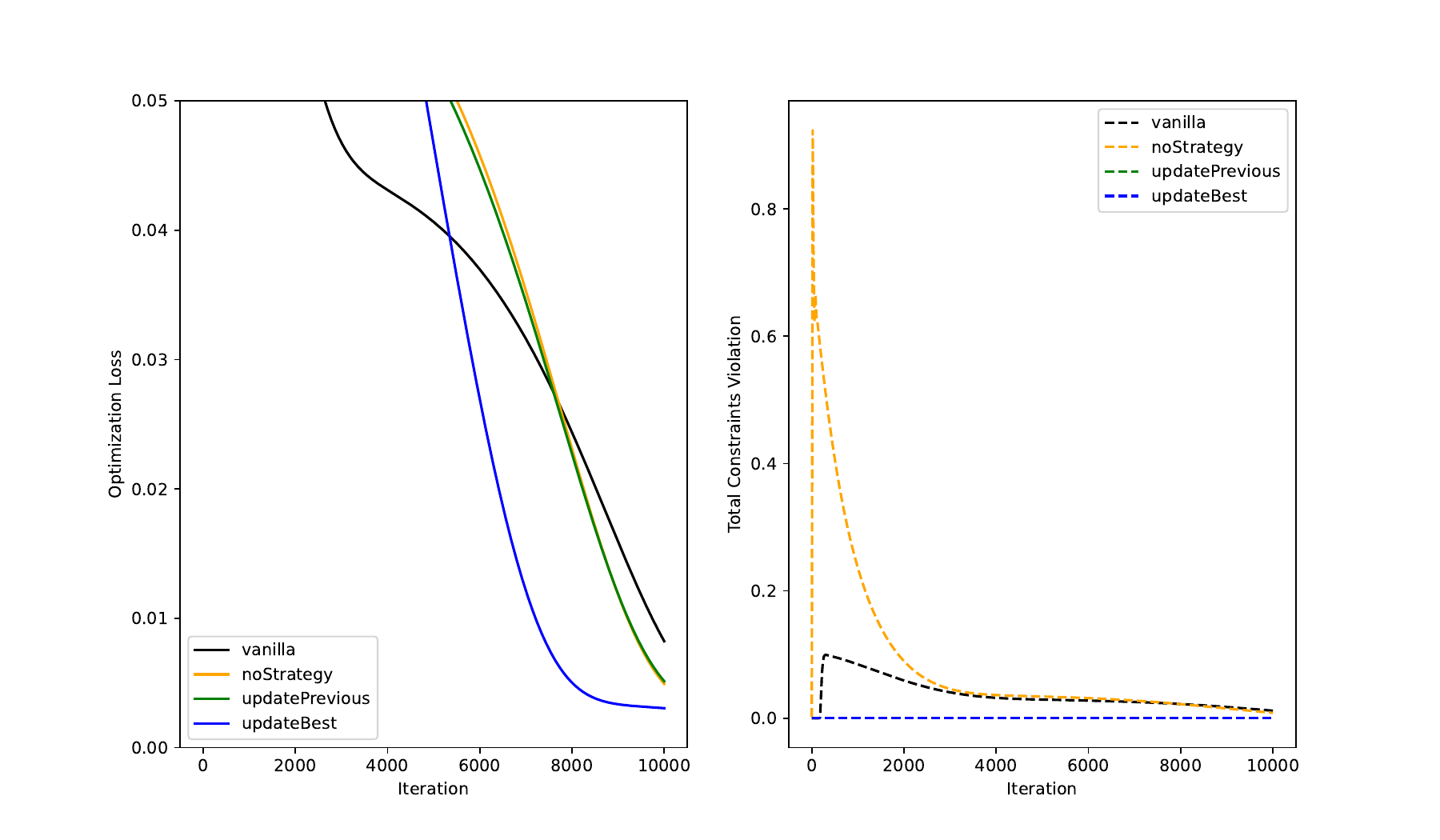} 
        \caption{1E-6}
    \end{subfigure}
    \hskip -2ex
    \begin{subfigure}[h]{0.5\textwidth}
        \centering
        \includegraphics[width=\textwidth]{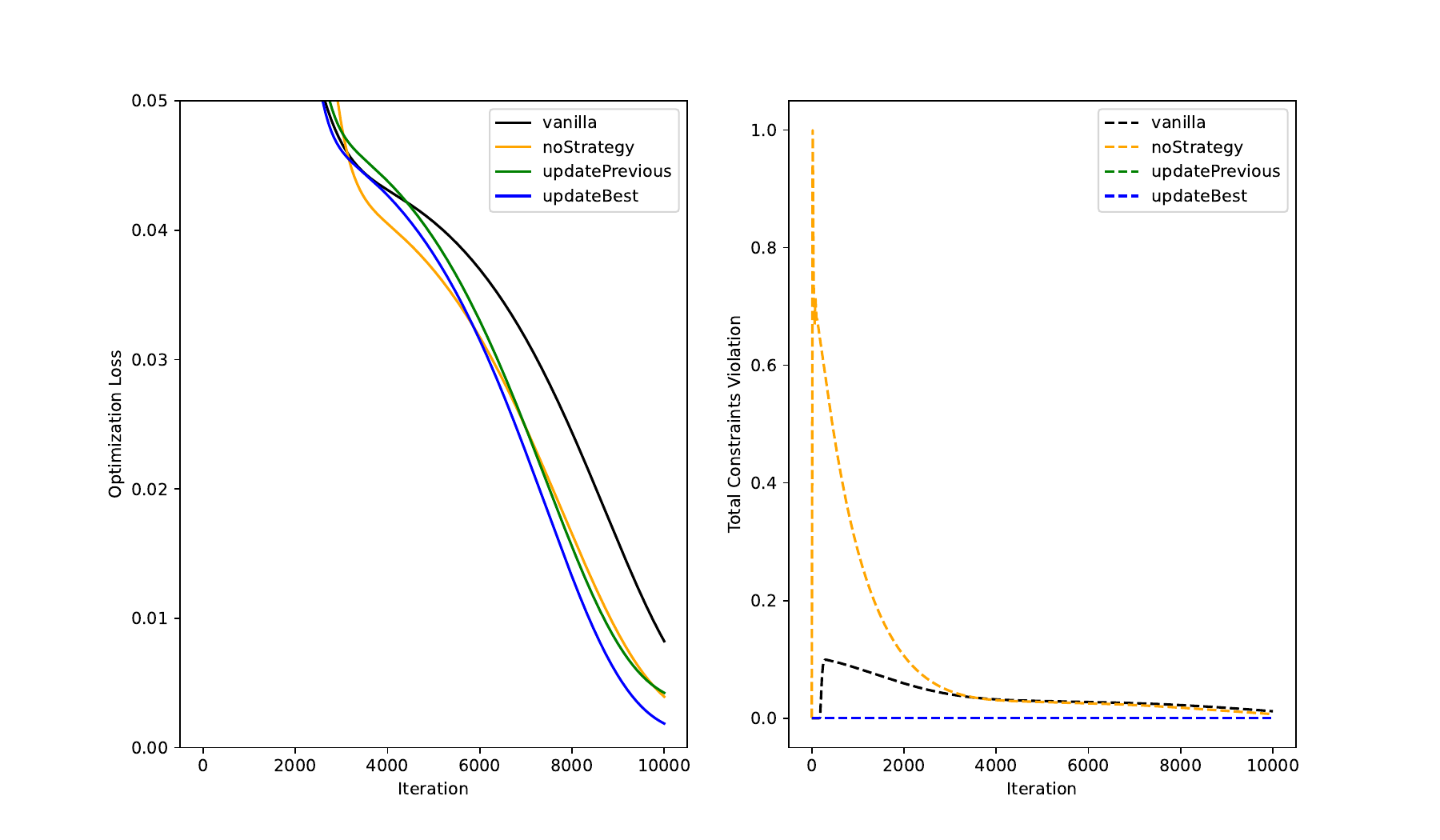}
        \caption{1E-8}
    \end{subfigure}
    \caption{Plots of loss (left) and constraints violation (right), during admissibility stage,  for the various tolerance values during training of the models used in experiments 2.1 and 3.1.}
\label{fig:fewer}
\end{figure}

Figure \ref{fig:fewer} provides a visual representation of the plots pertaining to models trained with sparser data. Once again, the two-stage training method exhibits quicker reductions in losses, except for the scenario involving a tolerance of 1E-2. Notably, unlike the observations in Figure \ref{fig:normal}, the \emph{vanilla} Neural ODE concludes its training with markedly higher loss values. This shows that training with fewer data significantly impacts the training process of these models.

Similarly to the findings in Figure \ref{fig:normal}, the instability of constraints violation during the optimization phase persists when employing the \emph{vanilla} Neural ODE approach and the two-stage method employing the \emph{noStrategy}.

\begin{figure}[]
\begin{subfigure}[h]{0.5\textwidth}
        \centering
        \includegraphics[width=\textwidth]{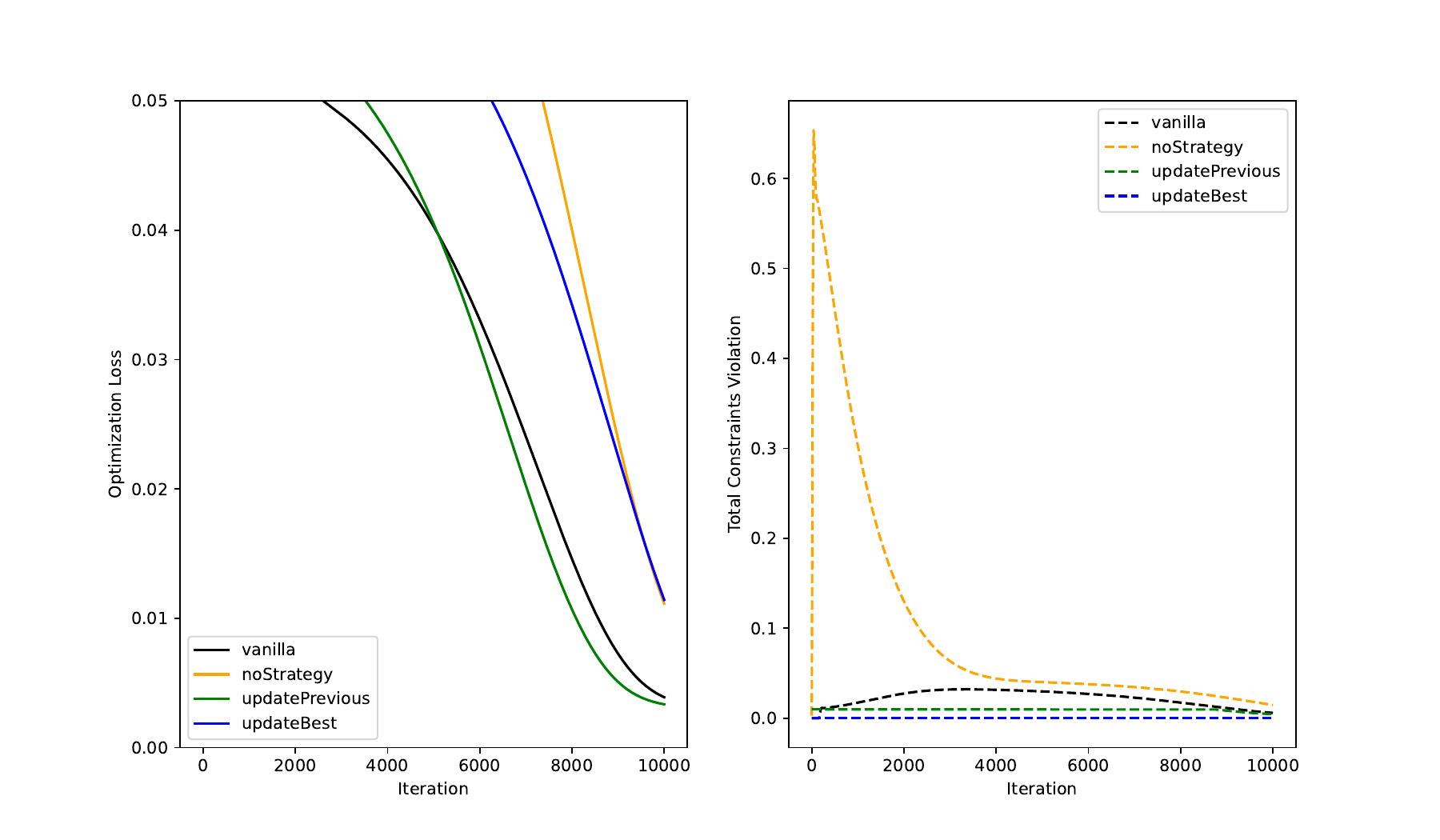} 
        \caption{1E-2}
    \end{subfigure}
    \hskip -2ex
    \begin{subfigure}[h]{0.5\textwidth}
        \centering
        \includegraphics[width=\textwidth]{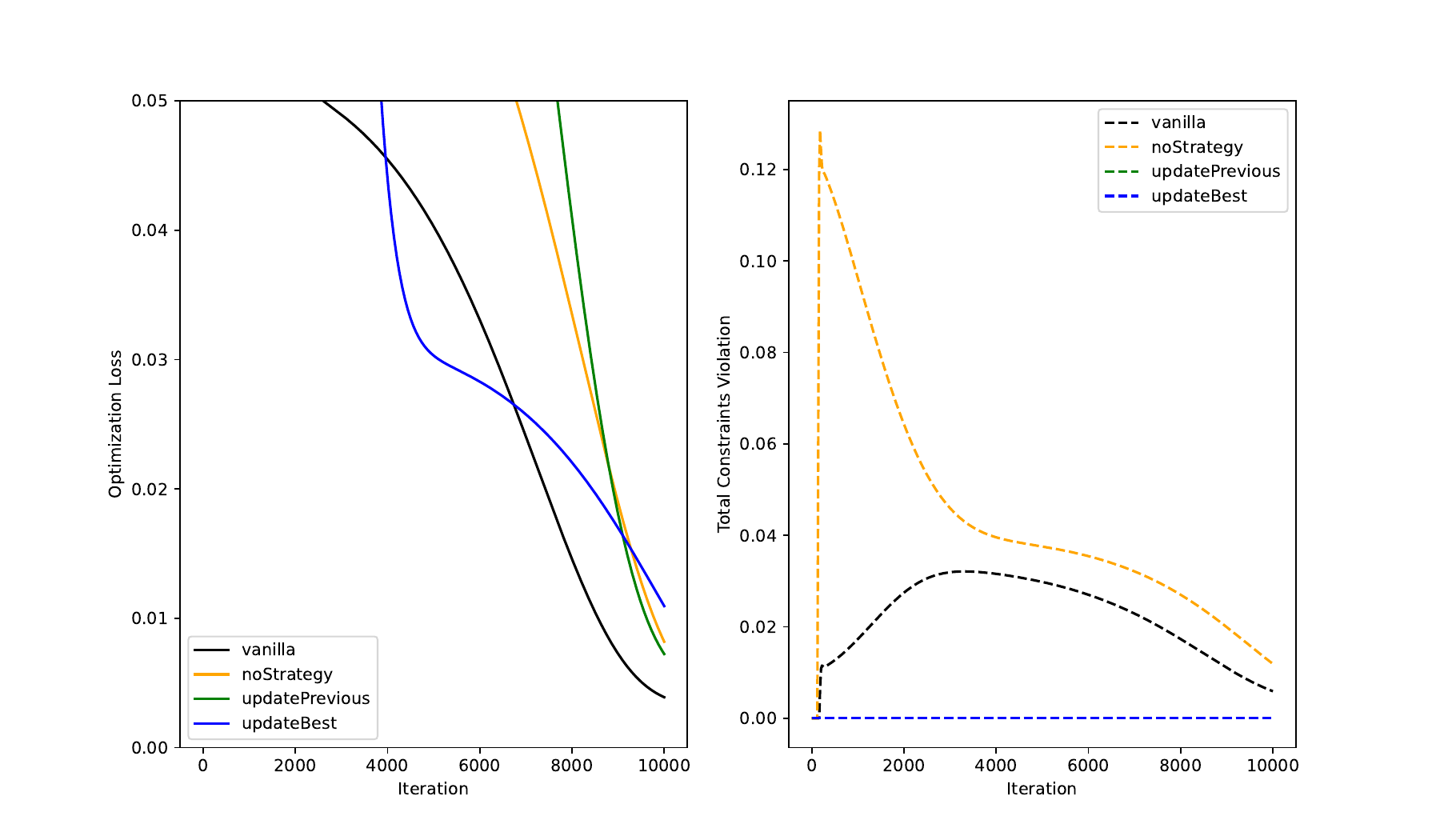} 
        \caption{1E-4}
    \end{subfigure}
    \hskip -2ex
    \begin{subfigure}[h]{0.5\textwidth}
        \centering
        \includegraphics[width=\textwidth]{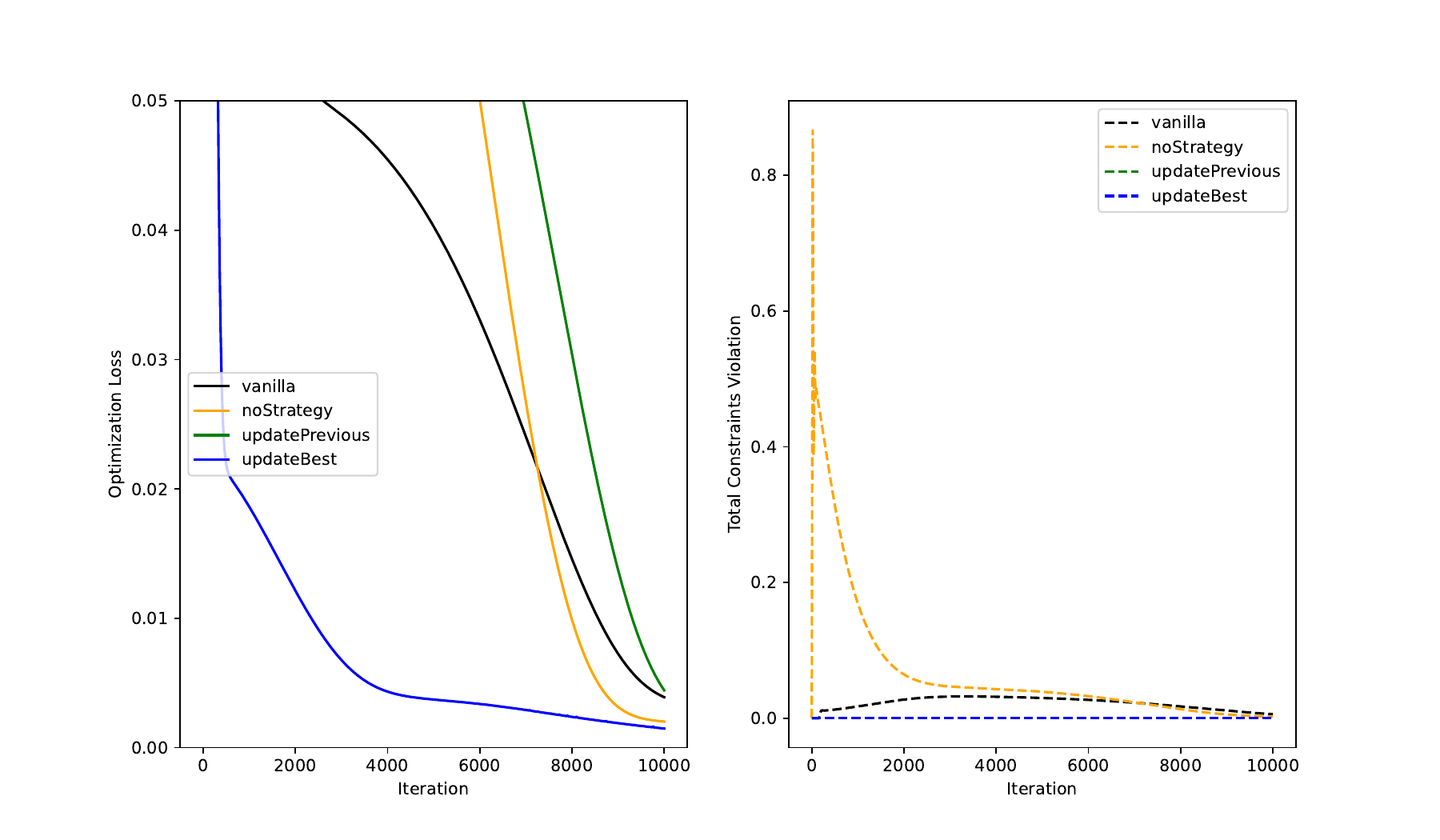} 
        \caption{1E-6}
    \end{subfigure}
    \hskip -2ex
    \begin{subfigure}[h]{0.5\textwidth}
        \centering
        \includegraphics[width=\textwidth]{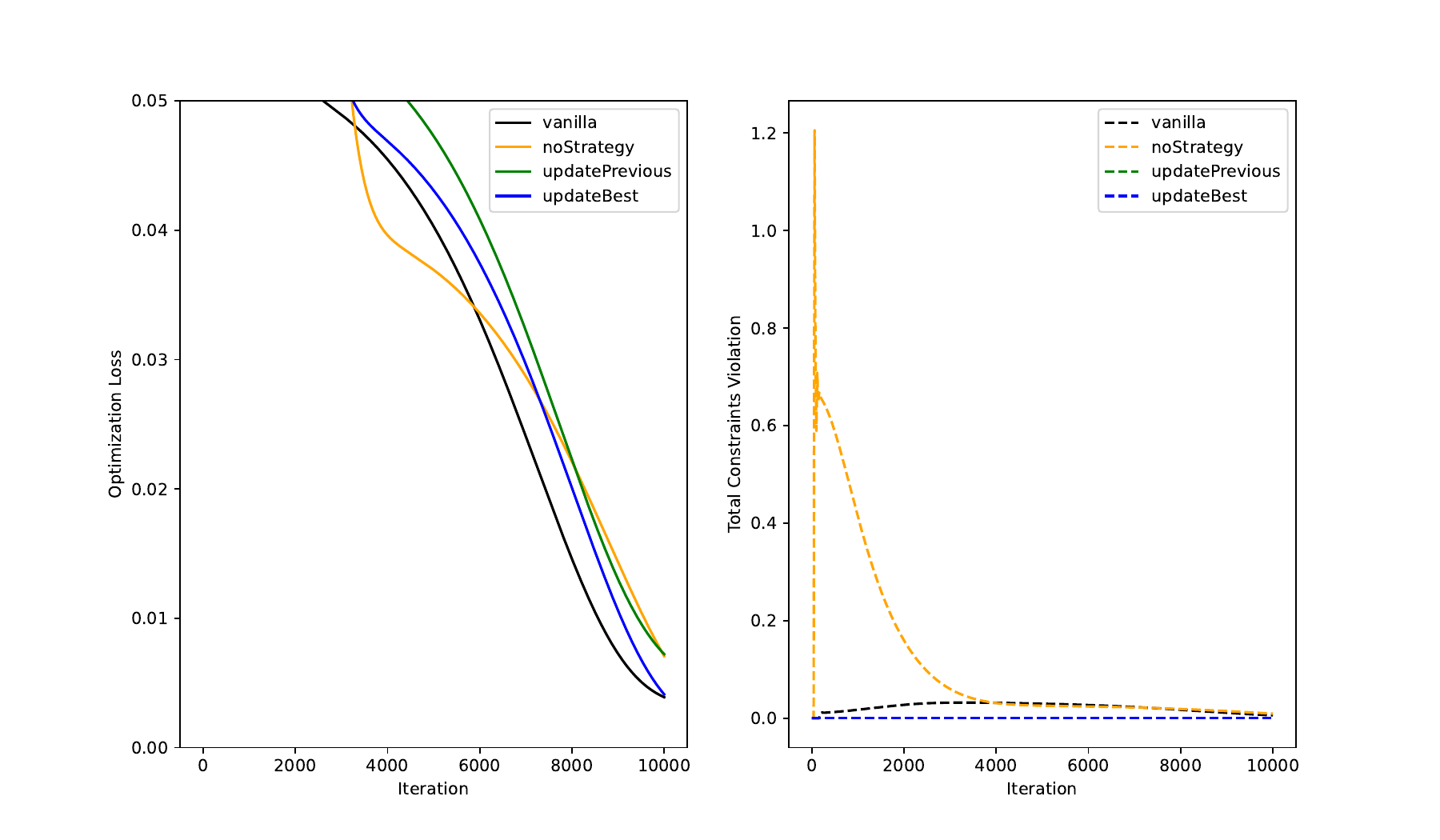}
        \caption{1E-8}
    \end{subfigure}
    \caption{Plots of loss (left) and constraints violation (right), during admissibility stage,  for the various tolerance values during training of the models used in experiments 2.2 and 3.2.}
\label{fig:more}
\end{figure}

Figure \ref{fig:more} provides a visual representation of the optimization stage of models trained with abundant data. In general, comparing to the models trained with ``normal'' and sparser data, Figure \ref{fig:normal} and Figure \ref{fig:fewer} respectively, the rate of decrease in loss is similar to between \emph{vanilla} Neural ODE and the two-stage training method models. It's noteworthy that the \emph{updateBest} strategy presents a faster decrease, outpacing all others, for a tolerance of 1E-6 and reaches the lowest value.

As for the constraints violation during the training process, the \emph{vanilla} Neural ODE initially experiences a significant increase, subsequently transitioning into a diminishing trend. A parallel behavior is observed in the case of the \emph{noStrategy}. Again, the \emph{preference point} strategy avoided an increase of the constraints violation during the optimization stage.

Gathering the conclusions taken from the results in Table \ref{tab:performanceWPG} and the findings from Figures \ref{fig:normal}-\ref{fig:more} we can conclude that the proposed two-stage training method yields models with  higher predictive performance and provides faster convergence to an optimal solution. The efficacy of our method is particularly evidenced in scenarios involving sparser data. It demonstrates robust performance and maintains the quality of the resulting models even when trained with limited data points.
The introduction of the \emph{preference point} strategy shows to be effective in keeping the solutions inside the feasible region during the optimization stage and provide lower $\text{V}_\text{avg}$ values during training and testing.
In general, selecting a tolerance value of 1E-4 shows to be a favorable choice.

\subsubsection{Chemical Reaction}

Figures \ref{fig:normalCR} - \ref{fig:moreCR} aggregate the plots for the CR dataset for the various tolerance values for the models trained with: $100$ points in the time interval $(0,100)$ used in experiments 1.0, 2.0 and 3.0, Figure \ref{fig:normalCR}; $50$ points in the time interval $(0,100)$ used in experiments 2.1 and 3.1, Figure \ref{fig:fewerCR}; $150$ points in the time interval $(0,100)$ used in experiments 2.2 and 3.2, Figure \ref{fig:moreCR}. The best runs at the extrapolation experiments (2.0, 2.1 and 2.2) for each tolerance were chosen to be plotted.

From the plots in Figure \ref{fig:normalCR} we can see that, unlike the WPG dataset, the loss $\mathcal{L_{II}}$ does not decrease faster with the two-stage method.
This observation corroborates with the numerical results in Table \ref{tab:performanceCR} showing that the \emph{vanilla} Neural ODE models have worse generalization capabilities when contrasted with models resulting from the two-stage method.

Upon scrutiny of the evolution of the total constraints violation during the admissibility stage, it is evident that the two-stage method achieves smaller values faster. By employing the \emph{preference point} strategy, it becomes possible to avert deterioration, while its absence, \emph{noStrategy}, results in oscillations. It is worth noting that for $tol=$1E-4, both the \emph{updatePrevious} and \emph{updateBest} strategies not only prevent the deterioration of satisfiability but also further reduce its magnitude.

\begin{figure}[]
\begin{subfigure}[h]{0.5\textwidth}
        \centering
        \includegraphics[width=\textwidth]{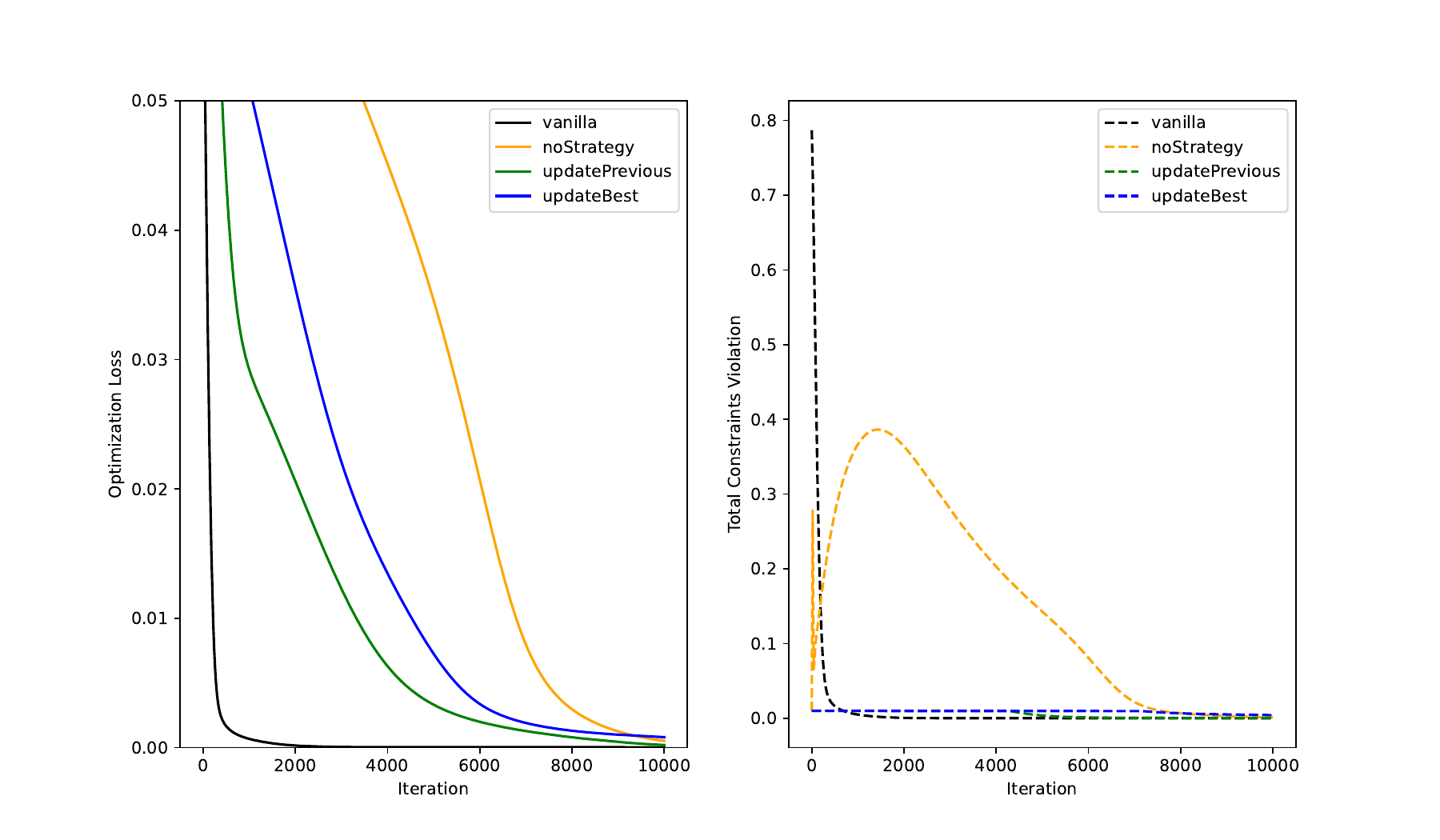} 
        \caption{1E-2}
    \end{subfigure}
    \hskip -2ex
    \begin{subfigure}[h]{0.5\textwidth}
        \centering
        \includegraphics[width=\textwidth]{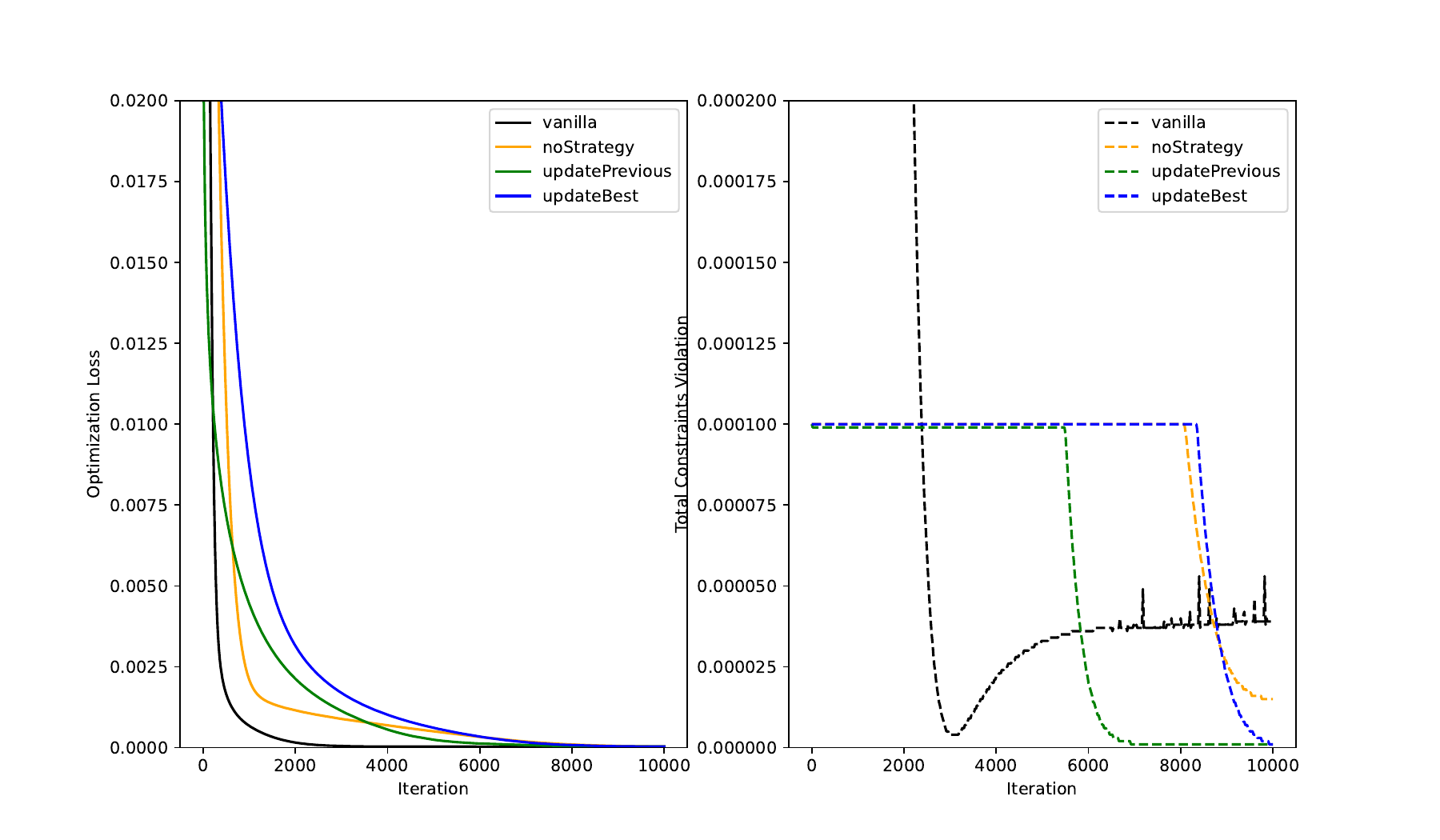}
        \caption{1E-4}
    \end{subfigure}
    \hskip -2ex
        \begin{subfigure}[h]{0.5\textwidth}
        \centering
        \includegraphics[width=\textwidth]{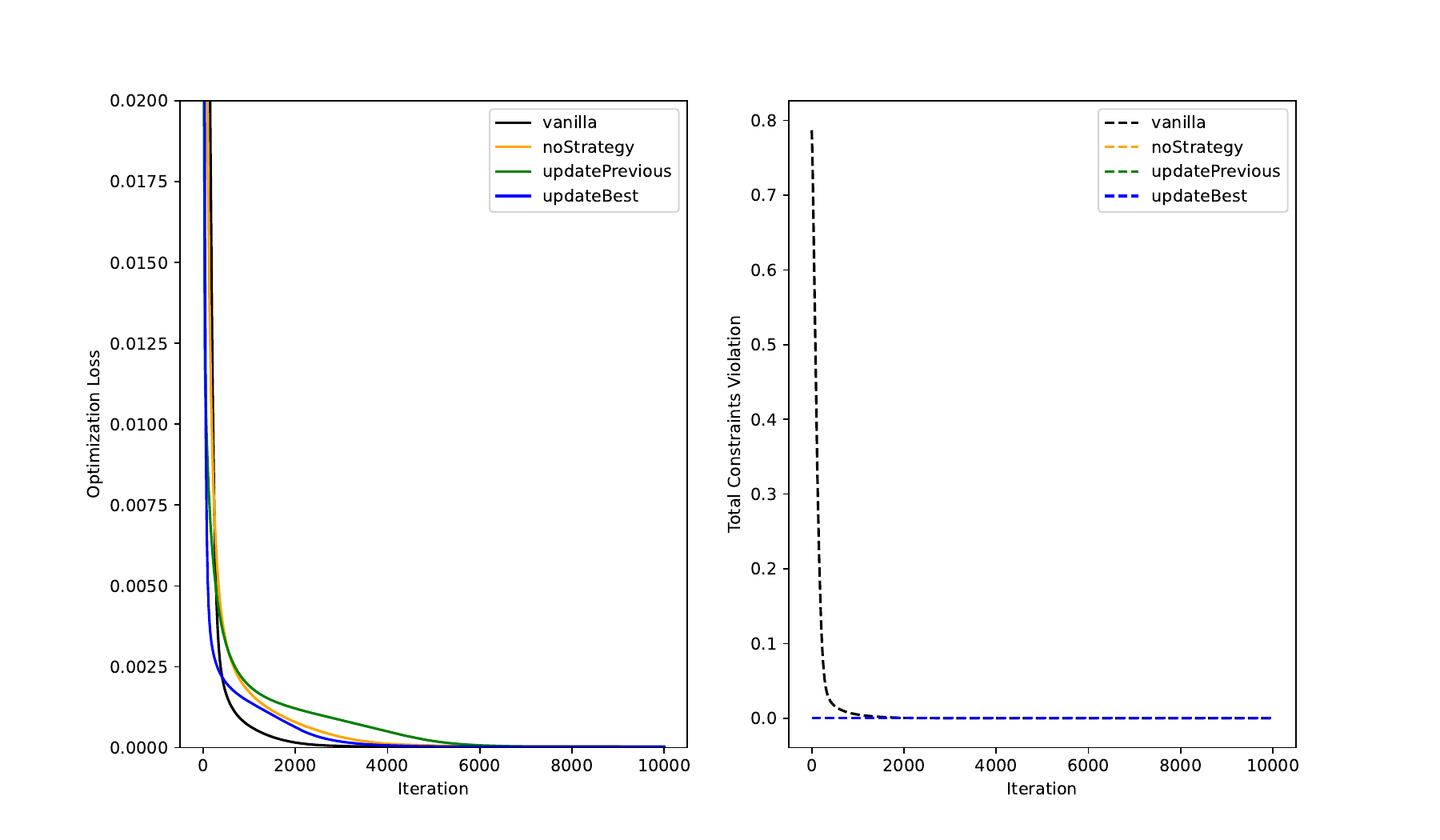}
        \caption{1E-6}
    \end{subfigure}
    \hskip -2ex
    \begin{subfigure}[h]{0.5\textwidth}
        \centering
        \includegraphics[width=\textwidth]{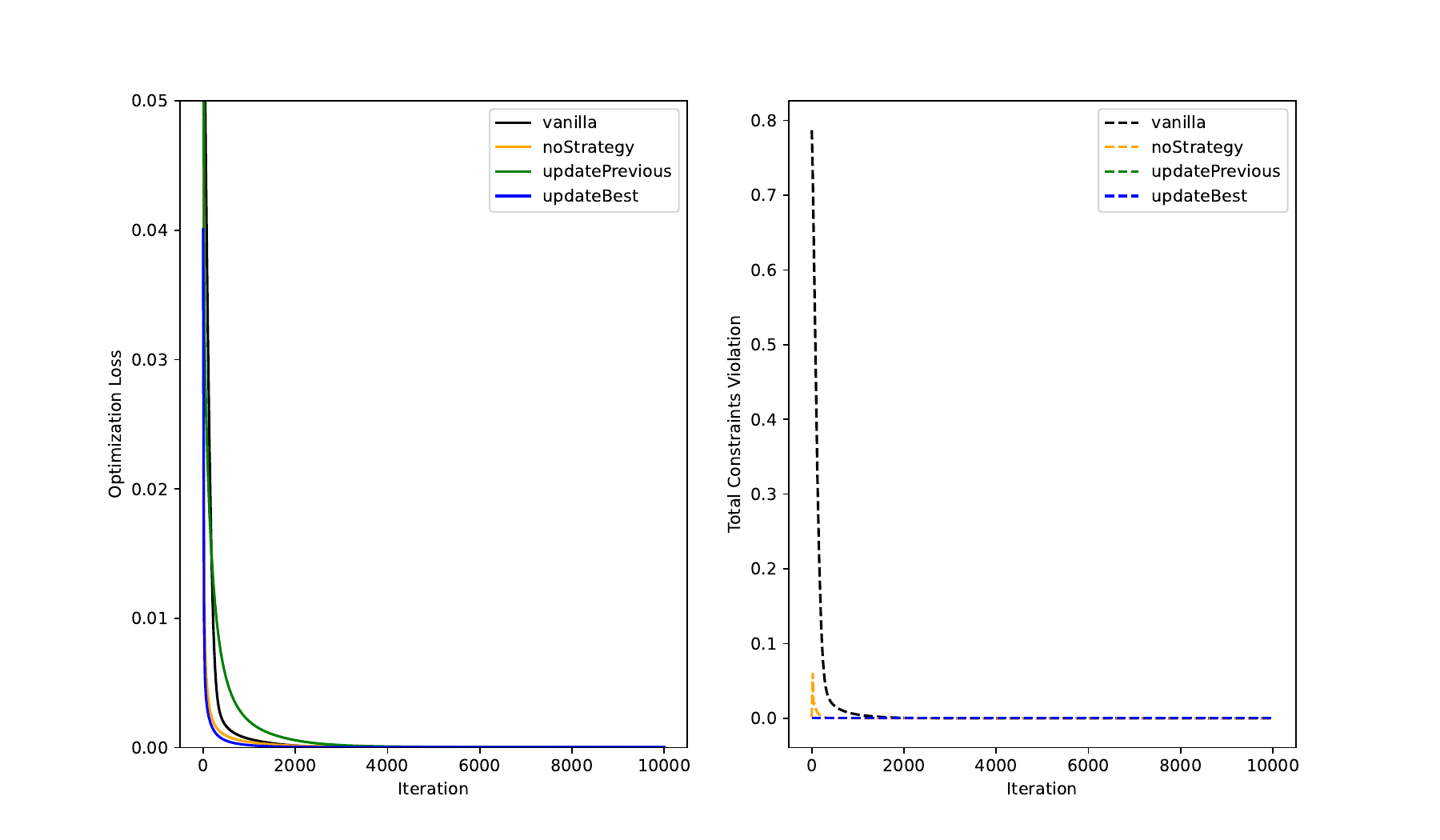}
        \caption{1E-8}
    \end{subfigure}
    \caption{Plots of loss (left) and constraints violation (right), during admissibility stage, for the various tolerance values during training of the models used in experiments 1.0, 2.0 and 3.0.}
    \label{fig:normalCR}
\end{figure}

When training with a sparser dataset (experiments 2.1 and 3.1), Figure \ref{fig:fewerCR} shows that the two-stage method exhibits faster convergence and reaches lower values than \emph{vanilla} Neural ODE. The inclusion of the \emph{preference point} strategies in the two-stage method brings the best training performance.
The dynamics of total constraint violation evolution is similar to Figure \ref{fig:normalCR}. Here, the \emph{updatePrevious} and \emph{updateBest} strategies stand out by delivering optimal performance, preventing deterioration of satisfiability and contributing to a further reduction in feasibility.

\begin{figure}[]
\begin{subfigure}[h]{0.5\textwidth}
        \centering
        \includegraphics[width=\textwidth]{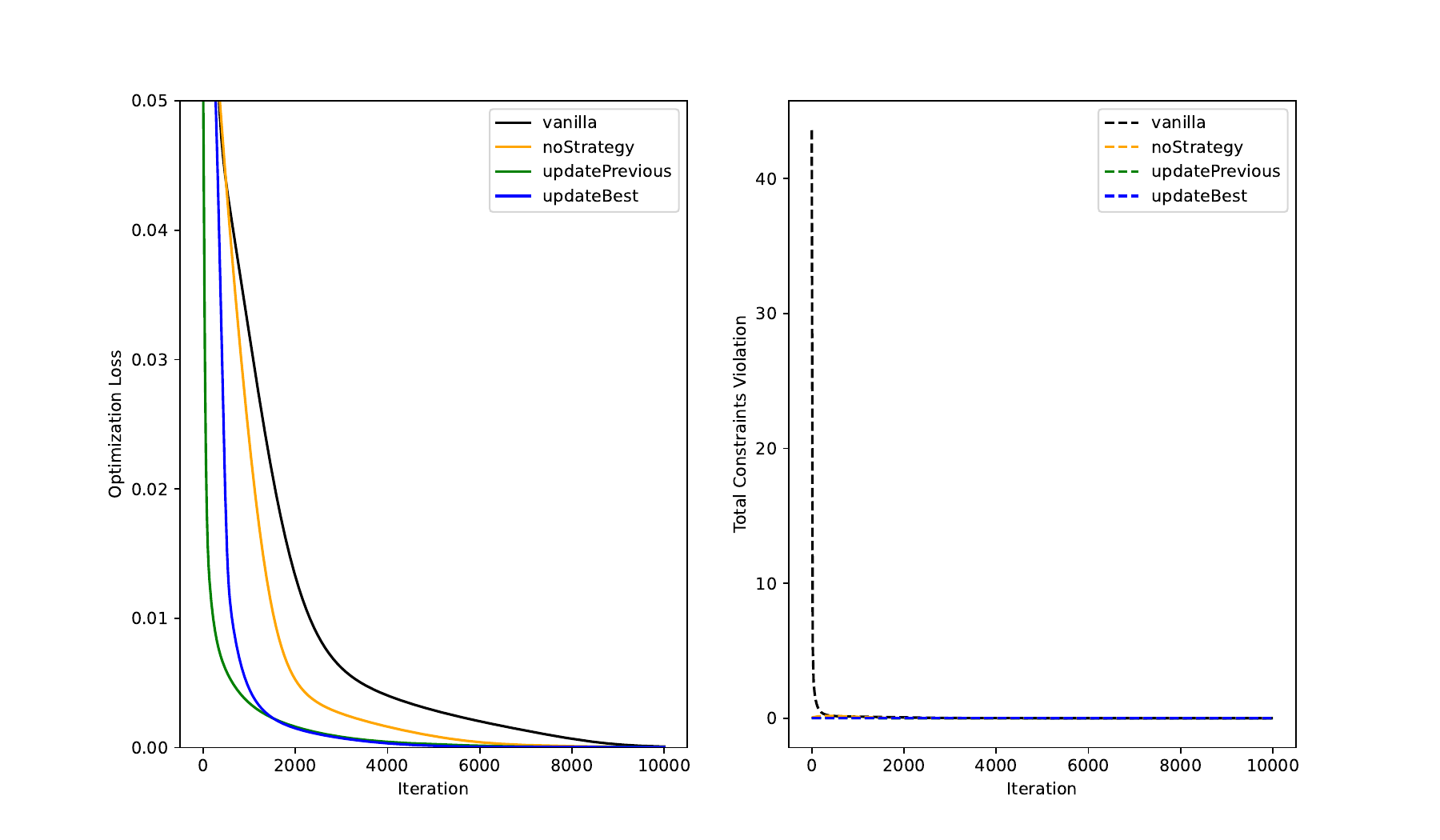} 
        \caption{1E-2}
    \end{subfigure}
    \hskip -2ex
    \begin{subfigure}[h]{0.5\textwidth}
        \centering
        \includegraphics[width=\textwidth]{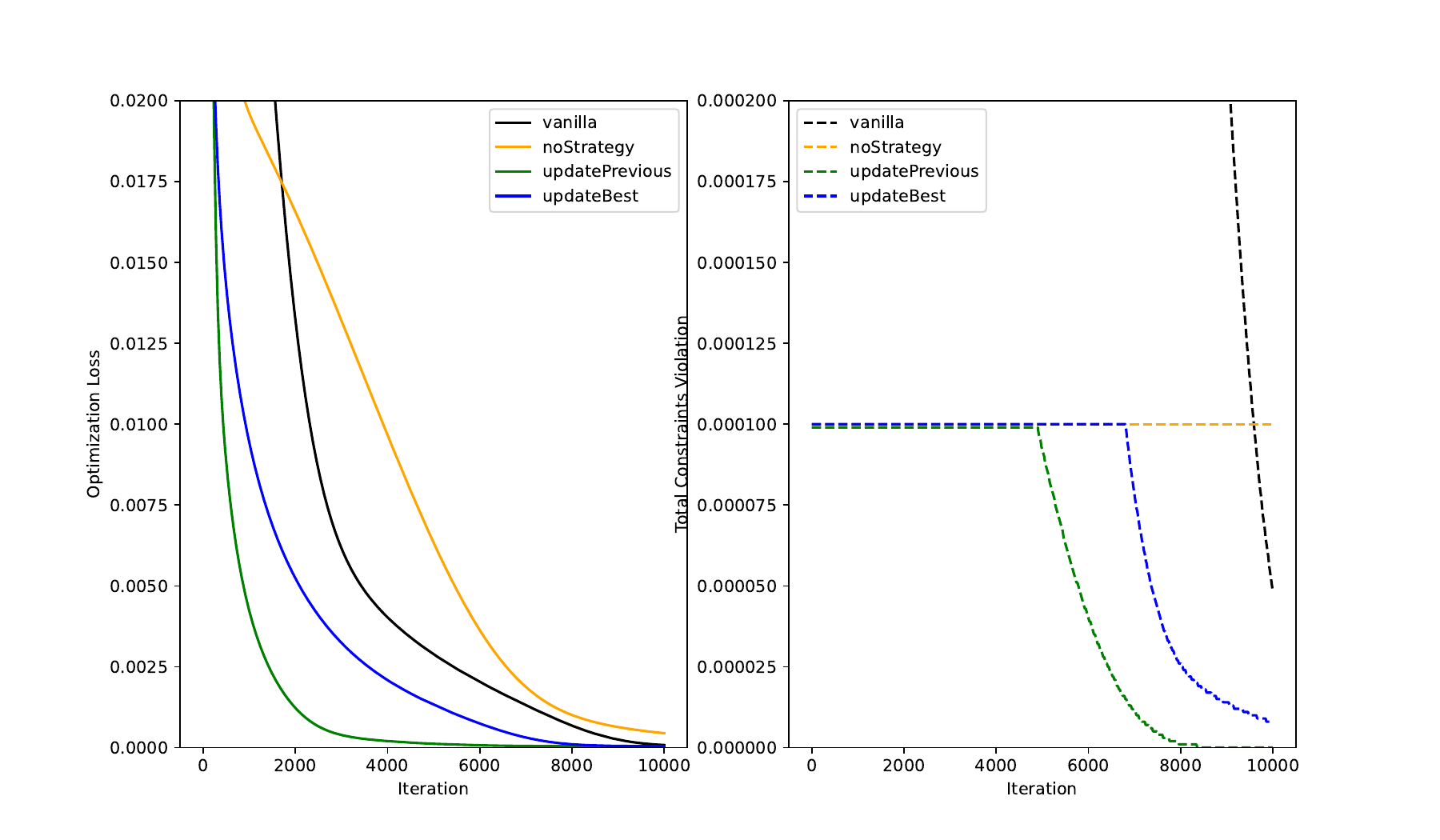} 
        \caption{1E-4}
    \end{subfigure}
    \hskip -2ex
        \begin{subfigure}[h]{0.5\textwidth}
        \centering
        \includegraphics[width=\textwidth]{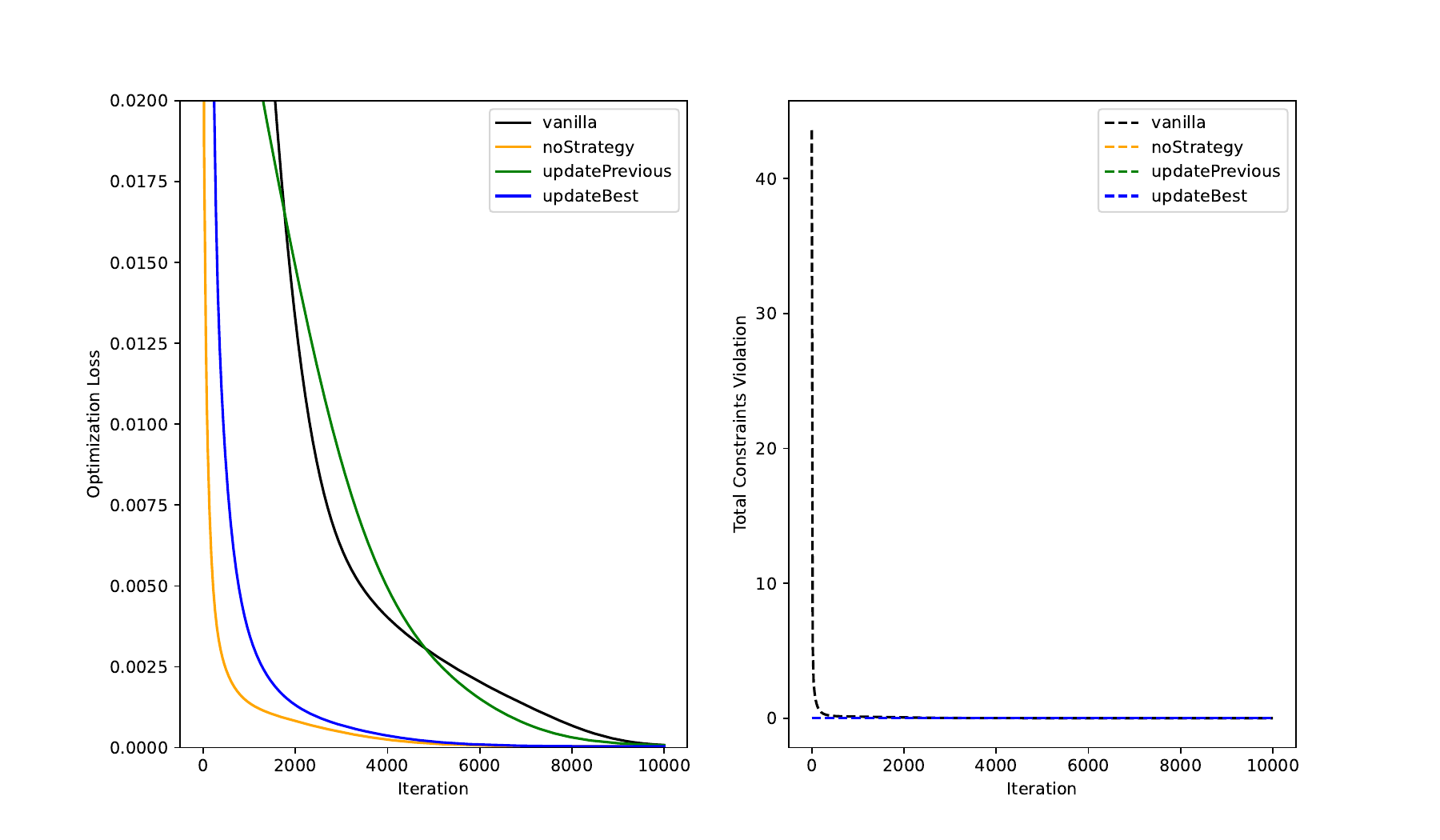}
        \caption{1E-6}
    \end{subfigure}
    \hskip -2ex
    \begin{subfigure}[h]{0.5\textwidth}
        \centering
        \includegraphics[width=\textwidth]{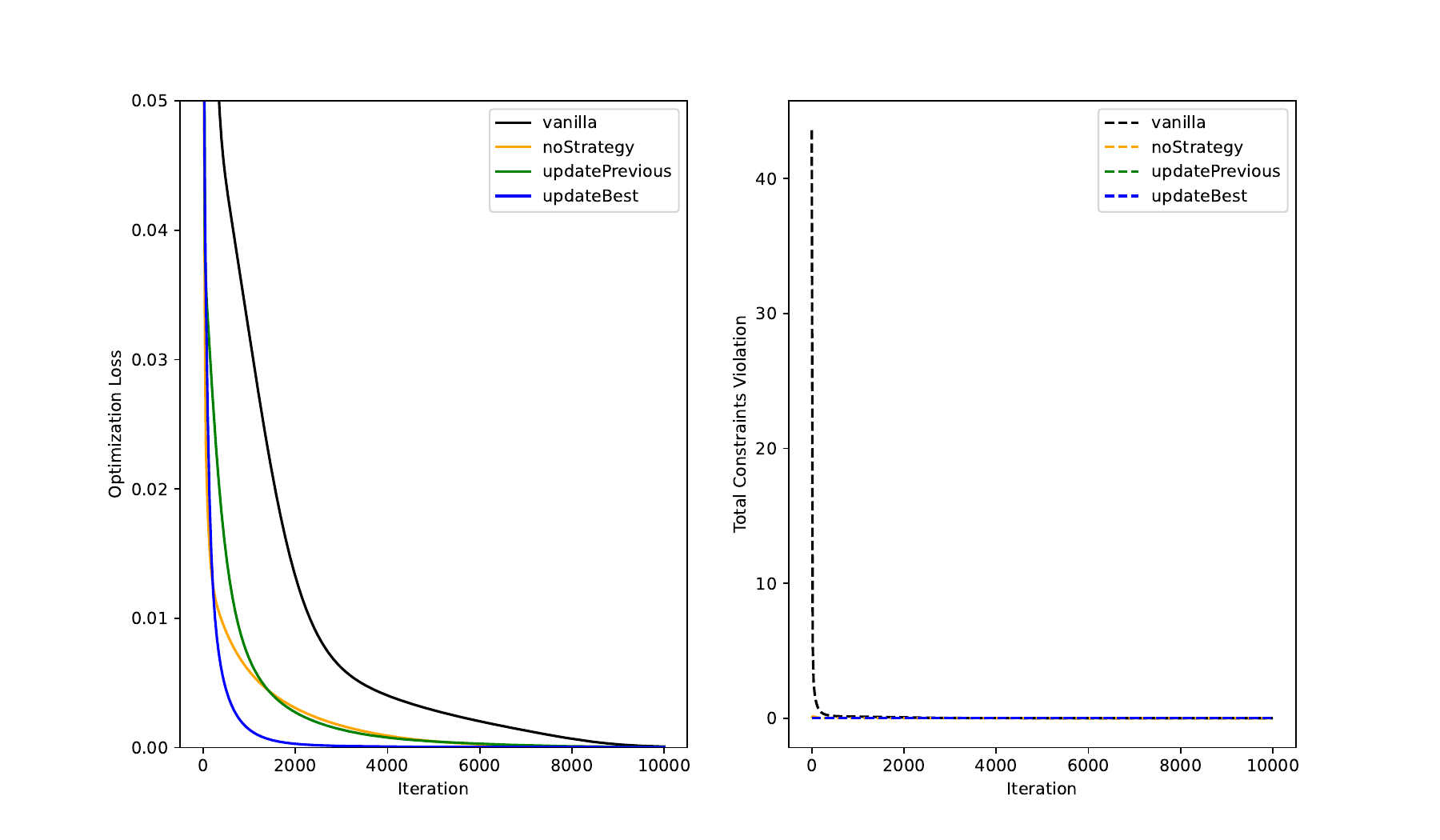}
        \caption{1E-8}
    \end{subfigure}
    \caption{Plots of loss (left) and constraints violation (right), during admissibility stage, for the various tolerance values during training of the models used in experiments 2.1 and 3.1.}
    \label{fig:fewerCR}
\end{figure}

From Figure \ref{fig:moreCR}, training with an abundant training set does not promote significant changes to the loss landscapes with the two-stage methods again offering best performance in terms of both training loss and total constraints violation. Notice however that for lower tolerance values $tol=$1E-2 and $tol=$1E-4, training with an abundant dataset seemed to cause harm to the performance of the \emph{updatePrevious} strategy resulting in slower convergence and even not being able to improve the feasibility, which was possible with the sparser (Figure \ref{fig:fewerCR}) and ``normal" (Figure \ref{fig:normalCR}) datasets.

\begin{figure}[]
\begin{subfigure}[h]{0.5\textwidth}
        \centering
        \includegraphics[width=\textwidth]{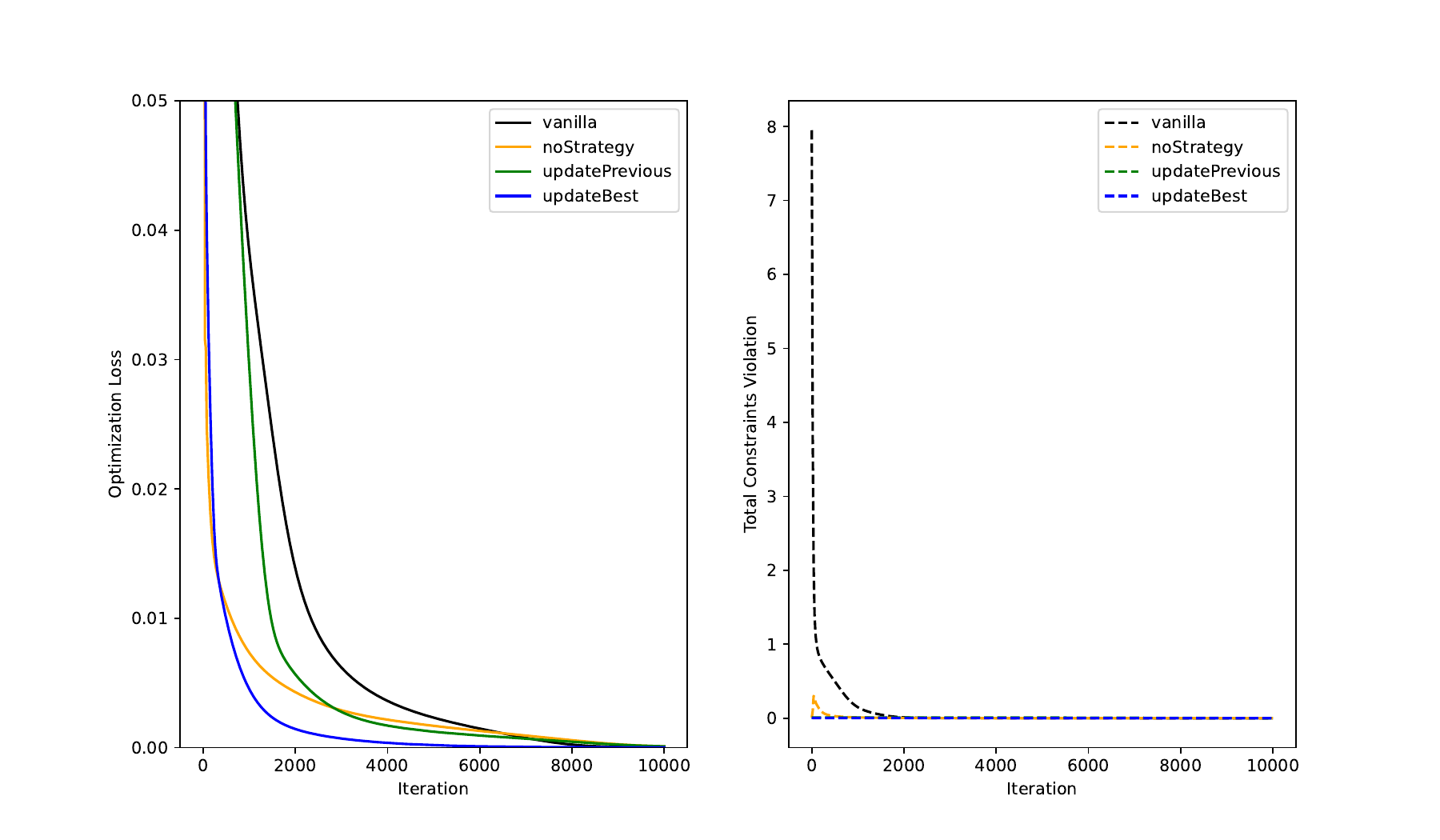} 
        \caption{1E-2}
    \end{subfigure}
    \hskip -2ex
    \begin{subfigure}[h]{0.5\textwidth}
        \centering
        \includegraphics[width=\textwidth]{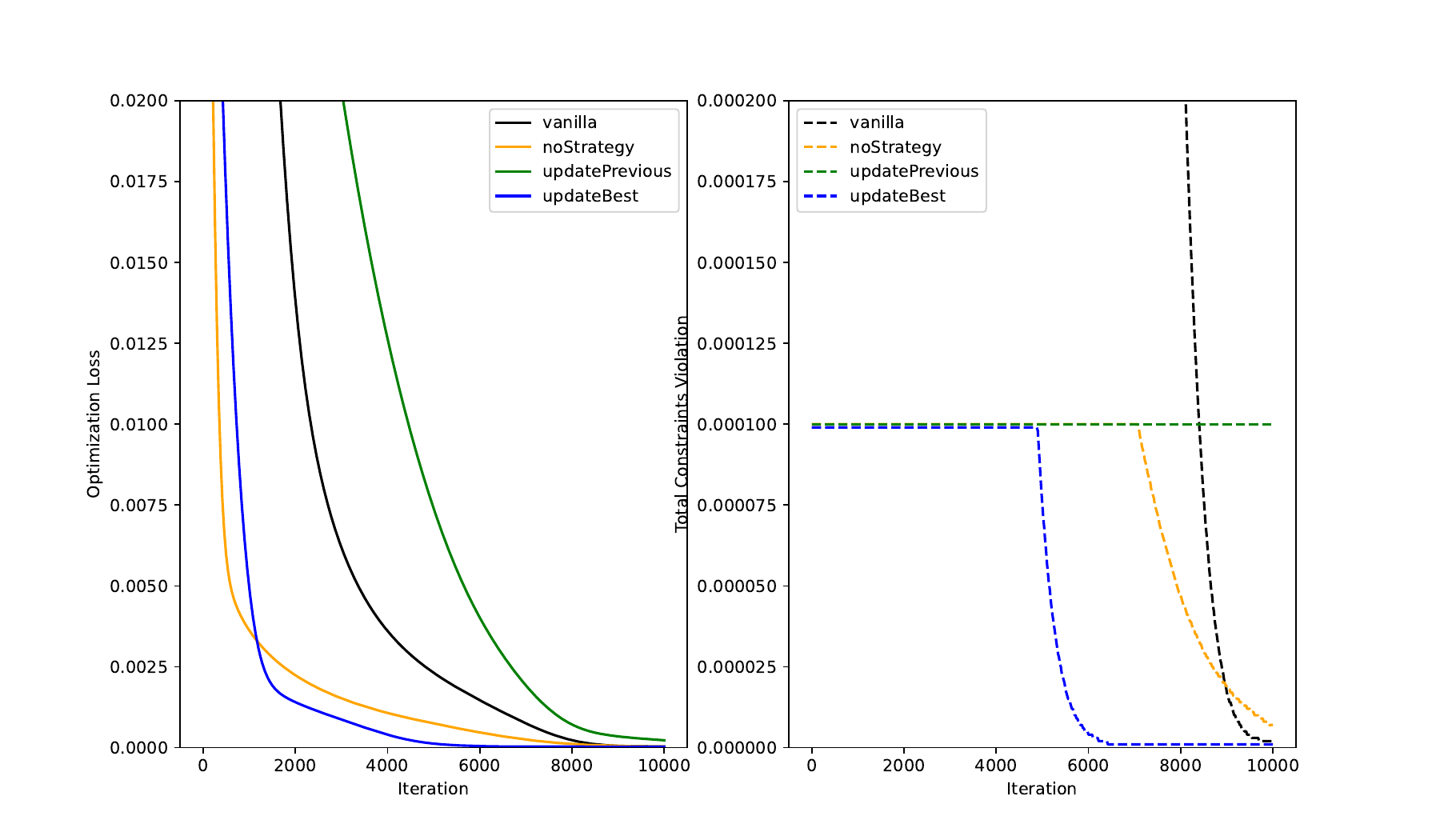} 
        \caption{1E-4}
    \end{subfigure}
    \hskip -2ex
            \begin{subfigure}[h]{0.5\textwidth}
        \centering
        \includegraphics[width=\textwidth]{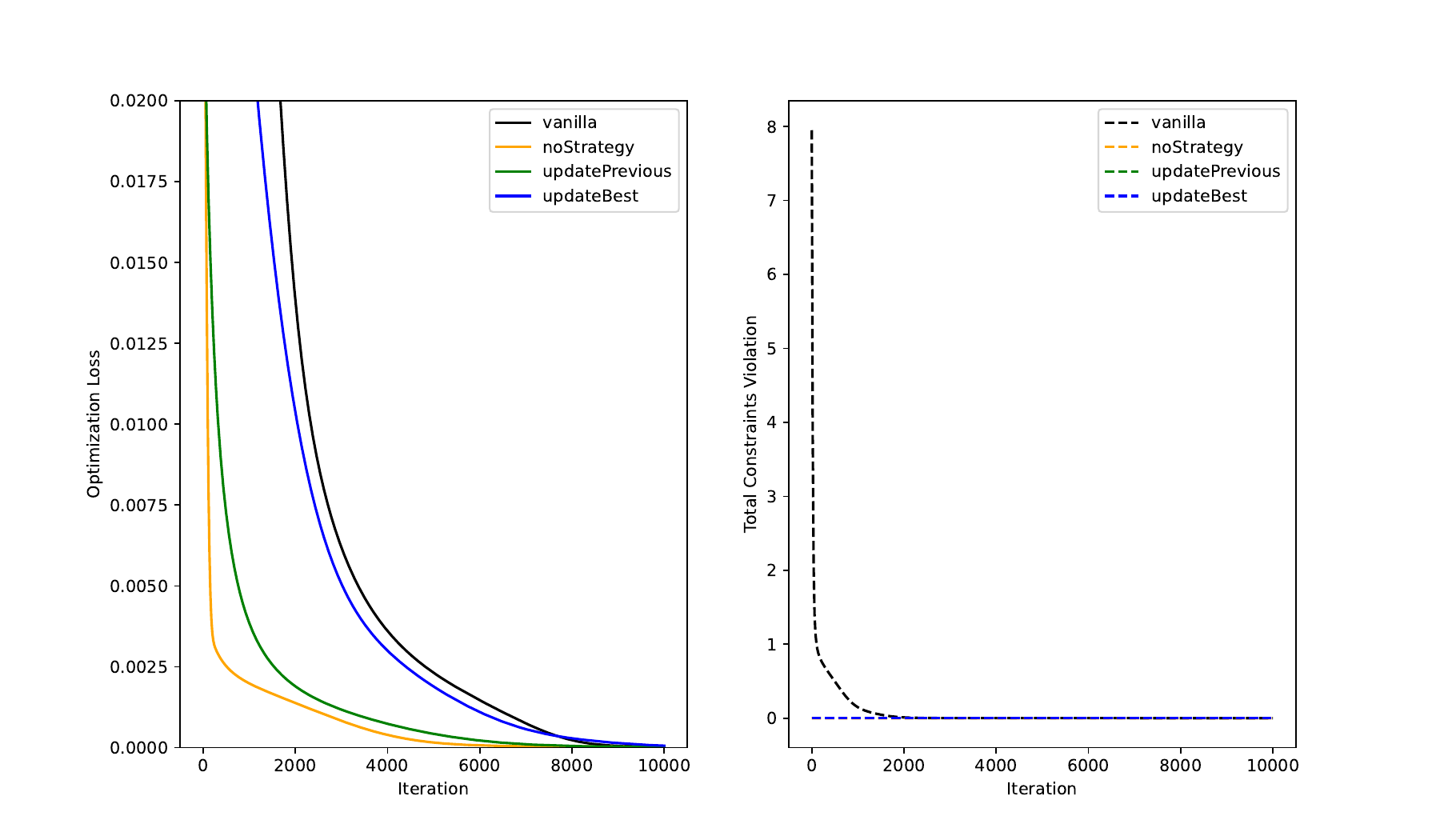}
        \caption{1E-6}
    \end{subfigure}
    \hskip -2ex
    \begin{subfigure}[h]{0.5\textwidth}
        \centering
        \includegraphics[width=\textwidth]{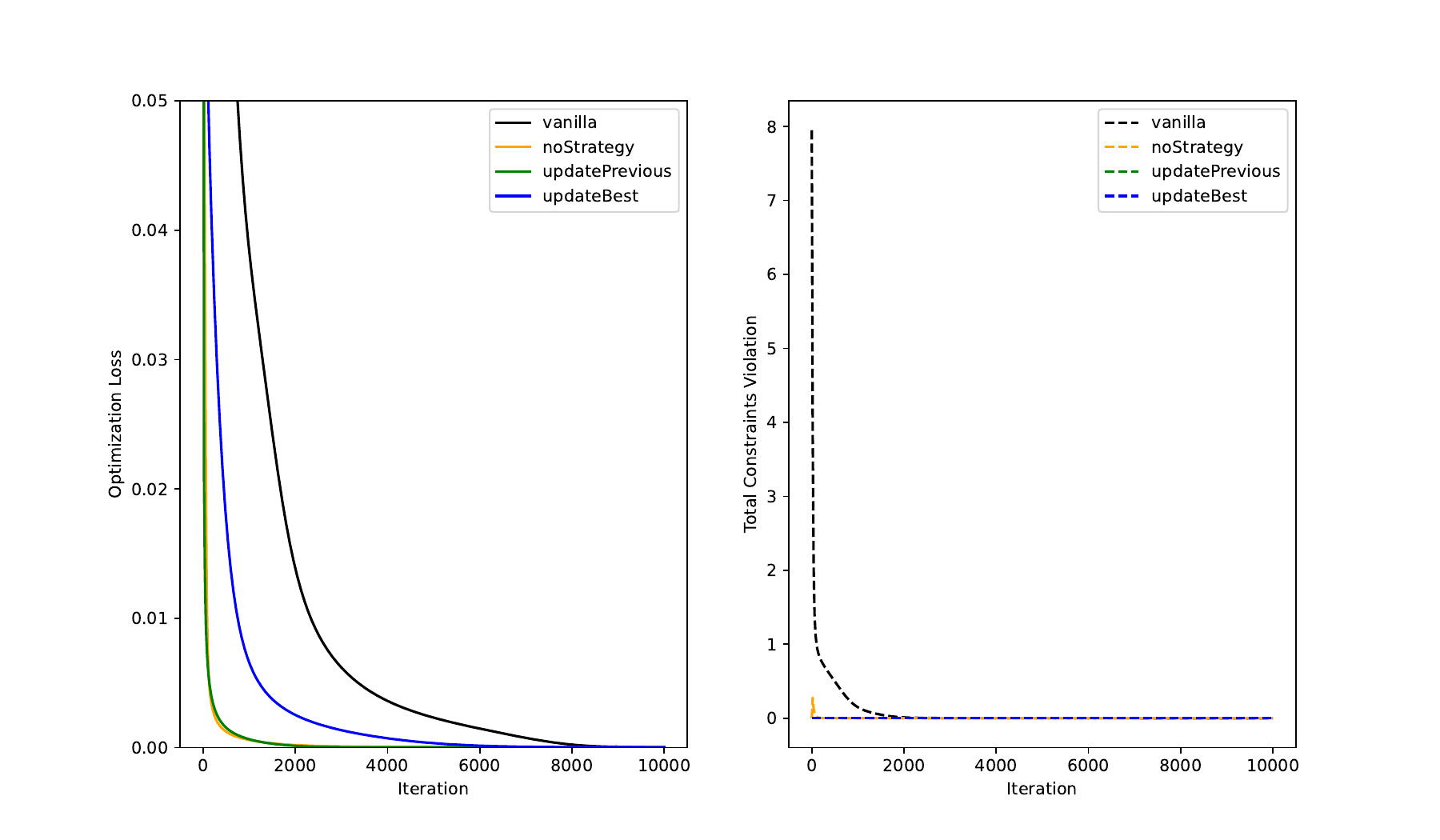}
        \caption{1E-8}
    \end{subfigure}
    \caption{Plots of loss (left) and constraints violation (right), during admissibility stage, for the various tolerance values during training of the models used in experiments 2.2 and 3.2.}
    \label{fig:moreCR}
\end{figure}

Combining the insights from Figures \ref{fig:normalCR}-\ref{fig:moreCR} with the numerical results in Table \ref{tab:performanceCR}, it is evident that, similar to the observations with the WPG dataset, the proposed two-stage method consistently outperforms the \emph{vanilla} Neural ODE, with lower training loss and total constraints violation values. It is noteworthy that, when training with the ``normal" training set, the training loss dynamics of the \emph{vanilla} Neural ODE may suggest improved performance. However, this is contradicted by the testing results in Table \ref{tab:performanceCR}, revealing a lack of robustness and generalization.

The size of the training dataset does not impact the performance of models resulting from the proposed two-stage method. Conversely, the \emph{vanilla} Neural ODE experiences deterioration when trained with a sparser dataset, as it can be seen in Figure \ref{fig:fewerCR}.

In general, the two-stage method with the \emph{preference point} strategy yields optimal performance in generating models. 
Note that, in certain experiments, the \emph{noStrategy} occasionally outperforms other strategies. 
Nevertheless, the reliability of constraint satisfaction is not guaranteed, unlike the more secure outcomes produced by the \emph{updatePrevious} and \emph{updateBest} strategies.

Furthermore, there is no clear relationship between tolerance values for the admissibility stage stopping criteria and testing outcomes. However, setting $tol=$1E-4 appears to be the best all rounder.

\section{Conclusion}
\label{sec:conclusion}

This paper is a follow-up of our preliminary work \cite{coelho2023prior}, where we introduced a two-stage training method for Neural ODEs aimed at explicitly incorporating prior knowledge constraints into the model.

The proposed two-stage method rewrites constrained optimization problems as two unconstrained sub-problems, solving them sequentially during the Neural ODE optimization process. In the first stage, the loss function is defined by the total constraints violations, to find a feasible solution of the original constrained problem. Subsequently, the second stage starts with the solution from the first stage and optimizes a loss function given by the original loss function. To keep the optimization process inside the feasible region during the second stage, a \emph{preference point} strategy, featuring two variations, is proposed. This strategy rejects any point that is infeasible or does not improve admissibility, proceeding with either the point with the best admissibility value or the previous iteration point.

The proposed two-stage training method offers several benefits in modeling constrained systems with Neural ODEs. By avoiding the use of penalty parameters and incorporating prior knowledge through separate training stages, the method ensures constraint satisfaction while minimizing the error between predictions and ground-truth. This approach enhances the interpretability of Neural ODE models and yields robust results, as demonstrated by various numerical experiments.

The decoupling of constraints from the architecture renders the proposed method a flexible framework for modeling constrained systems with Neural ODEs. This flexibility allows for the integration of various types of constraints, allowing for a wide range of applications. 

Furthermore, the incorporation of constraints into the modeling process offers an additional advantage of reducing the required amount of training data. By explicitly integrating known constraints into the model, it becomes feasible to leverage existing knowledge to guide the learning process. This constraint-guided approach allows for more efficient and effective training, as the model is steered towards solutions that satisfy the known constraints. As a result, the need for an extensive dataset to capture all aspects of the system's behavior is mitigated, leading to improved model performance and reduced data acquisition efforts. Thus, incorporation of constraints not only ensures compliance with critical system properties but also contributes to optimizing the training process by reducing data amount requirements.

While our work introduces and emphasizes the two-stage method in the context of Neural ODEs, it is important to underscore the flexibility of this approach, extending its applicability to any NN architecture.

\bmhead{Acknowledgments}

The authors acknowledge the funding by Fundação para a Ciência e Tecnologia (Portuguese Foundation for Science and Technology) through CMAT projects UIDB/00013/2020 and UIDP/00013/2020 and the funding by FCT and Google Cloud partnership through projects CPCA-IAC/AV/589164/2023 and CPCA-IAC/AF/589140/2023.

\noindent C. Coelho would like to thank FCT the funding through the scholarship with reference 2021.05201.BD.

This work is also financially supported by national funds through the FCT/MCTES (PIDDAC), under the project 2022.06672.PTDC - iMAD - Improving the Modelling of Anomalous Diffusion and Viscoelasticity: solutions to industrial problems.

\bibliography{library}


\begin{thebibliography}{13}
\ifx \bisbn   \undefined \def \bisbn  #1{ISBN #1}\fi
\ifx \binits  \undefined \def \binits#1{#1}\fi
\ifx \bauthor  \undefined \def \bauthor#1{#1}\fi
\ifx \batitle  \undefined \def \batitle#1{#1}\fi
\ifx \bjtitle  \undefined \def \bjtitle#1{#1}\fi
\ifx \bvolume  \undefined \def \bvolume#1{\textbf{#1}}\fi
\ifx \byear  \undefined \def \byear#1{#1}\fi
\ifx \bissue  \undefined \def \bissue#1{#1}\fi
\ifx \bfpage  \undefined \def \bfpage#1{#1}\fi
\ifx \blpage  \undefined \def \blpage #1{#1}\fi
\ifx \burl  \undefined \def \burl#1{\textsf{#1}}\fi
\ifx \doiurl  \undefined \def \doiurl#1{\url{https://doi.org/#1}}\fi
\ifx \betal  \undefined \def \betal{\textit{et al.}}\fi
\ifx \binstitute  \undefined \def \binstitute#1{#1}\fi
\ifx \binstitutionaled  \undefined \def \binstitutionaled#1{#1}\fi
\ifx \bctitle  \undefined \def \bctitle#1{#1}\fi
\ifx \beditor  \undefined \def \beditor#1{#1}\fi
\ifx \bpublisher  \undefined \def \bpublisher#1{#1}\fi
\ifx \bbtitle  \undefined \def \bbtitle#1{#1}\fi
\ifx \bedition  \undefined \def \bedition#1{#1}\fi
\ifx \bseriesno  \undefined \def \bseriesno#1{#1}\fi
\ifx \blocation  \undefined \def \blocation#1{#1}\fi
\ifx \bsertitle  \undefined \def \bsertitle#1{#1}\fi
\ifx \bsnm \undefined \def \bsnm#1{#1}\fi
\ifx \bsuffix \undefined \def \bsuffix#1{#1}\fi
\ifx \bparticle \undefined \def \bparticle#1{#1}\fi
\ifx \barticle \undefined \def \barticle#1{#1}\fi
\bibcommenthead
\ifx \bconfdate \undefined \def \bconfdate #1{#1}\fi
\ifx \botherref \undefined \def \botherref #1{#1}\fi
\ifx \url \undefined \def \url#1{\textsf{#1}}\fi
\ifx \bchapter \undefined \def \bchapter#1{#1}\fi
\ifx \bbook \undefined \def \bbook#1{#1}\fi
\ifx \bcomment \undefined \def \bcomment#1{#1}\fi
\ifx \oauthor \undefined \def \oauthor#1{#1}\fi
\ifx \citeauthoryear \undefined \def \citeauthoryear#1{#1}\fi
\ifx \endbibitem  \undefined \def \endbibitem {}\fi
\ifx \bconflocation  \undefined \def \bconflocation#1{#1}\fi
\ifx \arxivurl  \undefined \def \arxivurl#1{\textsf{#1}}\fi
\csname PreBibitemsHook\endcsname

\bibitem[\protect\citeauthoryear{Chen
  et~al.}{2018}]{chenNeuralOrdinaryDifferential2019a}
\begin{botherref}
\oauthor{\bsnm{Chen}, \binits{R.T.}},
\oauthor{\bsnm{Rubanova}, \binits{Y.}},
\oauthor{\bsnm{Bettencourt}, \binits{J.}},
\oauthor{\bsnm{Duvenaud}, \binits{D.K.}}:
Neural ordinary differential equations.
Advances in neural information processing systems
\textbf{31}
(2018)
\end{botherref}
\endbibitem

\bibitem[\protect\citeauthoryear{Xing
  et~al.}{2022}]{xingContinuousGlucoseMonitoring2022}
\begin{bchapter}
\bauthor{\bsnm{Xing}, \binits{Y.}},
\bauthor{\bsnm{Ye}, \binits{H.}},
\bauthor{\bsnm{Zhang}, \binits{X.}},
\bauthor{\bsnm{Cao}, \binits{W.}},
\bauthor{\bsnm{Zheng}, \binits{S.}},
\bauthor{\bsnm{Bian}, \binits{J.}},
\bauthor{\bsnm{Guo}, \binits{Y.}}:
\bctitle{A continuous glucose monitoring measurements forecasting approach via
  sporadic blood glucose monitoring}.
In: \bbtitle{2022 {{IEEE International Conference}} on {{Bioinformatics}} and
  {{Biomedicine}} ({{BIBM}})},
pp. \bfpage{860}--\blpage{863}
(\byear{2022}).
\doiurl{10.1109/BIBM55620.2022.9995522}
\end{bchapter}
\endbibitem

\bibitem[\protect\citeauthoryear{Su
  et~al.}{2022}]{suKineticsParameterOptimization2022}
\begin{botherref}
\oauthor{\bsnm{Su}, \binits{X.}},
\oauthor{\bsnm{Ji}, \binits{W.}},
\oauthor{\bsnm{An}, \binits{J.}},
\oauthor{\bsnm{Ren}, \binits{Z.}},
\oauthor{\bsnm{Deng}, \binits{S.}},
\oauthor{\bsnm{Law}, \binits{C.K.}}:
Kinetics {{Parameter Optimization}} via {{Neural Ordinary Differential
  Equations}}
(arXiv:2209.01862)
(2022)
{\href{https://arxiv.org/abs/2209.01862}{{arXiv:2209.01862}}}
{[physics]}
\end{botherref}
\endbibitem

\bibitem[\protect\citeauthoryear{Nocedal and
  Wright}{2006}]{nocedalNumericalOptimization2006}
\begin{bbook}
\bauthor{\bsnm{Nocedal}, \binits{J.}},
\bauthor{\bsnm{Wright}, \binits{S.J.}}:
\bbtitle{Numerical Optimization},
\bedition{2}nd edn.
\bsertitle{Springer Series in Operation Research and Financial Engineering}.
\bpublisher{{Springer}},
\blocation{{New York, NY}}
(\byear{2006})
\end{bbook}
\endbibitem

\bibitem[\protect\citeauthoryear{Coelho et~al.}{2023}]{coelho2023prior}
\begin{bchapter}
\bauthor{\bsnm{Coelho}, \binits{C.}},
\bauthor{\bsnm{Costa}, \binits{M.F.P.}},
\bauthor{\bsnm{Ferr{\'{a}}s}, \binits{L.L.}}:
\bctitle{Prior knowledge meets neural odes: a two-stage training method for
  improved explainability}.
In: \beditor{\bsnm{Maughan}, \binits{K.}},
\beditor{\bsnm{Liu}, \binits{R.}},
\beditor{\bsnm{Burns}, \binits{T.F.}} (eds.)
\bbtitle{The First Tiny Papers Track at {ICLR} 2023, Tiny Papers @ {ICLR} 2023,
  Kigali, Rwanda, May 5, 2023}
(\byear{2023}).
\burl{https://openreview.net/pdf?id=p7sHcNt\_tqo}
\end{bchapter}
\endbibitem

\bibitem[\protect\citeauthoryear{Raissi
  et~al.}{2017}]{raissiPhysicsInformedDeep2017a}
\begin{botherref}
\oauthor{\bsnm{Raissi}, \binits{M.}},
\oauthor{\bsnm{Perdikaris}, \binits{P.}},
\oauthor{\bsnm{Karniadakis}, \binits{G.E.}}:
Physics {{Informed Deep Learning}} ({{Part I}}): {{Data-driven Solutions}} of
  {{Nonlinear Partial Differential Equations}}
(arXiv:1711.10561)
(2017)
\doiurl{10.48550/arXiv.1711.10561}
{\href{https://arxiv.org/abs/1711.10561}{{arXiv:1711.10561}}}
{[cs, math, stat]}
\end{botherref}
\endbibitem

\bibitem[\protect\citeauthoryear{Roehrl
  et~al.}{2020}]{roehrlModelingSystemDynamics2020}
\begin{barticle}
\bauthor{\bsnm{Roehrl}, \binits{M.A.}},
\bauthor{\bsnm{Runkler}, \binits{T.A.}},
\bauthor{\bsnm{Brandtstetter}, \binits{V.}},
\bauthor{\bsnm{Tokic}, \binits{M.}},
\bauthor{\bsnm{Obermayer}, \binits{S.}}:
\batitle{Modeling {{System Dynamics}} with {{Physics-Informed Neural Networks
  Based}} on {{Lagrangian Mechanics}}}.
\bjtitle{IFAC-PapersOnLine}
\bvolume{53}(\bissue{2}),
\bfpage{9195}--\blpage{9200}
(\byear{2020})
\doiurl{10.1016/j.ifacol.2020.12.2182}
{\href{https://arxiv.org/abs/2005.14617}{{arxiv:2005.14617}}}
{[cs, stat]}.
\bcomment{Comment: Accepted for publication at the 21st IFAC World Congress
  2020}
\end{barticle}
\endbibitem

\bibitem[\protect\citeauthoryear{Zhong
  et~al.}{2020}]{zhongDissipativeSymODENEncoding2020}
\begin{botherref}
\oauthor{\bsnm{Zhong}, \binits{Y.D.}},
\oauthor{\bsnm{Dey}, \binits{B.}},
\oauthor{\bsnm{Chakraborty}, \binits{A.}}:
Dissipative {{SymODEN}}: {{Encoding Hamiltonian Dynamics}} with {{Dissipation}}
  and {{Control}} into {{Deep Learning}}.
{arXiv}.
Comment: Published at ICLR 2020 Workshop on Integration of Deep Neural Models
  and Differential Equations (DeepDiffEq)
(2020)
\end{botherref}
\endbibitem

\bibitem[\protect\citeauthoryear{Tuor
  et~al.}{2020}]{tuorConstrainedNeuralOrdinary2020}
\begin{botherref}
\oauthor{\bsnm{Tuor}, \binits{A.}},
\oauthor{\bsnm{Drgona}, \binits{J.}},
\oauthor{\bsnm{Vrabie}, \binits{D.}}:
Constrained neural ordinary differential equations with stability guarantees.
arXiv preprint arXiv:2004.10883
(2020)
\end{botherref}
\endbibitem

\bibitem[\protect\citeauthoryear{Lim and
  Kasim}{2022}]{limUnifyingPhysicalSystems2022}
\begin{barticle}
\bauthor{\bsnm{Lim}, \binits{Y.H.}},
\bauthor{\bsnm{Kasim}, \binits{M.F.}}:
\batitle{Unifying physical systems' inductive biases in neural {{ODE}} using
  dynamics constraints}.
\bjtitle{Trans. Mach. Learn. Res.}
(\byear{2022})
\doiurl{10.48550/arXiv.2208.02632}
\end{barticle}
\endbibitem

\bibitem[\protect\citeauthoryear{Coelho et~al.}{2023a}]{coelho_population_2023}
\begin{botherref}
\oauthor{\bsnm{Coelho}, \binits{C.}},
\oauthor{\bsnm{Costa}, \binits{M.F.P.}},
\oauthor{\bsnm{Ferrás}, \binits{L.L.}}:
World Population Growth.
Kaggle
(2023).
\doiurl{10.34740/KAGGLE/DS/3010437}
\end{botherref}
\endbibitem

\bibitem[\protect\citeauthoryear{Coelho et~al.}{2023b}]{coelho_chemical_2023}
\begin{botherref}
\oauthor{\bsnm{Coelho}, \binits{C.}},
\oauthor{\bsnm{Costa}, \binits{M.F.P.}},
\oauthor{\bsnm{Ferrás}, \binits{L.L.}}:
Synthetic Chemical Reaction.
Kaggle
(2023).
\doiurl{10.34740/KAGGLE/DS/3010478}
\end{botherref}
\endbibitem

\bibitem[\protect\citeauthoryear{Rodrigues and
  Hauser}{2014}]{rodriguesModeloLogisticoVerhulst2014}
\begin{bchapter}
\bauthor{\bsnm{Rodrigues}, \binits{D.D.S.}},
\bauthor{\bsnm{Hauser}, \binits{E.B.}}:
\bctitle{{Modelo Log\'istico de Verhulst e M\'etodos Num\'ericos na An\'alise
  do Censo Populacional Mundial}}.
In: \bbtitle{{CMAC Sul \textendash{} Congresso de Matem\'atica Aplicada e
  Computacional}}
(\byear{2014}).
\doiurl{10.5540/03.2014.002.01.0068}
\end{bchapter}
\endbibitem

\end{thebibliography}

\newpage    

\appendix

\section{Developed Datasets} \label{app:datasets}

To evaluate the effectiveness of the proposed two-stage method, we specifically created two datasets of constrained systems to be used in \cite{coelho2023prior}: World Population Growth (WPG) \cite{coelho_population_2023} and Chemical Reaction (CR) \cite{coelho_chemical_2023}. These datasets were synthetically generated to represent real-world scenarios and challenges with known constraints.

In this paper, we provide a comprehensive description and detailed information regarding the creation of these datasets. The datasets were generated by implementations in \emph{Python} using \emph{Pytorch} and \emph{Torchdiffeq}.

\paragraph{World Population Growth} 

The WPG dataset simulates the growth of the world population over time. The dynamics of this system follow an exponential growth pattern until the population reaches its carrying capacity. The carrying capacity represents the maximum number of people that the system can sustain, considering limited resources.

The exponential growth in the WPG dataset reflects the natural tendency of populations to multiply rapidly when resources are abundant. However, as the population approaches the carrying capacity, resource limitations and competition start to constrain the growth rate. This results in a slowdown of population growth until it stabilizes around the carrying capacity. We considered the Verhulst logistic model as an IVP as described in \cite{rodriguesModeloLogisticoVerhulst2014},

\begin{equation}
	\begin{cases}
		\dfrac{dP(t)}{dt} = rP(t) \left( 1 - \dfrac{P(t)}{K} \right) \\
		P(0) = P_0 = 2.518629,
	\end{cases}
\end{equation}

\noindent where $P(t)$ is the world population, measured in billions of people ($10^9$ individuals), $r=0.026$ is the rate of growth and $K<12$ is the carrying capacity.

In the WPG dataset, two features are included: the time-step $t_n$ and the population $P$. To generate the dataset, we solved the IVP using the Runge-Kutta method of order 5 of Dormand-Prince-Shampine with varying time intervals and sampling frequencies to accommodate the three different experiments introduced in \cite{coelho2023prior}, experiments 1.0, 2.0 and 3.0. In addition, for this paper, we introduced four new training and testing sets: experiments 2.1, 2.2, 3.1, and 3.2. These new experiments aim to evaluate the influence of the training dataset size on the performance of the two-stage training method. The details of the available data for each experiment are as follows:

\begin{itemize}
	\item \textbf{Experiment 1.0 - Reconstruction:}  This experiment involves training and testing sets with 200 points equally spaced in the time interval $(0,300)$;
	\item \textbf{Experiment 2.0 - Extrapolation:} The training set consists of 200 points equally spaced in the time interval $(0,300)$, while the testing set includes 200 equally spaced points in the extended time interval $(0,400)$; 
	\item \textbf{Experiment 2.1 - Extrapolation with sparser training set:} This experiment is a variation of Experiment 2.0, where the training set includes only 100 equally spaced points in the time interval $(0,300)$;
	\item \textbf{Experiment 2.2 - Extrapolation with abundant training set:} This experiment is a variation of Experiment 2.0, where the training set includes 300 equally spaced points in the time interval $(0,300)$;
	\item \textbf{Experiment 3.0 - Completion:} The training set consists of 200 points equally spaced in the time interval $(0,300)$, while the testing set includes 300 equally spaced points in the same time interval $(0,300)$; 
	\item \textbf{Experiment 3.1 - Completion with sparser training set:} This experiment is a variation of Experiment 3.0, where the training set includes only 100 equally spaced points in the time interval $(0,300)$;
	\item \textbf{Experiment 3.2 - Completion with abundant training set:} This experiment is a variation of Experiment 3.0, where the training set includes 300 equally spaced points in the time interval $(0,300)$;
	
\end{itemize}

By capturing the dynamics of population growth and the concept of carrying capacity, the WPG dataset provides a realistic representation of population dynamics and the constraints imposed by limited resources. 

The WPG dataset with the training and testing sets to conduct every experiment is publicly available on \emph{Kaggle} \cite{coelho_population_2023}.

\paragraph{Chemical Reaction}

The CR dataset simulates a chemical reaction involving four fictitious chemical components: A, B, C, and D. At the initial time, there is a presence of $1g$ of species A and B. As time progresses, these two species undergo a chemical reaction, resulting in the formation of species C and D:

$$A(t) + B(t) \rightarrow C(t) + D(t).$$

The dynamics of this system are governed by a set of reaction equations that describe the rates of change of each species as a function of time. These equations capture the transformation of species A and B into species C and D over time. During the reaction, the masses of species A and B gradually decrease, while the masses of species C and D increase accordingly. This reflects the consumption of species A and B and the production of species C and D as the reaction progresses. As the reaction proceeds, the system reaches an equilibrium state where the masses of all species remain constant. The reaction equations arbitrarily chosen by us are as follows:

\noindent\begin{minipage}{.5\linewidth}
\begin{equation*}
	A(t) + B(t) \rightarrow C(t)
	\begin{cases}
		\frac{dm_A(t)}{dt} = -k_1  m_A(t)  m_B(t) \\
		\frac{dm_B(t)}{dt} = -k_1  m_A(t)  m_B(t) \\
		\frac{dm_C(t)}{dt} = k_1  m_A(t)  m_B(t)
		
	\end{cases}
\end{equation*}
\end{minipage}%
\begin{minipage}{.5\linewidth}
\begin{equation*}
	C(t) \rightarrow B(t) + D(t) 
	\begin{cases}
		\frac{dm_C(t)}{dt} = -k_2  m_C(t) \\
		\frac{dm_B(t)}{dt} = k_2  m_C(t) \\
		\frac{dm_D(t)}{dt} = k_2  m_C(t)
	\end{cases}
\end{equation*}
\end{minipage}

\vspace{0.2cm}
\noindent where $m_A(t), m_B(t), m_C(t) \text{ and } m_D(t)$ are the masses of species A, B, C and D, respectively, at time $t$ and $k_1=0.1$ and $k_2=0.05$ determine the rates of the respective reactions.

It is important to note that in our simulation, we assumed ideal conditions where no mass is lost. This means that at each time step, the conservation of mass must be satisfied. According to this principle, the total mass of the system $m_{total}$, which is the sum of the masses of all chemical species, remains constant throughout the simulation:

$$m_A(t) + m_B(t) + m_C(t) + m_D(t) = m_{total}.$$

In the CR dataset, five features are included: the time $t$ and the masses of species A, $m_A$, B, $m_B$, C, $m_C$ and D, $m_D$. To generate the dataset, the reaction equations were used to simulate the changes in the masses of each species over time by using the Euler's method to solve an IVP in different time intervals and sampling frequencies, to accommodate the three experiments introduced in \cite{coelho2023prior}, experiments 1.0, 2.0 and 3.0. In addition, for this paper, we introduced four new training and testing sets: experiments 2.1, 2.2, 3.1, and 3.2. These new experiments aim to evaluate the influence of the training dataset size on the performance of the two-stage training method. The details of the available data for each experiment are as follows:

\begin{itemize}
	\item \textbf{Experiment 1.0 - Reconstruction:}  This experiment involves training and testing sets with 100 points equally spaced in the time interval $(0,100)$;
	\item \textbf{Experiment 2.0 - Extrapolation:} The training set consists of 100 points equally spaced in the time interval $(0,100)$, while the testing set includes 100 equally spaced points in the extended time interval $(0,200)$; 
	\item \textbf{Experiment 2.1 - Extrapolation with sparser training set:} This experiment is a variation of Experiment 2.0, where the training set includes only 50 equally spaced points in the time interval $(0,100)$;
	\item \textbf{Experiment 2.2 - Extrapolation with abundant training set:} This experiment is a variation of Experiment 2.0, where the training set includes 150 equally spaced points in the time interval $(0,100)$;
	\item \textbf{Experiment 3.0 - Completion:} The training set consists of 100 points equally spaced in the time interval $(0,100)$, while the testing set includes 200 equally spaced points in the same time interval $(0,100)$; 
	\item \textbf{Experiment 3.1 - Completion with sparser training set:} This experiment is a variation of Experiment 3.0, where the training set includes only 50 equally spaced points in the time interval $(0,100)$;
	\item \textbf{Experiment 3.2 - Completion with abundant training set:} This experiment is a variation of Experiment 3.0, where the training set includes 150 equally spaced points in the time interval $(0,100)$;
	
\end{itemize}

By considering these reaction equations and the conservation of mass, we can accurately model the chemical reaction and analyze the dynamics of the system.

The CR dataset with the training and testing sets to conduct every experiment described is publicly available on \emph{Kaggle} \cite{coelho_chemical_2023}.

\section{Experimental Setup} \label{subsec:setup}

\paragraph{World Population Growth}

The modeling of the system given by the WPG dataset by a NN can be formulated as a constrained optimization problem as follows:

\begin{mini}|l|[0]
	{\boldsymbol{\theta} \in \mathbb{R}^{n_\theta}}{\dfrac{1}{N} \sum_{n=1}^N (\boldsymbol{\hat{y}}_{n}(\boldsymbol{\theta}) - \boldsymbol{y}_{n})^2}
    {\label{eq:constrainedWPG}}
    {}
    \addConstraint{\boldsymbol{\hat{y}}_n(\boldsymbol{\theta)}}{ \le 12, \,\,\, n \in 1,\dots,N}.
\end{mini}

\noindent where $\boldsymbol{\hat{y}}_n(\boldsymbol{\theta})$ are the predictions, made by the model, of the population $P$ at each time-step $t_n$ and $\boldsymbol{y}_n$ are the corresponding expected/real values. The loss function is the fit to the training data and the system is characterized by having one inequality constraint per time-step given by the carrying capacity. In our experiments, the Mean Squared Error (MSE) was used as the loss function.

This constrained problem can be rewritten as a sequence of problems as to be used the two-stage method as follows:

\begin{mini}|l|[0]
	{\boldsymbol{\theta} \in \mathbb{R}^{n_\theta}}{\mathcal{L}_{I}(\boldsymbol{\theta}) = \dfrac{1}{N} \sum_{n=1}^N \left(\max(\boldsymbol{\hat{y}}_n(\boldsymbol{\theta})-12, 0)\right)^2}
    {\label{eq:admissibilityWPG}}
    {}
\end{mini}

\begin{mini}|l|[0]
	{\boldsymbol{\theta} \in \mathbb{R}^{n_\theta}}{\mathcal{L}_{II}(\boldsymbol{\theta}) = \dfrac{1}{N} \sum_{n=1}^N (\boldsymbol{\hat{y}}_{n}(\boldsymbol{\theta}) - \boldsymbol{y}_{n})^2}
    {\label{eq:optimizationWPG}}
    {}
\end{mini}

In the admissibility stage, \eqref{eq:admissibilityWPG} is solved while in the optimization stage \eqref{eq:optimizationWPG} is solved. 

\textbf{Remark:} The \emph{vanilla} Neural ODE is trained following the traditional NN optimization process by minimizing the loss function given by \eqref{eq:optimizationWPG}.

The WPG dataset was modeled by \emph{vanilla} Neural ODE and Neural ODE trained with the proposed two-stage method using the same NN architecture and training conditions. The NN has $4$ hidden layers: linear with $50$ neurons; hyperbolic tangent (tanh); linear with $50$ neurons; Exponential Linear Unit (ELU). The input and output layers have $1$ neuron. The Adam optimizer was used with a learning rate of 1E-5, a feasibility tolerance of 1E-4 and a minimum of $20$ iterations for the admissibility stage, and $10000$ iterations for the optimization stage.

To effectively model the WPG dataset and ensure meaningful predictions, it is crucial to respect the carrying capacity constraint imposed by the dataset. Failure to do so may lead to inaccurate and unreliable predictions, undermining the trustworthiness of the model.

\paragraph{Chemical Reaction}

The modeling of the system given by the CR dataset by a NN can be formulated as a constrained optimization problem:

\begin{mini}|l|[0]
	{\boldsymbol{\theta} \in \mathbb{R}^{n_\theta}}{\dfrac{1}{N} \sum_{n=1}^N (\boldsymbol{\hat{y}}_{n}(\boldsymbol{\theta}) - \boldsymbol{y}_{n})^2}
    {\label{eq:constrainedWPG}}
    {}
    \addConstraint{\boldsymbol{1}^\top \boldsymbol{\hat{y}}_n(\boldsymbol{\theta)}}{=m_{total} , \,\,\, n \in 1,\dots,N}.
\end{mini}

\noindent where $\boldsymbol{\hat{y}}_n(\boldsymbol{\theta})$ are the predictions, made by the model, of the masses of species A, B, C and D at each time-step $t_n$ and $\boldsymbol{y}_n$ are the corresponding expected/real values and $constant$ represents the total mass of the system. The loss function is the fit to the training data and the system is characterized by having one equality constraint per time-step given by the conservation of mass law. In our experiments, the MSE was used as the loss function.

This constrained problem can be rewritten as a sequence of problems as to be used the two-stage method as follows:

\begin{mini}|l|[0]
{\boldsymbol{\theta} \in \mathbb{R}^{n_\theta}}{\mathcal{L}_{I}(\boldsymbol{\theta}) = \dfrac{1}{N} \sum_{n=1}^N \left( \boldsymbol{1}^\top \boldsymbol{\hat{y}}_n - m_{total} \right)^2}
    {\label{eq:admissibilityCR}}
    {}
\end{mini}

\begin{mini}|l|[0]
	{\boldsymbol{\theta} \in \mathbb{R}^{n_\theta}}{\mathcal{L}_{II}(\boldsymbol{\theta}) = \dfrac{1}{N} \sum_{n=1}^N (\boldsymbol{\hat{y}}_{n}(\boldsymbol{\theta}) - \boldsymbol{y}_{n})^2}
    {\label{eq:optimizationCR}}
    {}
\end{mini}

In the admissibility stage, \eqref{eq:admissibilityCR} is solved, while in the optimization stage \eqref{eq:optimizationCR} is solved. 

\textbf{Remark:} The \emph{vanilla} Neural ODE is trained following the traditional NN optimization process by minimizing the loss function given by \eqref{eq:optimizationCR}.

The CR dataset was modeled by \emph{vanilla} Neural ODE and Neural ODE trained with the proposed two-stage method using the same NN architecture and training conditions. The NN has $6$ hidden layers: linear with $50$ neurons; tanh; linear with $64$ neurons; ELU; linear with $50$ neurons; tanh. The input and output layers have $4$ neurons. The Adam optimizer was used with a learning rate of 1E-5, a feasibility tolerance of 1E-4 and a minimum of $20$ iterations for the admissibility stage, and $10000$ iterations for the optimization stage. In this experiment, the NN architecture used is more complex than the architecture used to model the WPG dataset given that this problem is more difficult due to its higher dimensionality.

\section{Experimental Results} \label{app:exp}

\subsection{World Population Growth} \label{app:WPG}

The results obtained for the WPG dataset in all experiments for all tolerance values are organized in Table \ref{tab:performanceWPG}. The best performance for each experiment, MSE and tolerance, is highlighted in bold.

\begin{landscape}
\begin{table}[]
\tiny
\addtolength{\tabcolsep}{-5pt}
\centering
\caption{Performance at the seven experiments on the WPG dataset.}\label{tab:performanceWPG}
\begin{tabular}{@{}cccccccccc@{}}
\toprule
\multirow{2}{*}{}    &      & \multicolumn{2}{c}{\multirow{2}{*}{vanilla Neural ODE}}                         & \multicolumn{6}{c}{two-stage Neural ODE}                                                                                                                                                                                                                  \\ \cmidrule(l){5-10} 
                     &      & \multicolumn{2}{c}{}                   & \multicolumn{2}{c}{noStrategy}                                         & \multicolumn{2}{c}{updatePrevious}                                     & \multicolumn{2}{c}{updateBest}                                                                          \\ \midrule
Experiment           & Tol  & $\text{MSE}_{\text{avg}}$ $\pm$ std    & $\text{V}_{\text{avg}}$$\pm$ std       & $\text{MSE}_{\text{avg}}$ $\pm$ std & $\text{V}_{\text{avg}}$$\pm$ std & $\text{MSE}_{\text{avg}}$ $\pm$ std & $\text{V}_{\text{avg}}$$\pm$ std & $\text{MSE}_{\text{avg}}$ $\pm$ std                & $\text{V}_{\text{avg}}$$\pm$ std                   \\ \midrule
\multirow{4}{*}{1.0} & 1E-2 & \multirow{4}{*}{1.59E-2 $\pm$ 1.35E-2} & \multirow{4}{*}{3.02E-2 $\pm$ 1.70E-2} & 8.37E-3 $\pm$ 7.46E-3               & 1.91E-2 $\pm$ 1.27E-2            & \textbf{9.48E-3 $\pm$ 6.12E-3}      & \textbf{2.02E-2 $\pm$ 1.39E-2}   & 1.24E-2 $\pm$ 4.74E-3                              & 2.83E-2 $\pm$ 5.95E-3                              \\
                     & 1E-4 &                                        &                                        & 1.78E-2 $\pm$ 2.22E-2               & 2.61E-2 $\pm$ 2.49E-2            & \textbf{6.37E-3 $\pm$ 4.17E-3}      & \textbf{1.54E-2 $\pm$ 8.96E-3}   & 1.05E-2 $\pm$ 5.50E-3                              & 2.59E-2 $\pm$ 7.86E-3                              \\
                     & 1E-6 &                                        &                                        & 2.92E-2 $\pm$ 2.82E-2               & 3.96E-2 $\pm$ 2.46E-2            & 1.47E-2 $\pm$ 1.05E-2               & 2.78E-2 $\pm$ 1.63E-2            & \multicolumn{1}{l}{\textbf{1.07E-2 $\pm$ 4.12E-3}} & \multicolumn{1}{l}{\textbf{2.36E-2 $\pm$ 9.41E-3}} \\
                     & 1E-8 &                                        &                                        & 1.74E-2 $\pm$ 1.67E-2               & 2.94E-2 $\pm$ 1.80E-2            & \textbf{6.66E-3 $\pm$ 3.43E-3}      & \textbf{1.72E-2 $\pm$ 8.06E-3}   & \multicolumn{1}{l}{3.24E-2 $\pm$ 2.42E-2}          & \multicolumn{1}{l}{4.52E-2 $\pm$ 1.91E-2}          \\ \midrule
2.0                  & 1E-2 & \multirow{4}{*}{6.22E-2 $\pm$ 5.73E-2} & \multirow{4}{*}{1.87E-1 $\pm$ 1.08E-1} & 7.58E-3 $\pm$ 2.99E-3               & 6.84E-2 $\pm$ 1.90E-2            & 3.13E-2 $\pm$ 2.12E-2               & 1.41E-1 $\pm$ 4.04E-2            & \textbf{1.14E-2 $\pm$ 5.99E-3}                     & \textbf{8.42E-2 $\pm$ 2.67E-2}                     \\
                     & 1E-4 &                                        &                                        & 6.54E-2 $\pm$ 8.71E-2               & 1.74E-1 $\pm$ 1.24E-1            & \textbf{1.54E-2 $\pm$ 1.47E-2}      & \textbf{9.07E-2 $\pm$ 4.96E-2}   & 3.13E-2 $\pm$ 1.74E-2                              & 1.45E-1 $\pm$ 3.45E-2                              \\
                     & 1E-6 &                                        &                                        & \textbf{1.30E-2 $\pm$ 3.06E-3}      & \textbf{9.36E-2 $\pm$ 1.06E-2}   & 2.99E-2 $\pm$ 2.63E-2               & 1.28E-1 $\pm$ 6.55E-2            & \multicolumn{1}{l}{2.37E-2 $\pm$ 1.67E-2}          & \multicolumn{1}{l}{1.20E-1 $\pm$ 5.04E-2}          \\
                     & 1E-8 &                                        &                                        & \textbf{1.87E-2 $\pm$ 8.53E-3}      & \textbf{1.13E-1 $\pm$ 2.78E-2}   & 2.51E-2 $\pm$ 1.80E-2               & 1.17E-1 $\pm$ 6.11E-2            & \multicolumn{1}{l}{5.91E-2 $\pm$ 4.20E-2}          & \multicolumn{1}{l}{1.74E-1 $\pm$ 9.22E-2}          \\ \midrule
2.1                  & 1E-2 & \multirow{4}{*}{2.05E-1 $\pm$ 2.11E-1} & \multirow{4}{*}{3.70E-1 $\pm$ 2.55E-1} & 3.10E-2 $\pm$ 2.63E-2               & 1.36E-1 $\pm$ 5.48E-2            & \textbf{1.29E-2 $\pm$ 7.38E-3}      & \textbf{8.81E-2 $\pm$ 2.70E-2}   & 3.29E-2 $\pm$ 2.80E-2                              & 1.38E-1 $\pm$ 6.30E-2                              \\
                     & 1E-4 &                                        &                                        & 1.58E-2 $\pm$ 1.40E-2               & 9.30E-2 $\pm$ 4.63E-2            & \textbf{1.32E-2 $\pm$ 1.62E-2}      & \textbf{7.20E-2 $\pm$ 5.40E-2}   & 1.43E-2 $\pm$ 6.80E-3                              & 9.45E-2 $\pm$ 3.11E-2                              \\
                     & 1E-6 &                                        &                                        & 1.99E-2 $\pm$ 1.82E-2               & 1.07E-1 $\pm$ 4.55E-2            & 2.93E-2 $\pm$ 3.52E-2               & 1.22E-1 $\pm$ 7.40E-2            & \multicolumn{1}{l}{\textbf{8.96E-3 $\pm$ 3.20E-3}} & \multicolumn{1}{l}{\textbf{7.41E-2 $\pm$ 1.99E-2}} \\
                     & 1E-8 &                                        &                                        & 2.37E-2 $\pm$ 1.46E-2               & 1.20E-1 $\pm$ 4.11E-2            & \textbf{1.37E-2 $\pm$ 8.05E-3}      & \textbf{9.41E-2 $\pm$ 3.07E-2}   & \multicolumn{1}{l}{1.49E-2 $\pm$ 7.25E-3}          & \multicolumn{1}{l}{9.59E-2 $\pm$ 3.24E-2}          \\ \midrule
2.2                  & 1E-2 & \multirow{4}{*}{8.12E-2 $\pm$ 6.72E-2} & \multirow{4}{*}{2.24E-1 $\pm$ 1.22E-1} & 2.93E-2 $\pm$ 8.53E-3               & 1.46E-1 $\pm$ 2.03E-2            & \textbf{1.28E-2 $\pm$ 6.49E-3}      & \textbf{9.00E-2 $\pm$ 2.67E-2}   & 2.41E-2 $\pm$ 6.94E-3                              & 1.28E-1 $\pm$ 1.74E-2                              \\
                     & 1E-4 &                                        &                                        & 3.87E-2 $\pm$ 2.01E-2               & 1.61E-1 $\pm$ 4.47E-2            & \textbf{2.21E-2 $\pm$ 1.06E-2}      & \textbf{1.21E-1 $\pm$ 2.72E-2}   & 3.17E-2 $\pm$ 1.26E-2                              & 1.48E-1 $\pm$ 3.24E-2                              \\
                     & 1E-6 &                                        &                                        & 1.75E-2 $\pm$ 1.91E-2               & 9.44E-2 $\pm$ 6.45E-2            & 1.72E-2 $\pm$ 9.15E-3               & 1.07E-1 $\pm$ 2.99E-2            & \multicolumn{1}{l}{\textbf{7.99E-3 $\pm$ 8.46E-3}} & \multicolumn{1}{l}{\textbf{5.98E-2 $\pm$ 4.05E-2}} \\
                     & 1E-8 &                                        &                                        & \textbf{1.38E-2 $\pm$ 3.34E-3}      & \textbf{9.66E-2 $\pm$ 1.32E-2}   & 2.03E-2 $\pm$ 5.66E-3               & 1.20E-1 $\pm$ 1.90E-2            & \multicolumn{1}{l}{1.47E-2 $\pm$ 9.32E-3}          & \multicolumn{1}{l}{9.17E-2 $\pm$ 3.05E-2}          \\ \midrule
3.0                  & 1E-2 & \multirow{4}{*}{6.67E-2 $\pm$ 4.94E-2} & \multirow{4}{*}{1.04E-1 $\pm$ 4.10E-2} & \textbf{1.55E-2 $\pm$ 1.42E-2}      & \textbf{4.15E-2 $\pm$ 2.62E-2}   & 1.95E-2 $\pm$ 1.62E-2               & 4.84E-2 $\pm$ 3.01E-2            & 2.39E-2 $\pm$ 1.86E-2                              & 5.55E-2 $\pm$ 2.27E-2                              \\
                     & 1E-4 &                                        &                                        & \textbf{8.82E-3 $\pm$ 6.07E-3}      & \textbf{2.95E-2 $\pm$ 1.84E-2}   & 2.86E-2 $\pm$ 3.09E-2               & 5.45E-2 $\pm$ 3.58E-2            & 2.01E-2 $\pm$ 1.93E-2                              & 4.46E-2 $\pm$ 3.62E-2                              \\
                     & 1E-6 &                                        &                                        & 1.57E-2 $\pm$ 8.90E-3               & 4.61E-2 $\pm$ 1.85E-2            & \textbf{1.07E-2 $\pm$ 7.78E-3}      & \textbf{3.28E-2 $\pm$ 2.29E-2}   & \multicolumn{1}{l}{2.01E-2 $\pm$ 1.71E-2}          & \multicolumn{1}{l}{5.08E-2 $\pm$ 2.80E-2}          \\
                     & 1E-8 &                                        &                                        & 3.89E-2 $\pm$ 5.04E-2               & 6.91E-2 $\pm$ 5.90E-2            & \textbf{4.24E-3 $\pm$ 2.80E-3}      & \textbf{1.52E-2 $\pm$ 1.18E-2}   & \multicolumn{1}{l}{1.03E-2 $\pm$ 7.24E-3}          & \multicolumn{1}{l}{3.52E-2 $\pm$ 1.85E-2}          \\ \midrule
3.1                  & 1E-2 & \multirow{4}{*}{5.94E-2 $\pm$ 4.86E-2} & \multirow{4}{*}{9.76E-2 $\pm$ 5.75E-2} & \textbf{9.18E-3 $\pm$ 1.08E-2}      & \textbf{2.65E-2 $\pm$ 2.49E-2}   & 2.54E-2 $\pm$ 2.06E-2               & 5.67E-2 $\pm$ 2.79E-2            & 1.56E-2 $\pm$ 7.72E-3                              & 4.78E-2 $\pm$ 1.66E-2                              \\
                     & 1E-4 &                                        &                                        & 1.93E-2 $\pm$ 1.54E-2               & 4.86E-2 $\pm$ 3.00E-2            & 2.31E-2 $\pm$ 9.87E-3               & 5.95E-2 $\pm$ 1.52E-2            & \textbf{1.72E-2 $\pm$ 8.19E-3}                     & \textbf{5.00E-2 $\pm$ 1.61E-2}                     \\
                     & 1E-6 &                                        &                                        & 1.38E-2 $\pm$ 1.22E-2               & 3.60E-2 $\pm$ 2.56E-2            & 1.55E-2 $\pm$ 1.40E-2               & 4.40E-2 $\pm$ 2.29E-2            & \multicolumn{1}{l}{\textbf{6.27E-3 $\pm$ 4.15E-3}} & \multicolumn{1}{l}{\textbf{2.08E-2 $\pm$ 1.31E-2}} \\
                     & 1E-8 &                                        &                                        & 1.60E-2 $\pm$ 4.33E-3               & 4.79E-2 $\pm$ 7.18E-3            & \textbf{1.56E-2 $\pm$ 4.69E-3}      & \textbf{4.80E-2 $\pm$ 1.13E-2}   & \multicolumn{1}{l}{2.22E-2 $\pm$ 2.20E-2}          & \multicolumn{1}{l}{5.13E-2 $\pm$ 2.64E-2}          \\ \midrule
3.2                  & 1E-2 & \multirow{4}{*}{4.84E-2 $\pm$ 3.41E-2} & \multirow{4}{*}{9.87E-2 $\pm$ 5.45E-2} & 5.48E-3 $\pm$ 1.67E-3               & 2.25E-2 $\pm$ 7.84E-3            & \textbf{1.32E-2 $\pm$ 9.11E-3}      & \textbf{4.27E-2 $\pm$ 2.16E-2}   & 4.62E-2 $\pm$ 2.55E-2                              & 8.40E-2 $\pm$ 2.20E-2                              \\
                     & 1E-4 &                                        &                                        & 4.25E-2 $\pm$ 3.16E-2               & 7.90E-2 $\pm$ 3.53E-2            & 1.29E-2 $\pm$ 3.94E-3               & 4.40E-2 $\pm$ 7.02E-3            & \textbf{8.38E-3 $\pm$ 5.57E-3}                     & \textbf{2.96E-2 $\pm$ 1.56E-2}                     \\
                     & 1E-6 &                                        &                                        & \textbf{1.07E-2 $\pm$ 4.53E-3}      & \textbf{3.71E-2 $\pm$ 1.57E-2}   & 1.18E-2 $\pm$ 9.98E-3               & 3.35E-2 $\pm$ 2.69E-2            & \multicolumn{1}{l}{1.49E-2 $\pm$ 1.25E-2}          & \multicolumn{1}{l}{4.28E-2 $\pm$ 2.53E-2}          \\
                     & 1E-8 &                                        &                                        & 1.71E-2 $\pm$ 2.31E-2               & 3.73E-2 $\pm$ 4.07E-2            & 1.37E-2 $\pm$ 1.87E-3               & 4.89E-2 $\pm$ 2.80E-3            & \multicolumn{1}{l}{\textbf{6.85E-3 $\pm$ 4.68E-3}} & \multicolumn{1}{l}{\textbf{2.38E-2 $\pm$ 1.61E-2}} \\ \bottomrule
\end{tabular}%
\end{table}
\end{landscape}

\subsection{Chemical Reaction} \label{app:CR}

The results for the CR dataset in all experiments for all tolerance values are organized in Table \ref{tab:performanceCR}. The best performance for each experiment, MSE and tolerance, is highlighted in bold.

\begin{landscape}
\begin{table}[]
\tiny
\addtolength{\tabcolsep}{-5pt}
\centering
\caption{Performance at the seven experiments on the CR dataset.}\label{tab:performanceCR}
\begin{tabular}{@{}cccccccccc@{}}
\toprule
\multirow{2}{*}{}    &      & \multicolumn{2}{c}{\multirow{2}{*}{vanilla Neural ODE}}                           & \multicolumn{6}{c}{two-stage Neural ODE}                                                                                                                                                                                 \\ \cmidrule(l){5-10} 
                     &      & \multicolumn{2}{c}{}                     & \multicolumn{2}{c}{noStrategy}                                         & \multicolumn{2}{c}{updatePrevious}                                     & \multicolumn{2}{c}{updateBest}                                         \\ \midrule
Experiment           & Tol  & $\text{MSE}_{\text{avg}}$ $\pm$ std      & $\text{V}_{\text{avg}}$$\pm$ std       & $\text{MSE}_{\text{avg}}$ $\pm$ std & $\text{V}_{\text{avg}}$$\pm$ std & $\text{MSE}_{\text{avg}}$ $\pm$ std & $\text{V}_{\text{avg}}$$\pm$ std & $\text{MSE}_{\text{avg}}$ $\pm$ std & $\text{V}_{\text{avg}}$$\pm$ std \\ \midrule
\multirow{4}{*}{1.0} & 1E-2 & \multirow{4}{*}{7.23E-3 $\pm$ 1.56E-2}   & \multirow{4}{*}{7.89E-2 $\pm$ 1.56E-1} & 9.00E-04 $\pm$ 5.36E-04             & 2.21E-2 $\pm$ 1.58E-2            & 2.70E-3 $\pm$ 2.68E-3               & 4.80E-2 $\pm$ 2.36E-2            & \textbf{6.94E-4 $\pm$ 9.72E-4}      & \textbf{2.71E-2 $\pm$ 3.02E-2}   \\
                     & 1E-4 &                                          &                                        & 1.45E-03 $\pm$ 1.13E-03             & 2.60E-2 $\pm$ 1.25E-2            & \textbf{7.95E-4 $\pm$ 8.09E-4}      & \textbf{1.83E-2 $\pm$ 1.22E-2}   & 1.32E-3 $\pm$ 1.24E-3               & 3.33E-2 $\pm$ 2.20E-2            \\
                     & 1E-6 &                                          &                                        & \textbf{8.60E-05 $\pm$ 1.01E-04}    & \textbf{7.22E-3 $\pm$ 8.88E-3}   & 1.28E-3 $\pm$ 2.05E-3               & 1.22E-2 $\pm$ 1.36E-2            & 2.60E-3 $\pm$ 4.43E-3               & 1.75E-2 $\pm$ 2.52E-2            \\
                     & 1E-8 &                                          &                                        & 6.60E-05 $\pm$ 6.58E-05             & 4.12E-3 $\pm$ 3.77E-3            & 7.89E-04 $\pm$ 1.31E-03             & 1.49E-2 $\pm$ 2.29E-2            & \textbf{2.88E-5 $\pm$ 1.48E-6}      & \textbf{2.31E-3 $\pm$ 1.48E-3}   \\ \midrule
2.0                  & 1E-2 & \multirow{4}{*}{1.68E+01 $\pm$ 2.87E+01} & \multirow{4}{*}{1.28 $\pm$ 2.02}       & \textbf{2.33E-01 $\pm$ 3.01E-01}    & 2.68E-1 $\pm$ 2.99E-1            & 3.61 $\pm$ 6.23                     & 1.58 $\pm$ 2.49                  & 7.01E-1 $\pm$ 7.18E-1               & 4.97E-1 $\pm$ 4.02E-1            \\
                     & 1E-4 &                                          &                                        & 5.27E-01 $\pm$ 9.13E-01             & 2.77E-1 $\pm$ 4.69E-1            & 6.56E-1 $\pm$ 1.14                  & 6.35E-1 $\pm$ 1.09               & \textbf{5.75E-1 $\pm$ 6.41E-1}      & \textbf{1.95E-4 $\pm$ 3.29E-4}   \\
                     & 1E-6 &                                          &                                        & \textbf{1.56E-04 $\pm$ 1.02E-04}    & \textbf{8.16E-3 $\pm$ 8.07E-3}   & 7.38E-2 $\pm$ 9.35E-2               & 1.78E-1 $\pm$ 1.91E-1            & 4.76E-4 $\pm$ 7.16E-4               & 1.26E-2 $\pm$ 1.47E-2            \\
                     & 1E-8 &                                          &                                        & 9.52E-01 $\pm$ 1.65                 & 1.22E-1 $\pm$ 2.05E-1            & \textbf{7.75E-05 $\pm$ 2.99E-05}    & \textbf{1.02E-2 $\pm$ 5.83E-3}   & 6.80E-3 $\pm$ 1.18E-3               & 2.65E-2 $\pm$ 2.73E-2            \\ \midrule
2.1                  & 1E-2 & \multirow{4}{*}{1.14E+01 $\pm$ 1.73E+01} & \multirow{4}{*}{1.74 $\pm$ 2.38}       & 1.99 $\pm$ 2.48                     & 7.03E-1 $\pm$ 8.43E-1            & 4.87E-2 $\pm$ 5.75E-2               & 2.62E-1 $\pm$ 2.80E-1            & \textbf{3.03E-2 $\pm$ 5.19E-2}      & \textbf{1.53E-1 $\pm$ 1.75E-1}   \\
                     & 1E-4 &                                          &                                        & \textbf{5.73E-05 $\pm$ 1.20E-05}    & \textbf{6.97E-3 $\pm$ 2.73E-3}   & 3.11E-1 $\pm$ 3.37E-1               & 4.02E-1 $\pm$ 4.58E-1            & 1.95E-2 $\pm$ 3.29E-2               & 5.51E-2 $\pm$ 6.36E-2            \\
                     & 1E-6 &                                          &                                        & \textbf{5.26E-02 $\pm$ 9.10E-02}    & \textbf{2.57E-2 $\pm$ 1.90E-2}   & 2.70E-2 $\pm$ 4.65E-2               & 1.90E-1 $\pm$ 2.95E-1            & 9.21E-2 $\pm$ 1.57E-1               & 3.63E-2 $\pm$ 3.35E-2            \\
                     & 1E-8 &                                          &                                        & 1.84E-03 $\pm$ 3.10E-03             & 2.18E-2 $\pm$ 2.29E-2            & \textbf{4.05E-04 $\pm$ 6.04E-04}    & \textbf{6.41E-3 $\pm$ 4.09E-3}   & 1.12E-3 $\pm$ 1.74E-3               & 1.95E-2 $\pm$ 2.00E-2            \\ \midrule
2.2                  & 1E-2 & \multirow{4}{*}{9.49E-1 $\pm$ 1.67E+0}   & \multirow{4}{*}{2.57E-1 $\pm$ 3.06E-1} & 2.75E+00 $\pm$ 4.68E+00             & 7.99E-1 $\pm$ 1.11               & 2.31E+0 $\pm$ 3.96E+0               & 4.30E-1 $\pm$ 4.46E-1            & 4.44 $\pm$ 7.68                     & 1.07 $\pm$ 1.84                  \\
                     & 1E-4 &                                          &                                        & \textbf{5.42E-02 $\pm$ 7.26E-02}    & \textbf{1.75E-1 $\pm$ 2.30E-1}   & 3.88E-1 $\pm$ 6.66E-1               & 1.19E-1 $\pm$ 1.31E-1            & 1.07E1 $\pm$ 1.85E1                 & 5.90E-1 $\pm$ 9.94E-1            \\
                     & 1E-6 &                                          &                                        & \textbf{3.08E-02 $\pm$ 5.31E-02}    & \textbf{1.62E-1 $\pm$ 2.70E-1}   & 4.54E+0 $\pm$ 7.86E+0               & 6.72E-1 $\pm$ 1.14               & 1.41E1 $\pm$ 2.36E1                 & 1.30 $\pm$ 1.88                  \\
                     & 1E-8 &                                          &                                        & \textbf{2.38E-04 $\pm$ 1.75E-04}    & \textbf{1.15E-2 $\pm$ 5.38E-1}   & 4.79E-02 $\pm$ 8.29E-02             & 1.46E-2 $\pm$ 1.35E-2            & 1.72E $\pm$ 2.99                    & 1.15 $\pm$ 1.98                  \\ \midrule
3.0                  & 1E-2 & \multirow{4}{*}{2.15E-3 $\pm$ 1.64E-3}   & \multirow{4}{*}{6.00E-2 $\pm$ 5.69E-2} & 2.10E-03 $\pm$ 2.10E-03             & 8.43E-2 $\pm$ 7.92E-2            & 2.30E-3 $\pm$ 2.77E-3               & 6.47E-2 $\pm$ 6.17E-2            & \textbf{5.75E-4 $\pm$ 4.95E-4}      & \textbf{5.36E-2 $\pm$ 5.33E-2}   \\
                     & 1E-4 &                                          &                                        & 3.36E-04 $\pm$ 3.88E-04             & 2.82E-2 $\pm$ 2.15E-2            & 3.99E-4 $\pm$ 3.73E-4               & 2.46E-2 $\pm$ 3.17E-2            & \textbf{3.53E-4 $\pm$ 5.19E-4}      & \textbf{2.34E-2 $\pm$ 1.95E-2}   \\
                     & 1E-6 &                                          &                                        & \textbf{1.86E-04 $\pm$ 1.76E-04}    & \textbf{2.06E-2 $\pm$ 1.82E-2}   & 2.62E-4 $\pm$ 4.07E-4               & 2.07E-2 $\pm$ 2.97E-2            & 5.59E-4 $\pm$ 8.93E-4               & 3.96E-2 $\pm$ 6.19E-2            \\
                     & 1E-8 &                                          &                                        & \textbf{2.98E-05 $\pm$ 6.26E-06}    & 5.05E-3 $\pm$ 1.95E-3            & 7.97E-05 $\pm$ 8.91E-05             & 6.03E-3 $\pm$ 6.50E-3            & 1.50E-4 $\pm$ 1.70E-4               & 8.59E-1 $\pm$ 8.90E-1            \\ \midrule
3.1                  & 1E-2 & \multirow{4}{*}{5.70E-2 $\pm$ 1.24E-1}   & \multirow{4}{*}{3.49E-1 $\pm$ 6.77E-1} & 9.02E-03 $\pm$ 1.41E-02             & 8.92E-2 $\pm$ 1.10E-1            & 1.20E-3 $\pm$ 1.07E-3               & 8.79E-2 $\pm$ 7.41E-2            & \textbf{1.81E-3 $\pm$ 7.02E-4}      & \textbf{7.34E-2 $\pm$ 2.18E-2}   \\
                     & 1E-4 &                                          &                                        & 3.40E-04 $\pm$ 2.85E-04             & 1.92E-2 $\pm$ 1.21E-2            & 3.10E-4 $\pm$ 4.13E-4               & 1.41E-2 $\pm$ 1.48E-2            & \textbf{2.77E-4 $\pm$ 3.75E-4}      & \textbf{2.63E-2 $\pm$ 3.67E-2}   \\
                     & 1E-6 &                                          &                                        & 2.52E-04 $\pm$ 1.53E-04             & 1.89E-2 $\pm$ 1.07E-2            & 5.89E-4 $\pm$ 8.60E-4               & 3.16E-2 $\pm$ 3.48E-2            & \textbf{6.30E-5 $\pm$ 7.07E-7}      & 4.84E-3 $\pm$ 1.33E-3            \\
                     & 1E-8 &                                          &                                        & 6.20E-05 $\pm$ 2.00E-06             & 1.83E-3 $\pm$ 5.53E-4            & 6.35E-05 $\pm$ 1.12E-06             & 4.44E-3 $\pm$ 3.46E-3            & \textbf{6.07E-5 $\pm$ 4.49E-6}      & \textbf{3.75E-3 $\pm$ 2.62E-3}   \\ \midrule
3.2                  & 1E-2 & \multirow{4}{*}{1.31E-2 $\pm$ 1.71E-2}   & \multirow{4}{*}{2.86E-1 $\pm$ 3.78E-1} & 6.28E-04 $\pm$ 6.24E-04             & 4.73E-2 $\pm$ 3.01E-2            & \textbf{5.84E-4 $\pm$ 8.71E-4}      & \textbf{2.77E-2 $\pm$ 2.62E-2}   & 1.27E-2 $\pm$ 2.02E-2               & 1.53E-1 $\pm$ 1.90E-1            \\
                     & 1E-4 &                                          &                                        & 9.95E-04 $\pm$ 1.29E-03             & 3.66E-2 $\pm$ 2.22E-2            & \textbf{3.30E-5 $\pm$ 1.15E-5}      & \textbf{9.06E-3 $\pm$ 9.40E-3}   & 8.07E-5 $\pm$ 3.33E-5               & 1.14E-2 $\pm$ 4.53E-3            \\
                     & 1E-6 &                                          &                                        & 1.59E-02 $\pm$ 2.74E-02             & 1.74E-1 $\pm$ 2.90E-1            & 7.30E-4 $\pm$ 7.58E-4               & 3.73E-2 $\pm$ 3.62E-2            & \textbf{1.05E-4 $\pm$ 1.33E-4}      & \textbf{6.42E-3 $\pm$ 1.84E-3}   \\
                     & 1E-8 &                                          &                                        & 2.45E-05 $\pm$ 8.66E-07             & 5.74E-1 $\pm$ 3.10E-1            & 3.85E-04 $\pm$ 6.11E-04             & 1.71E-2 $\pm$ 1.91E-2            & \textbf{v2.42E-5 $\pm$ 1.48E-6}     & \textbf{4.34E-3 $\pm$ 7.74E-4}   \\ \bottomrule
\end{tabular}%
\end{table}
\end{landscape}

\end{document}